\def\eqref#1{equation~\ref{#1}}
\def\1{\bm{1}}
\DeclareMathAlphabet{\mathsfit}{\encodingdefault}{\sfdefault}{m}{sl}
\SetMathAlphabet{\mathsfit}{bold}{\encodingdefault}{\sfdefault}{bx}{n}
\newtheorem{definition}{Definition}[section]
\newtheorem{lemma}{Lemma}[section]
\newtheorem{theorem}{Theorem}[section]
\newcommand{\E}{\mathbb{E}}
\newcommand{\policy}{\pi}
\newcommand{\transitions}{T}
\newcommand{\behavior}{{\pi_\beta}}
\newcommand{\bellman}{\mathcal{B}}
\newcommand{\bx}{\mathbf{x}}
\newcommand{\bs}{\mathbf{s}}
\newcommand{\ba}{\mathbf{a}}
\newcommand{\methodname}{ReDS}
\newcommand{\rebuttal}[1]{#1}
\newcommand\pythonstyle{\lstset{
language=Python,
morekeywords={self, clip, exp, mse_loss, uniform_sample, concatenate, logsumexp},              % Add keywords here
keywordstyle=\color{deepblue},
emph={MyClass,__init__},          % Custom highlighting
emphstyle=\color{deepred},    % Custom highlighting style
stringstyle=\color{deepgreen},
frame=single,                         % Any extra options here
showstringspaces=false
}}
\newcommand\pythoninline[1]{{\pythonstyle\lstinline!#1!}}
\title{Offline RL With Realistic Datasets:\\ Heteroskedasticity and Support Constraints}
\definecolor{deepblue}{rgb}{0,0,0.5}
\definecolor{deepred}{rgb}{0.6,0,0}
\definecolor{deepgreen}{rgb}{0,0.5,0}
\begin{document}

\author{
  Anikait Singh$^{1, *}$, Aviral Kumar$^{1,2, *}$, Quan Vuong$^{2}$,
  \textbf{Yevgen Chebotar$^2$, Sergey Levine$^{1,2}$}\\
  ~~~~~~~~~~~~~~~~~~~~~~~~~~~~~~~~~~~~~~~~~$^1$EECS, UC Berkeley, $^2$Google Research~~~ ~~~\\
  \quad  ~~~\texttt{\{asap7772, aviralk\}@berkeley.edu, quanhovuong@google.com} 
}

\maketitle

\begin{abstract}
Offline reinforcement learning (RL) learns policies entirely from static datasets, thereby avoiding the challenges associated with online data collection. Practical applications of offline RL will inevitably require learning from datasets where the variability of demonstrated behaviors changes non-uniformly across the state space. For example, at a red light, nearly all human drivers behave similarly by stopping, but when merging onto a highway, some drivers merge quickly, efficiently, and safely, while many hesitate or merge dangerously. Both theoretically and empirically, we show that typical offline RL methods, which are based on distribution constraints fail to learn from data with such non-uniform variability, due to the requirement to stay close to the behavior policy \textbf{to the same extent} across the state space. Ideally, the learned policy should be free to choose \textbf{per state}
%%AK: let's avoid the textbf in the abstract and the intro
how closely to follow the behavior policy to maximize long-term return, as long as the learned policy stays within the support of the behavior policy. To instantiate this principle, we reweight the data distribution in conservative Q-learning (CQL) to obtain an approximate support constraint formulation. The reweighted distribution is a mixture of the current policy and an additional policy trained to mine poor actions that are likely under the behavior policy. Our method, CQL (ReDS), is simple, theoretically motivated, and improves performance across a wide range of offline RL problems in Atari games, navigation, and pixel-based manipulation. 
\end{abstract}

\section{Introduction}
\vspace{-0.2cm}

Recent advances in offline RL~\citep{levine2020offline,lange2012batch} hint at exciting possibilities in learning high-performing policies, entirely from offline datasets, without requiring dangerous~\citep{garcia2015comprehensive}
or expensive~\citep{kalashnikov2018qtopt} active interaction with the environment. Analogously to the importance of data diversity in supervised learning~\citep{imagenet}, the practical benefits of offline RL depend heavily on the \emph{coverage} of behavior in the offline datasets~\citep{kumar2022should}. Intuitively, the dataset must illustrate the consequences of a diverse range of behaviors, so that an offline RL method can determine what behaviors lead to high returns, ideally returns that are significantly higher than the best single behavior in the dataset.

One easy option to attain this kind of coverage is to combine many realistic sources of data, but doing so can lead to the variety of demonstrated behaviors varying in highly non-uniform ways across the state space, i.e. the dataset is \emph{heteroskedastic}. For example, a driving dataset might show very high variability in driving habits, with some drivers being timid and some more aggressive, but remain remarkably consistent in ``critical'' states (e.g., human drivers are extremely unlikely to swerve in an empty road or drive off a bridge). A good offline RL algorithm should combine the \emph{best} parts of each behavior in the dataset -- e.g., in the above example, the algorithm should produce a policy that is \emph{as good as the best human in each situation}, which would be better than \emph{any} human driver overall. At the same time, the learned policy should not attempt to extrapolate to novel actions in subset of the state space where the distribution of demonstrated behaviors is narrow (e.g., the algorithm should not attempt to drive off a bridge). How effectively can current offline RL methods selectively choose on a \textit{per-state} basis how closely to stick to the behavior policy?

Most existing methods~\citep{kumar19bear,kumar2020conservative,kostrikov2021offline,kostrikov2021offlineb,wu2019behavior,fujimoto2018off,jaques2019way} constraint the learned policy to stay close to the behavior policy, so-called ``distributional constraints". Using a combination of empirical and theoretical evidence, we first show that distributional constraints are insufficient when the heteroskedasticity of the demonstrated behaviors varies non-uniformly across states, because the strength of the constraint is state-agnostic, and may be overly conservative at some states even when it is not conservative enough at other states. We also devise a measure of heteroskedasticity that enables us to determine if certain offline datasets would be challenging for distributional constraints.

Our second contribution is a simple and theoretically-motivated observation: distribution constraints against a \emph{reweighted} version of the behavior policy give rise to support constraints. That is, the return-maximization optimization process can freely choose state-by-state how much the learned policy should stay close to the behavior policy, so long as the learned policy remains within the data support. We show that it is particularly convenient to instantiate this insight on top of conservative Q-learning (CQL) ~\citep{kumar2020conservative}, a recent offline RL method. The new method, CQL (ReDS), only changes minimally the form of regularization, design decisions employed by CQL and inherits existing hyper-parameter values. CQL (ReDS) attains better performance than recent distribution constraints methods on a variety of tasks with more heteroskedastic distributions.

\begin{figure}
\vspace{-0.6cm}
\centering
\includegraphics[width=0.85\linewidth]{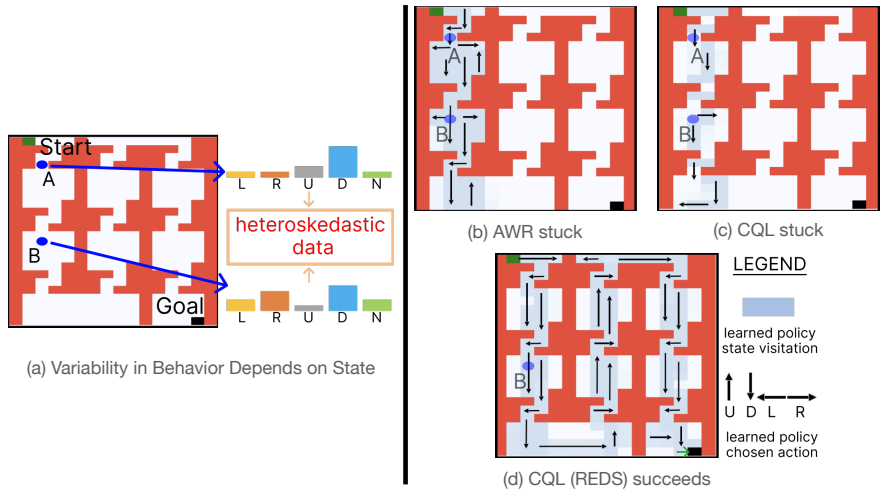}
\vspace{-1.5cm}
\caption{\footnotesize{\textbf{Failure mode of distributional constraints.} In this navigation task, an offline RL algorithm must find a path from the start state to the goal state as indicated in (a). The offline dataset provided exhibits non-uniform coverage at different state, e.g., in the state marked as ``B'' located in a wide room has more uniform action distribution, whereas the states in the narrow hallways exhibit a more narrow action distribution. This is akin to how the behavior of human drivers varies in certain locations (``B''), but is very similar in other situations (``A''). To perform well, an algorithm must stay close to the data in the hallways (``A''), but deviate significantly from the data in the rooms (``B''), where the data supports many different behaviors (most are not good).
AWR
and CQL
become stuck because they stay too close to the bad behavior policy in the rooms, e.g. the left and right arrows near State B in Fig (b) and (c). Our method, CQL (ReDS), learns to ignore the bad behavior policy action in state B and prioritizes the good action, indicated by the downward arrow near State B in (d).
}}
\vspace{-0.5cm}
\label{fig:didactic_navigation} 
\end{figure}

\vspace{-0.4cm}
\section{Preliminaries on Distributional Constraints For offline RL}
\label{sec:prelim}
\vspace{-0.25cm}

The goal in offline RL is find the optimal policy in a Markov decision process (MDP) specified by the tuple $\mathcal{M} = (\mathcal{S}, \mathcal{A}, \transitions, r, \mu_0, \gamma)$.
$\mathcal{S}, \mathcal{A}$ denote the state and action spaces. 
$\transitions(\bs' | \bs, \ba)$ and $r(\bs,\ba)$ 
% \in [-R_{\max}, R_{\max}]$ 
represent the dynamics and reward function. 
$\mu_0(s)$ denotes the initial state distribution. 
$\gamma \in (0,1)$ denotes the discount factor. The goal is to learn a policy that maximizes the return, denoted by 
$J(\policy) := \frac{1}{1-\gamma}\E_{(\bs_t, \ba_t) \sim \pi}[\sum_{t} \gamma^t r(\bs_t, \ba_t)]$.
We must find the best possible policy while only having access to an offline dataset of transitions collected using a behavior policy $\behavior$, $\mathcal{D} = \{(\bs, \ba, r, \bs')\}$.

\textbf{Offline RL via distributional constraints}. Most offline RL algorithms regularize the learned policy $\pi$ from querying the target Q-function on unseen actions~\citep{fujimoto2018off,kumar2019stabilizing}, either implicitly or explicitly. 
For our theoretical analysis, we will abstract the behavior of distributional constraint offline RL algorithms into a generic formulation following \citet{kumar2020conservative}. As shown in Equation~\ref{eqn:generic_distributional_constraint}, we consider the problem where we must  maximize the return of the learned policy $\pi$ (in the empirical MDP) $\widehat{J}(\pi)$, while also penalizing the divergence from $\behavior$: 
\vspace{-0.16cm}
\begin{align}
\label{eqn:generic_distributional_constraint}
    % \max_{\pi}~~ \widehat{J}(\pi) - \alpha \mathbb{E}_{\bs \sim \widehat{d}^\pi}[D(\pi, \pi_\beta)(\bs)]~~~~~ \text{(generic distributional constraint)}
    \max_{\pi}~~ \mathbb{E}_{\bs \sim \widehat{d}^\pi} \left[ \widehat{J}(\pi) - \alpha D(\pi, \pi_\beta)(\bs) \right]~~~~~ \text{(generic distributional constraint)}
\end{align}
\vspace{-0.1cm}
where $D$ denotes a divergence between the learned policy $\pi$ and the behavior policy $\behavior$ at state $\bs$. 

\textbf{Conservative Q-learning.}~\citep{kumar2020conservative} enforces the distributional constraint on the policy \textit{implicitly}. To see why this is the case, consider the CQL objective, which consists of two terms:
%% A.S excess Margin after Caption Need to Fix
\vspace{-0.15cm}
\begin{align}
\label{eqn:cql_training}
\!\!\!\!\!\!\!\!\!\!\!\!\small{\min_{\theta}~ \textcolor{red}{\alpha \underbrace{\left(\mathbb{E}_{\bs \sim \mathcal{D}, \ba \sim \pi} \left[Q_\theta(\bs,\ba)\right] - \mathbb{E}_{\bs, \ba \sim \mathcal{D}}\left[Q_\theta(\bs,\ba)\right]\right)}_{\mathcal{R}(\theta)}}+\frac{1}{2} \textcolor{blue}{\mathbb{E}_{\bs, \ba, \bs' \sim \mathcal{D}}\left[\left(Q_\theta(\bs, \ba) - \bellman^\policy\bar{Q}(\bs, \ba)\right)^2 \right]}},
\end{align}
\vspace{-0.6cm}

where $\bellman^\policy \bar{Q} (\bs, \ba)$ is the Bellman backup operator applied to a delayed target Q-network, $\bar{Q}$: $\bellman^\policy \bar{Q}(\bs, \ba) := r(\bs, \ba) + \gamma \E_{\ba' \sim \pi(\ba'|\bs')}[\bar{Q}(\bs', \ba')]$. The second term (in blue) is the standard TD error~\citep{lillicrap2015continuous,fujimoto2018addressing,haarnoja2018sacapps}. The first term  $\mathcal{R}(\theta)$ (in red) attempts to prevent overestimation in the Q-values for out-of-distribution (OOD) actions by minimizing the Q-values under a distribution $\mu(\ba|\bs)$, which is automatically chosen to pick actions with high Q-values $Q_\theta(\bs, \ba)$, and counterbalances by maximizing the Q-values of the actions in the dataset. \citet{kumar2020conservative} show that Equation~\ref{eqn:cql_training} gives rise to a pessimistic Q-function that modifies the optimal Q function by the ratios of densities, $\pi(\ba|\bs)/ \pi_\beta(\ba|\bs)$ at a given state-action pair $(\bs, \ba)$. More formally, the Q-function obtained after one iteration of TD-learning is given by:
\vspace{-0.1cm}
\begin{align}
\label{eqn:cql_q_function}
    Q_{\theta}(\bs, \ba) := \bellman^\policy \bar{Q}(\bs, \ba) - \alpha \left[ \frac{\pi(\ba|\bs)}{\behavior(\ba|\bs)} -1 \right].
\end{align}
\vspace{-0.45cm}

The Q function is unchanged only if the density of the learned policy $\pi$ matches that of the behavior policy $\behavior$. Otherwise, for state-action pairs where $\pi(\ba|\bs) < \behavior(\ba|\bs)$, Eq.~\ref{eqn:cql_q_function} increases their Q values and encourages the policy $\pi$ to assign more mass to the action. Vice versa, if  $\pi(\ba|\bs) > \behavior(\ba|\bs)$, Eq.~\ref{eqn:cql_q_function} encourages the policy $\pi$ to assign smaller density to the action $\ba$.

In Eq.~\ref{eqn:cql_q_function}, $\alpha$ is a constant for every state, and hence the value function learned by CQL is altered by the ratio of action probabilities to the same extent at all possible state-action pairs. As we will discuss in the next section, this can be
sub-optimal when the learnt policy should stay close to the behavior policy in some states, but not others. We elaborate on this intuition in the next section.

\vspace{-0.3cm}
\section{Why Distribution Constraints Fail with Heteroskedastic Data}
\label{sec:problem}
\vspace{-0.3cm}

In statistics, heteroskedasticity is typically used to refer to conditions when the standard deviation in a given random variable varies non-uniformly over time (see for example, \citet{cao2020heteroskedastic}). We call a offline dataset heteroskedastic when the variability of the behavior differs drastically in different regions of the state space: for instance, if for certain regions of the state space, the observed behaviors in the dataset assign the most probability mass to a few actions, but in other regions, the observed behaviors are more uniform. 
Realistic sources of offline data are often heteroskedastic. This is because datasets are typically generated by multiple policies, each with its own characteristics under a variety of different conditions. As an example, driving datasets are typically collected from multiple humans~\citep{ettinger2021large}, and many robotic manipulation datasets are collected by multiple scripted policies~\citep{mandlekar2021what} or teleoperators~\citep{ebert2021bridge} that aim to solve a variety of tasks, resulting in systematic variability in different regions of the state space. 

\vspace{-0.2cm}
\subsection{A Didactic Example}
\label{sec:didactic}
\vspace{-0.2cm}

To understand why distributional constraints are insufficient when learning from heteroskedastic data, we present a didactic example. Motivated by the driving scenario in the introduction, we consider a maze navigation task shown in Fig.~\ref{fig:didactic_navigation}. The task is to navigate from the position labeled as ``Start'' to the position labeled as ``Goal'' using five actions at every possible state (L: $\leftarrow$, R: $\rightarrow$, U: $\uparrow$, D: $\downarrow$, No: No Op), while making sure that the executed actions do not hit the walls of the grid.

\textbf{Dataset construction.} In order to collect a heteroskedastic dataset, we consider a mixture of several behavior policies that attain a uniform occupancy
over different states in the maze.
However, the dataset action distributions differ significantly in different states. The induced action distribution is heavily biased to move towards the goal in the narrow hallways (e.g., the behavior policy moves upwards at state {A})). In contrast, the action distribution is quite diverse in the wider rooms. In these rooms, the behavior policy often selects actions that do not immediately move the agent towards the goal (e.g., the behavior policy at state {B}), because doing so does not generally hit the walls as the rooms are wider, and hence the agent is not penalized. On the other hand, the agent must take utmost precaution to not hit the walls in the narrow hallways. More details are in Appendix~\ref{app:didactic_example}.

\textbf{Representative distributional constraints algorithms} such as AWR~\citep{peng2019awr,nair2020accelerating} and CQL~\citep{kumar2020conservative} fail to perform the task, as shown in Figure~\ref{fig:didactic_navigation}. To ensure fair comparison, we tune each method to its best evaluation performance using online rollouts. The visualization in Figure~\ref{fig:didactic_navigation} demonstrates that these two algorithms fail to learn reasonable policies because the learned policies match the random behavior of the dataset actions too closely in the wider rooms, and therefore are unable to make progress towards the Goal position. This is a direct consequence of enforcing too strong of a constraint on the learned policy to stay close to the behaviors in the dataset. Therefore, we also evaluated the performance of CQL and AWR in this example, with lower amounts of conservatism. As shown in Appendix~\ref{app:didactic_example}, utilizing a lower amount of conservatism suffers from the opposite failure mode: it is unable to prevent the policies from hitting the walls in the narrow hallways. To summarize, this means that choosing to be conservative prevents the algorithm from making progress in the regions where the behavior in the dataset is more diverse, whereas choosing to not be conservative enough hurts performance in regions where the behaviors in the dataset agree with each other. In contrast, the method we propose in this paper to tackle this precise challenge of heteroskedasticity, indicated as ``CQL (ReDS)'', effectively traverses the maze and attains a success rate of $80\%$.

\vspace{-0.25cm}
\subsection{Characterizing the Challenges with Distributional Constraints}
\label{sec:theory}
\vspace{-0.2cm}

Having seen that distribution constraints can fail in certain scenarios, we now formally characterize when offline RL datasets is heteroskedastic, and why distribution constraints may be ineffective in such scenarios. Similar to how standard analyses utilize concentrability coefficient \citep{rashidinejad2021bridging}, which upper bounds the ratio of state-action visitation under a policy $d^\pi(\bs, \ba)$ and the dataset distribution $\mu$, i.e., $\max_{\bs, \ba} d^\pi(\bs, \ba) / \mu(\bs, \ba) \leq C^\pi$, we introduce a new metric called \emph{differential concentrability}, which measures dataset heteroskedasticity (i.e., the variability in the dataset behavior across different states).
%%SL.5.16: Could it make sense to (very) briefly summarize concentrability coefficients in the background section? It's quite likely that you'll get a reader who is unfamiliar with this concept and will not really understand it from this brief description.
% which typically corresponds to an upper bound on density ratios, 
% In contrast, we introduce \emph{differential concentrability}, a novel metric which measures the heteroskedasticity of dataset.

% \vspace{-0.1cm}
\begin{definition}[Differential concentrability.]
Given a divergence $D$ over the action space, the differential concentrability of a given policy $\pi$ with respect to the behavioral policy $\behavior$ is given by:
\vspace{-0.55cm}
\begin{equation}
\label{eqn:diff_concentrability}
C^\pi_\text{diff} = \underset{\bs_1, \bs_2 \sim d^\pi}{\E} \left[ \left( \sqrt{\frac{
% D(\pi(\cdot|\bs_1) || \behavior(\cdot|\bs_1))
D(\pi, \behavior)(\bs_1)
}{\mu(\bs_1)}
} - \sqrt{\frac{
% D(\pi(\cdot|\bs_2) || \behavior(\cdot|\bs_2))
D(\pi, \behavior)(\bs_2)
}{\mu(\bs_2)}} \right)^2  \right].
\end{equation}
\end{definition}
\vspace{-0.25cm}

Eq.~\ref{eqn:diff_concentrability} measures the variation in the divergence between a given policy $\pi(\ba|\bs)$ and the behavior policy $\pi_\beta(\ba|\bs)$ weighted inversely by the density of these states in the offline dataset (i.e., $\mu(\bs)$ in the denominator). 
For simplicity, let us revisit the navigation example from Section~\ref{sec:didactic} and first consider the simpler scenario where $\mu(\bs) = \text{Unif}(\mathcal{S})$. For any given policy $\pi$, if there are states where $\pi$ chooses actions that lie on the fringe of the data distribution (e.g., in the wider rooms), as well as states where the policy $\pi$ chooses actions at the mode of the data distribution (e.g., as in the narrow passages), then $C^\pi_\text{diff}$ would be large for the policy $\pi$ learned using distributional constraints. 
Crucially, $C^\pi_\text{diff}$ would be small even if the learned policy $\pi$ deviates significantly from the behavior policy $\pi_\beta$, such that $D(\pi, \behavior)(\bs)$ is large, but $|D(\pi, \behavior)(\bs_1) - D(\pi, \behavior)(\bs_2)|$ is small, indicating the dataset is not heteroskedastic.
We will show in Section~\ref{sec:experiments} that arbitrary policy checkpoints $\pi$ learned by offline RL algorithms generally attain a low value of $C^\pi_\text{diff}$ on offline datasets from non-heteroskedastic sources, such as those covered in the D4RL~\citep{fu2020d4rl} benchmark.

%%SL.5.16: I'm actually a bit confused by this intuition. Assuming $\mu(\bs)$ denotes the visitation frequency in the data (it doesn't appear to be defined above?), wouldn't this mean that for any non-zero divergence, this value blows up if you have some states with tiny density and other states with big density? Also, the setting of $\pi$ is not a property of the dataset. Perhaps what you are trying to say is that how far we can deviate from the behavior policy is not dependent on the state, but that again doesn't square away with the definition above, which makes it necessary to have zero deviation in states where $\mu(s)$ is close to zero.
We now use the definition of differential concentrability above to bound both the improvement and deprovement of $\pi$ w.r.t. $\behavior$ for distribution constraint algorithms using the framework of safe policy improvement~\citep{laroche2017safe,kumar2020conservative}. We show that when $C^\pi_\text{diff}$ is large, then distributional constraints (Eq.~\ref{eqn:generic_distributional_constraint})
%%AK: define in prelims
may not improve significantly over $\behavior$, even for the best value for the weight $\alpha$ (proof in Appendix~\ref{app:proofs}):

{
\begin{theorem}[Informal; Limited policy improvement via distributional constraints.] 
\label{thm:distributional_constraints}
W.h.p. $\geq 1 - \delta$, for any prescribed level of safety $\zeta$, the maximum possible policy improvement over choices of $\alpha$, $\max_\alpha~ \left[J(\pi_\alpha) - J(\pi_\beta) \right] \leq \zeta^+$, where $\zeta^+$ is given by:
\vspace{-0.2cm}
\begin{align}
\zeta^+ := \max_{\alpha} ~~~ \frac{h^*\left(\alpha\right)}{(1 - \gamma)^2}
~~~~~\text{s.t.}~~ \frac{c_1 \sqrt{\log \frac{|\mathcal{S}||\mathcal{A}|}{\delta}}}{(1 - \gamma)^2} \frac{\sqrt{C^{\pi_\alpha}_\text{diff}}}{|\mathcal{D}|} -  \frac{\alpha}{1 - \gamma} \mathbb{E}_{\bs \sim \widehat{d}^{\pi_\alpha}}[D(\pi_\alpha, \behavior)(\bs)] \leq \zeta, 
    % \nonumber
    \label{eq:safety_constraint}
\end{align}
where $h^*$ is a monotonically decreasing function of $\alpha$, and $h(0) = \mathcal{O}(1)$.
\end{theorem}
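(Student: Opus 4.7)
The plan is to adapt the safe-policy-improvement framework of \citet{laroche2017safe,kumar2020conservative} to a setting where the finite-sample error is measured relative to $\behavior$ as a baseline, so that differential concentrability (rather than the standard concentrability coefficient) governs the sampling-error term. I would start from the decomposition
\[
J(\pi_\alpha) - J(\behavior) \;=\; \underbrace{\bigl[J(\pi_\alpha) - \widehat{J}(\pi_\alpha)\bigr] + \bigl[\widehat{J}(\behavior) - J(\behavior)\bigr]}_{\text{sampling error}} \;+\; \underbrace{\bigl[\widehat{J}(\pi_\alpha) - \widehat{J}(\behavior)\bigr]}_{\text{empirical improvement}},
\]
bounding the first group by concentration arguments tied to $C^{\pi_\alpha}_\text{diff}$ and the second by a monotone function $h^*(\alpha)$ of the regularization strength.

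For the sampling-error group, I would combine the simulation lemma with a per-state Hoeffding- or Bernstein-type bound on the empirical Bellman backups, yielding a per-state error scaling like $\sqrt{\log(|\gS||\gA|/\delta) / (\mu(\bs)\,|\gD|)}$ multiplied by a factor that vanishes when $\pi_\alpha = \behavior$. Because the sampling noise is shared between evaluating $\pi_\alpha$ and $\behavior$, what survives is the \emph{difference} of per-state contributions, which by a Pinsker-style inequality scales as $\sqrt{D(\pi_\alpha,\behavior)(\bs)/\mu(\bs)}$ at each $\bs$. Aggregating these differences across states via a variance/Cauchy--Schwarz argument produces exactly the squared-difference-of-square-roots structure appearing in Eq.~\ref{eqn:diff_concentrability}, giving
\[
\bigl|(\mathrm{I}) + (\mathrm{III})\bigr| \;\leq\; \frac{c_1 \sqrt{\log(|\gS||\gA|/\delta)}}{(1-\gamma)^2}\,\sqrt{\frac{C^{\pi_\alpha}_\text{diff}}{|\gD|}}.
\]

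For the empirical improvement, I would invoke optimality of $\pi_\alpha$ in the regularized objective, which yields $\widehat{J}(\pi_\alpha) - \widehat{J}(\behavior) \ge \alpha\,\E_{\bs \sim \widehat{d}^{\pi_\alpha}}[D(\pi_\alpha,\behavior)(\bs)]$, and define $h^*(\alpha)$ as the largest attainable value of $\widehat{J}(\pi) - \widehat{J}(\behavior)$ over policies feasible at regularization level $\alpha$. At $\alpha = 0$ the regularizer is inactive so $h^*(0) = \gO(1)$ (bounded by the reward range); as $\alpha$ grows, the feasible set shrinks and forces $\pi_\alpha$ closer to $\behavior$, so $h^*$ is monotonically decreasing. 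Combining the two pieces gives $J(\pi_\alpha) - J(\behavior) \le h^*(\alpha)/(1-\gamma)^2$ whenever the safety constraint in Eq.~\ref{eq:safety_constraint} holds (sampling-error term minus regularization benefit $\le \zeta$), and maximizing over $\alpha$ subject to this constraint produces $\zeta^+$.

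The main obstacle I anticipate is deriving the precise squared-difference form that defines $C^\pi_\text{diff}$: a naive Cauchy--Schwarz on the per-state errors yields only $\sqrt{\E_{\bs \sim d^\pi}[D(\pi,\behavior)(\bs)/\mu(\bs)]/|\gD|}$, which dominates $\sqrt{C^\pi_\text{diff}/|\gD|}$ and would be too loose to match Eq.~\ref{eq:safety_constraint}. Obtaining the tight variance-style bound requires arguing that the per-state errors for $\pi_\alpha$ and $\behavior$ share randomness and largely cancel when the ratios $D(\pi,\behavior)(\bs)/\mu(\bs)$ are nearly constant across states---which is precisely the homogeneity condition encoded by differential concentrability being small.
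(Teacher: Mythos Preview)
Your proposal has a genuine gap in how $C^\pi_\text{diff}$ enters the argument, and you have in fact already flagged it as an ``obstacle'' in your last paragraph---but it is not a technical difficulty to be overcome, it is a sign that the inequality is running the wrong way. You are trying to \emph{upper} bound the sampling-error term by $\sqrt{C^\pi_\text{diff}/|\gD|}$ via a shared-randomness cancellation argument. As you correctly observe, the natural concentration bound gives $\sqrt{\E_{\bs}[D(\pi,\behavior)(\bs)/\mu(\bs)]/|\gD|}$, which \emph{dominates} $\sqrt{C^\pi_\text{diff}/|\gD|}$; in general there is no way to tighten this to the variance form, because the sampling error genuinely scales with the mean of $D/\mu$, not its spread.

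The paper's proof does the opposite. It starts from the existing safe-policy-improvement guarantee of \citet{kumar2020conservative}, whose sampling-error term already has the form $\E_{\bs\sim\widehat d^{\pi}}\bigl[\sqrt{D(\pi,\behavior)(\bs)/\mu(\bs)}\bigr]$, and then \emph{lower} bounds this quantity by $\sqrt{C^\pi_\text{diff}}$ via an elementary algebraic inequality: for nonnegative $x_1,\dots,x_N$,
\[
\Bigl(\textstyle\sum_i \sqrt{x_i}\Bigr)^2 \;\ge\; \sum_i x_i \;\ge\; \frac{1}{N-1}\sum_{i<j}\bigl(\sqrt{x_i}-\sqrt{x_j}\bigr)^2,
\]
applied with $x_i = D(\pi,\behavior)(\bs_i)/\mu(\bs_i)$. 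So $C^\pi_\text{diff}$ appears not as an upper bound on sampling error but as a \emph{lower} bound on the safety penalty: if $C^\pi_\text{diff}$ is large, the SPI $\zeta$ is forced to be large unless $\alpha$ is large enough to compensate. This is what produces the constraint in Eq.~\ref{eq:safety_constraint}. No shared-randomness or variance-reduction argument is needed.

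For the improvement bound, the paper's route also differs from yours. Rather than defining $h^*(\alpha)$ as the maximal empirical improvement $\widehat J(\pi)-\widehat J(\behavior)$, the paper proves a performance-difference lemma showing $J(\pi_\alpha)-J(\behavior)\lesssim \frac{R_{\max}}{(1-\gamma)^2}\,\E_{\bs\sim d^{\pi_\alpha}}[D(\pi_\alpha,\behavior)(\bs)]$ directly in terms of the $D_{\mathrm{CQL}}$ divergence, and then takes $h^*(\alpha)$ to be (essentially) $\E[D(\pi_\alpha,\behavior)]$. Monotonicity of $h^*$ in $\alpha$ follows from a general fact about penalized optimization: if $\bx^*_\alpha=\arg\max_\bx f(\bx)+\alpha g(\bx)$, then $g(\bx^*_\alpha)$ is monotone in $\alpha$, applied with $g=-\E[D(\pi,\behavior)]$. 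Your ``feasible set shrinks'' intuition is in the right spirit but the paper's lemma makes it precise without needing to reason about a feasible set.
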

\vspace{-0.2cm}
}
{
Theorem~\ref{thm:distributional_constraints} quantifies the fundamental tradeoff with distribution constraints: to satisfy a given $\zeta$-safety constraint in problems with larger $C^\pi_\text{diff}$, we would need a larger $\alpha$. Since the maximum policy improvement $\zeta^+$ is upper bounded by $h^*(\alpha)$, the policy may not necessarily improve over the behavior policy if $\alpha$ is large. On the flip side, if we choose to fix the value of $\alpha$ to be small in hopes to attain more improvement in problems where $C^\pi_\text{diff}$ is high, we would end up compromising on the safety guarantee as $\zeta$ needs to be large for a small $\alpha$ and large $C^\pi_\text{diff}$. Thus, in this case, the policy may not improve over the behavior policy reliably. 
}

{
Note that larger $C^\pi_\text{diff}$ need not imply large $\mathbb{E}_{\bs \sim \widehat{d}^\pi}[D(\pi, \pi_\beta)(\bs)]$ because the latter does not involve $\mu(\bs)$. $C^\pi_\text{diff}$ also measures the dispersion of $D(\pi, \pi_\beta)(\bs)$, while the latter performs a mean over states. In addition, Theorem~\ref{thm:distributional_constraints} characterizes the \emph{maximum possible} improvement with an \emph{oracle} selection of $\alpha$, though is not feasible in practice. Thus, when $C^\pi_\text{diff}$ is large, distribution constraint algorithms could either not safely improve over $\behavior$ or would attain only a limited, bounded improvement.
}

\textbf{Remark 1 [Heteroskedasticity vs. distributional shift].} Note that larger $C^\pi_\text{diff}$ need not necessarily imply large $\mathbb{E}_{\bs \sim \widehat{d}^\pi}[D(\pi, \pi_\beta)(\bs)]$ because the latter does not involve $\mu(\bs)$. $C^\pi_\text{diff}$ also measures the dispersion of $D(\pi, \pi_\beta)(\bs)$, while the latter performs a mean over states.

\textbf{Remark 2 [Oracle choice of $\alpha$].} Note that Theorem~\ref{thm:distributional_constraints} characterizes the \emph{maximum possible} improvement with an \emph{oracle} selection of $\alpha$, though is not feasible in practice. Thus, when $C^\pi_\text{diff}$ is large, distribution constraint algorithms could either not safely improve over $\behavior$ or would attain only a limited, bounded improvement, for any given $\alpha$.

\textbf{Remark 3 [Comparison with prior results].} Complementing prior works~\citep{kumar19bear,levine2020offline} that discuss failure modes of distributional constraints with high-entropy behavior policies, Theorem~\ref{thm:distributional_constraints} quantifies when this would be the case: this happens when $C^\pi_\text{diff}$ is large. Our analysis also indicates that when $C^\pi_\text{diff}$ is small, we might be able to observe reasonable performance with distributional constraints and this potentially serves as the justification for the superior empirical results of distributional constraints, compared to support constraints on the D4RL~\citep{fu2020d4rl} tasks, which as we show are not heteroskedastic.

\vspace{-0.3cm}
\section{Support Constraints As Reweighted Distribution Constraints}
\label{sec:method}
\vspace{-0.2cm}

Thus far, we have seen that distribution constraints can be ineffective with heteroskedastic datasets. If we can impose the distribution constraints such that the constraint strength can be modulated per state, then in principle, we can alleviate the issue raised in Theorem~\ref{thm:distributional_constraints}. 

\textbf{Our key insight} is that by reweighting the action distribution in the dataset before utilizing a distribution constraint, we can obtain a method that enforces a per-state distribution constraint, which corresponds to an approximate \emph{support} constraint. This will push down the values of actions that are outside the behavior policy support, but otherwise not impose a severe penalty for in-support actions, thus enabling the policy to deviate from the behavior policy by different amounts at different states.
Consider the generic distribution constraint shown in Eq.~\ref{eqn:generic_distributional_constraint}. Rather than having a distribution constraint between $\pi$ and $\behavior$, if we can impose a constraint between $\pi$ and 
a \emph{reweighted} version of $\behavior$, where the reweighting is state-dependent, then we can obtain an approximate support constraint.  Let the reweighted distribution be $\pi^{re}$. 
Intuitively, if $\pi(\cdot|\bs)$ is within the support of the $\pi_\beta(\cdot|\bs)$, then one can find a reweighting $\pi^{re}(\cdot|\bs)$ such that $D(\pi, \pi^{re})(\bs) = 0$, whereas 
if $\pi(\cdot|\bs)$ is not within the support of $\pi^{re}(\cdot|\bs)$, 
then $D(\pi, \pi^{re})(\bs)$ still penalizes $\pi$ when $\pi$ chooses out-of-support actions, 
since no reweighting $\pi^{re}$ can put non-zero probability on out-of-support actions. This allows us to handle the failure mode from Section~\ref{sec:problem}: at states with wide behavior policy, even with a large $\alpha$, $\pi$ is not anymore constrained to the behavior policy distribution, whereas at other ``critical'' states, where the behavior policy is narrow, a large enough $\alpha$ will constrain $\pi(\cdot|\bs)$ to stay close to $\behavior(\cdot|\bs)$. We call this approach \underline{\textbf{Re}}weighting \underline{\textbf{D}}istribution constraints to \underline{\textbf{S}}upport (\methodname). 

\vspace{-0.2cm}
\subsection{Instantiating the Principle Behind ReDS}
\label{sec:tractable_reds}
\vspace{-0.2cm}

How can we derive a reweighting of the behavior policy to instantiate an approximate support constraint? One option is reweight $\behavior$ into $\pi^{re}$, and enforce the constraints between $\pi$ and $\pi^{re}$ explicitly, by minimizing $D(\pi, \pi^{re})$, or implicitly, by pushing up the Q-values under $\pi^{re}$ instead of $\behavior$, analogously to CQL. Both choices are problematic, because the error in estimating the behavior policy can propagate, leading to poor downstream RL performance, as demonstrated in prior work~\citep{nair2020accelerating,ghasemipour2021emaq}. For the case of CQL, this issue might be especially severe if we push up the Q-values under $\pi^{re}$, because then the errors in estimating the behavior policy might lead to severe over-estimation. 

\begin{figure}[t]
\vspace{-0.5cm}
\centering
\includegraphics[width=0.8\linewidth]{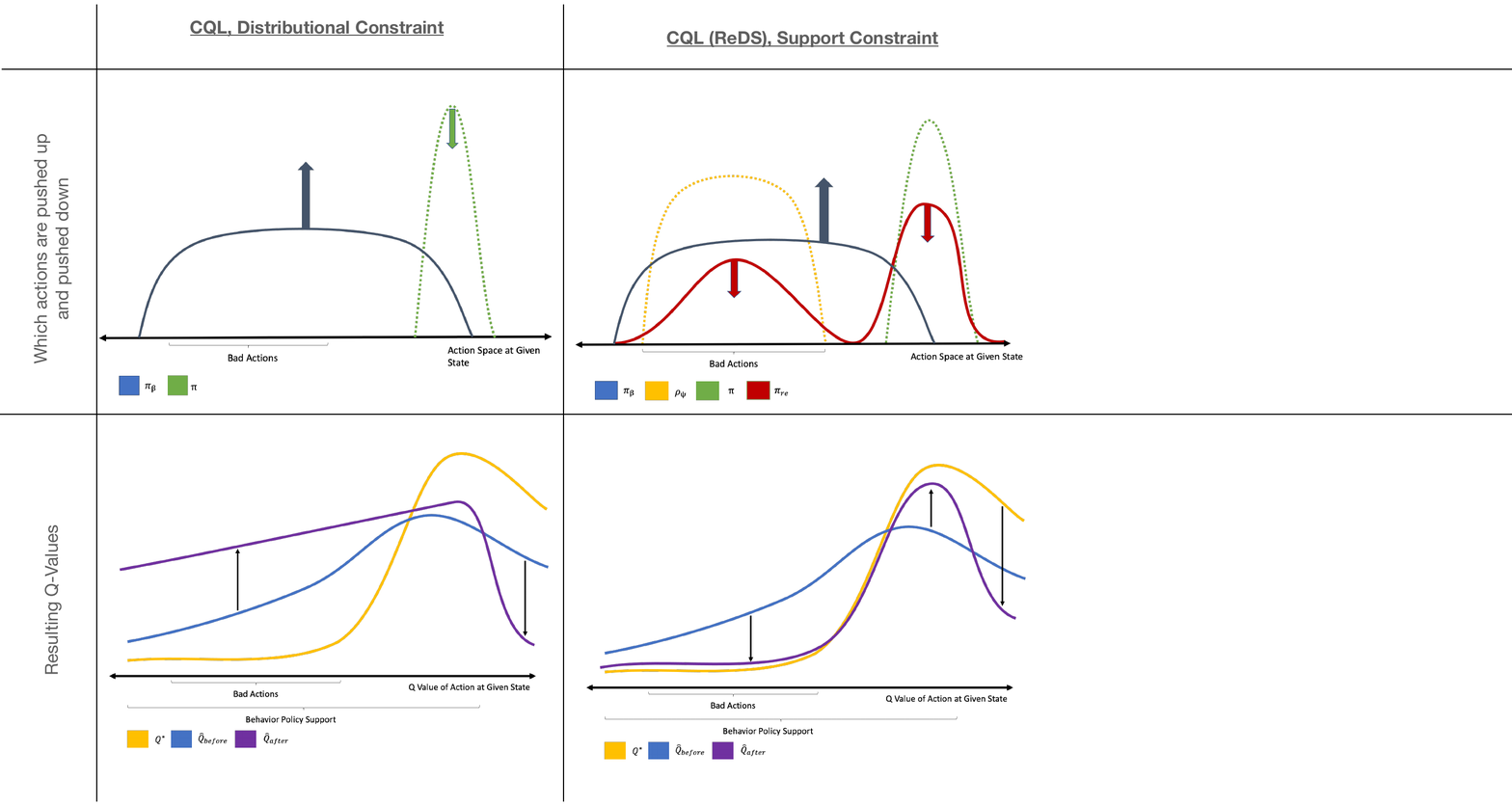}
\caption{
\footnotesize{\textbf{Comparison between support and distributional constraints:} \textbf{Left:} CQL pushes down the Q-function under the policy $\pi$, while pushing up the function under the behavior policy $\behavior$. This means that the Q-values for bad actions can go up \textbf{Right:} In contrast, ReDS re-weights the data distribution to push down the values of bad actions, alleviating this shortcoming. 
}}
\vspace{-0.3cm}
\label{fig:reds_benefit_visual}
\end{figure}

Instead, we devise an alternative formulation for ReDS that re-weights the learned policy such that applying a distributional constraint on this re-weighting imposes a support constraint. 
In this formulation, we instead \emph{push down} the Q-values under $\pi^{re}$. To still enable approximate support constraints between $\pi$ and $\behavior$, we define $\pi^{re}$ as a mixture distribution of the learned policy $\pi$ and the reweighted behavior policy as shown below. We then demonstrate how $\pi^{re}$ can modify CQL to enable approximate support constraints while reusing all existing algorithmic components. 
\vspace{-0.1cm}
\begin{align}
    \label{eqn:support_optimality}\pi^{re}(\cdot|\bs) := \frac{1}{2} \pi(\cdot|\bs) + \frac{1}{2} \left[\pi_\beta(\cdot|\bs) \cdot g \left(\pi(\cdot|\bs)\right)\right],
\vspace{-0.2cm}
\end{align}
where $g(\cdot)$ is a monotonically decreasing function.
As Fig.~\ref{fig:reds_benefit_visual} depicts, the second term in Eq.~\ref{eqn:support_optimality} increases the probability of actions that are likely under the behavior policy, but are less likely under the learned policy (due to $g$ being a decreasing function). We will show in Lemma~\ref{thm:reds_cql_q_value} that utilizing $\pi^{re}$ in CQL enforces a support constraint on $\pi$. Thus, the learned policy $\pi$ can be further away from $\behavior$, allowing $\pi$ to assign more probability to good actions that are within the behavior policy support, even if they have lower probabilities under $\behavior$. Section~\ref{sec:theoretical_analyses_cql} illustrates theoretically why pushing down the Q-values under Eq.~\ref{eqn:support_optimality} approximates a support constraint in terms of how it modifies the resulting Q-values. For an illustration, please see Figure~\ref{fig:diff_q_vals}.

We must pick $g$ to devise a complete offline algorithm. A practical choice that reuses existing components in CQL is to subsume $\behavior$ and $g$ into one distribution, denoted by $\rho$ (additional justification in the next section). More concretely, $\pi^{re}$ now mixes $\pi$ and $\rho$ ({the equal weight of 1/2 is an arbitrary choice and any non-zero weight on $\rho$ can be used}) as seen in Eq.~\ref{eqn:support_optimality_with_rho}. {Utilizing a mixture of $\pi$ and a reweighted $\pi_\beta \cdot g$ is crucial, allowing for two important properties. Firstly, the inclusion of $\pi$ in the mixture ensures that the learned policy $\pi$ is penalized for deviating outside the support of the behavior policy, $\pi_\beta$. Second, the inclusion of the re-weighted behavior policy $\pi_\beta \cdot g(\ba|\bs)$ allows $\pi$ to assign more density to good actions that are within the support of $\pi_\beta$, even if these actions have lower densities under $\behavior$. This ensures that $\pi$ is constrained to lie within the support of $\behavior$, but is not constrained to the behavior policy distribution.} 
\begin{align}
\vspace{-1.5cm}
    \label{eqn:support_optimality_with_rho}\pi^{re}(\cdot|\bs) := \frac{1}{2} \pi(\cdot|\bs) + \frac{1}{2} \rho( \cdot | \bs )
\end{align}

ReDS modifies the regularizer $\mathcal{R}(\theta)$ in CQL (Eq.\ref{eqn:cql_q_function}) to utilize the mixture distribution $\rho$ from Eq.~\ref{eqn:support_optimality_with_rho}, leading to the following modified regularizer:
\vspace{-0.1cm}
\begin{align}
    \mathcal{R}(\theta; \textcolor{red}{\rho} ) = \left( \frac{1}{2}\underset{\bs \sim \mathcal{D}, \ba \sim \pi}{\mathbb{E}} \left[Q_\theta(\bs,\ba)\right] +  \frac{1}{2}\underset{{\bs \sim \mathcal{D}, \ba \sim \textcolor{red}{\rho}}}{\mathbb{E}} \left[Q_\theta(\bs,\ba)\right] - \underset{\bs, \ba \sim \mathcal{D}}{\mathbb{E}} \left[Q_\theta(\bs,\ba)\right]\right)  \label{eqn:reds_plus_cql_reg}
\end{align}
We will show that Equation~\ref{eqn:reds_plus_cql_reg} imposes an approximate support constraint in Lemma~\ref{thm:reds_cql_q_value}. Subsuming $\behavior$ and $g$ into $\rho$ means the optimal $\rho$ should satisfy 
$\rho(\cdot|\bs) = \pi_\beta(\cdot|\bs) \cdot g \left(\pi(\cdot|\bs)\right)$. Under the MaxEnt RL framework~\citep{levine2018reinforcement}, policy probabilities are the exponentiated advantages: $\pi(\ba|\bs) = \exp(A(\bs, \ba)) = \exp(Q(\bs,\ba) - V(\bs))$. As such, we can utilize the advantage function (derived from Q-function) directly to train $\rho$, instead of using the policy. That is,
$\rho(\cdot|\bs) = \pi_\beta(\cdot|\bs) \cdot g (\exp(A(\bs, \ba)))$. 
As shown in prior work on reward-weighted and advantage-weighted policy learning (e.g., AWR~\citep{peng2019awr}), we can solve for this distribution using a weighted maximum log-likelihood objective. We choose $g(x) = 1/x$, such that
$g(\exp(A(\bs,\ba))) = \exp(-A(\bs,\ba))$ 
(which is a decreasing function). As is common with such weighted regression methods~\citep{peng2019awr}, we add a temperature $\tau$ into the exponent, yielding the following objective for $\rho$:
\begin{align}
    \label{eqn:training_mu}
     \rho_\psi(\cdot|\bs) = \arg \max_{\rho_\psi}~~ \mathbb{E}_{\bs \sim \mathcal{D}, \ba \sim \behavior(\cdot|\bs)} \left[ \log \rho_\psi(\ba|\bs) \cdot \exp\left( -A_\theta(\bs, \ba) / \tau \right)  \right].
\vspace{-0.7cm}
\end{align}
The crucial difference between this objective and standard advantage-weighted updates is the difference of the sign. While algorithms such as AWR aim to find an action that attains a high advantage while being close to the behavior policy, and hence, uses a positive advantage, we utilize the \emph{negative} advantage to mine for poor actions that are still quite likely under the behavior policy. The final objective for the Q-function combines the regularizer in Eq.~\ref{eqn:reds_plus_cql_reg} with a standard TD objective:
\vspace{-0.1cm}
\begin{align}
\!\!\!\!\!\!\!\!\!\!\!\!\small{\min_{\theta}~\quad J_Q(\theta) = \textcolor{red}{  \mathcal{R}(\theta; \rho) }+\frac{1}{2} \mathbb{E}_{\bs, \ba, \bs' \sim \mathcal{D}}\left[\left(Q_\theta(\bs, \ba) - \bellman^\policy\bar{Q}(\bs, \ba)\right)^2 \right]}  \label{eqn:reds_plus_cql_q_obj}
\end{align}

\begin{wrapfigure}{R}{0.6\textwidth}
\vspace{-0.3cm}
\begin{minipage}{0.59\textwidth}
    \begin{algorithm}[H]
    \small
    \caption{CQL (ReDS) pseudo-code}
    \label{alg:practical_alg}
    \begin{algorithmic}[1]
        \STATE Initialize Q-function, $Q_\theta$, a policy, $\pi_\phi$, and distribution $\rho_\psi$
        \FOR{step $t$ in \{1, \dots, N\}}
            \STATE \hspace{-0.3cm} Update $\rho_\psi$ with the objective in Eq.~\ref{eqn:training_mu}:\\
            \hspace{-0.3cm} \mbox{$\psi_t := \psi_{t-1} + \eta_\rho \underset{\bs, \ba \sim \mathcal{D}}{\mathbb{E}}[\log \rho_\psi(\ba|\bs) \cdot \exp (-A_\theta(\bs, \ba) / \tau)]$}
            \STATE \hspace{-0.3cm} Train the Q-function using $J_Q(\theta)$ in Eq.~\ref{eqn:reds_plus_cql_q_obj}: \\
            \hspace{-0.3cm} \mbox{$\theta_t := \theta_{t-1} - \eta_Q \nabla_\theta J_Q(\theta)$}\\
            \STATE \hspace{-0.3cm} Improve policy $\pi_\phi$ with SAC-style update:\\
            \hspace{-0.3cm} \mbox{$\phi_{t} := \phi_{t-1} + \eta_\pi \mathbb{E}_{\bs \sim \mathcal{D}, \ba \sim \pi_\phi(\cdot|\bs)}[Q_\theta(\bs, \ba)\! -\! \log \pi_\phi(\ba|\bs)] $} \\
        \ENDFOR
    \end{algorithmic}
    \end{algorithm}
\end{minipage}
\vspace{-0.4cm}
\end{wrapfigure}

\textbf{Practical implementation.} Our algorithm, CQL (ReDS) is illustrated in Algorithm~\ref{alg:practical_alg}. A complete implementation of the functions in python-style is provided in Appendix~\ref{sec:reds_python_code}. CQL (ReDS) first performs one gradient descent step on $\rho_\psi$ using Eq.~\ref{eqn:training_mu} (Lines 3), then performs one gradient step on the Q-function using the objective in Eq.~\ref{eqn:reds_plus_cql_q_obj} (Line 4), then performs one gradient descent step on the policy $\pi_\phi$ (Line 5), and repeats. 

\subsection{Theoretical Analysis of CQL (ReDS)}
\label{sec:theoretical_analyses_cql}
\vspace{-0.15cm}

Next, we analyze CQL (ReDS), showing how learning using the regularizer in Eq.~\ref{eqn:reds_plus_cql_reg} modifies the Q-values, providing justification behind our choice of the distribution $\rho$ in the previous section.

\begin{lemma}[Per-state modification of Q-values.]
\label{thm:reds_cql_q_value}
Let $g(\ba|\bs)$ be a shorthand representation for $ g(\ba|\bs) = g\left( \tau \cdot \pi(\ba|\bs) \right) $. In the tabular setting, the Q-function obtained after one iteration using the objective in Eq.~\ref{eqn:reds_plus_cql_q_obj} is given by:
\begin{align}
\label{eqn:cql_q_function_reds_main}
    Q_{\theta}(\bs, \ba) := \bellman^\policy \bar{Q}(\bs, \ba) - \alpha \frac{ \pi(\ba|\bs) + \behavior(\ba|\bs)  g(\ba|\bs) - 2 \behavior(\ba|\bs) }{2 \behavior(\ba|\bs)}
\end{align}
where $\bellman^\policy \bar{Q} (\bs, \ba)$ is the Bellman backup operator applied to a delayed target Q-network.
\end{lemma}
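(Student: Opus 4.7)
The plan is to carry out the standard tabular fixed-point calculation as in the original CQL derivation (Eq.~\ref{eqn:cql_q_function}), treating the policies $\pi$ and $\rho$ as fixed during a single TD-step and minimizing $J_Q(\theta)$ pointwise at each $(\bs, \ba)$. Because the regularizer $\mathcal{R}(\theta;\rho)$ is linear in the table entries $\{Q_\theta(\bs,\ba)\}$ and the Bellman error term is a sum of independent convex quadratics in those entries, the objective decomposes across state-action pairs and each coordinate can be solved in closed form.

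Concretely, I would first expand the expectations in the tabular setting. The TD-error contributes $\tfrac{1}{2}\sum_{\bs,\ba}\mu(\bs)\pi_\beta(\ba|\bs)\bigl(Q_\theta(\bs,\ba)-\bellman^\policy\bar Q(\bs,\ba)\bigr)^2$, while $\mathcal{R}(\theta;\rho)$ contributes $\alpha\sum_{\bs,\ba}\mu(\bs)\bigl[\tfrac{1}{2}\pi(\ba|\bs)+\tfrac{1}{2}\rho(\ba|\bs)-\pi_\beta(\ba|\bs)\bigr]Q_\theta(\bs,\ba)$. Differentiating with respect to a single entry $Q_\theta(\bs,\ba)$ and setting the result to zero yields
\begin{equation*}
\mu(\bs)\pi_\beta(\ba|\bs)\bigl(Q_\theta(\bs,\ba)-\bellman^\policy\bar Q(\bs,\ba)\bigr)\;+\;\alpha\,\mu(\bs)\Bigl[\tfrac{1}{2}\pi(\ba|\bs)+\tfrac{1}{2}\rho(\ba|\bs)-\pi_\beta(\ba|\bs)\Bigr]=0.
\end{equation*}
Dividing through by $\mu(\bs)\pi_\beta(\ba|\bs)$ (which is positive on the dataset support) and rearranging gives $Q_\theta(\bs,\ba)=\bellman^\policy\bar Q(\bs,\ba)-\alpha\tfrac{\pi(\ba|\bs)+\rho(\ba|\bs)-2\pi_\beta(\ba|\bs)}{2\pi_\beta(\ba|\bs)}$.

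The last step is to eliminate $\rho$ using the form derived in Section~\ref{sec:tractable_reds}. The weighted log-likelihood objective in Eq.~\ref{eqn:training_mu} has the closed-form optimum $\rho(\ba|\bs)\propto\pi_\beta(\ba|\bs)\exp(-A_\theta(\bs,\ba)/\tau)$, and under the MaxEnt parameterization $\pi(\ba|\bs)\propto\exp(A_\theta(\bs,\ba)/\tau)$ this becomes $\rho(\ba|\bs)=\pi_\beta(\ba|\bs)\cdot g(\tau\pi(\ba|\bs))$ for $g(x)=1/x$. Using the shorthand $g(\ba|\bs):=g(\tau\pi(\ba|\bs))$ announced in the lemma, substitution yields exactly Eq.~\ref{eqn:cql_q_function_reds_main}.

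I do not anticipate a real obstacle: the argument is essentially a transcription of the CQL derivation with the additional linear term from $\rho$, and the only subtlety is justifying that $\pi$, $\rho$, and $\bar Q$ are held fixed while optimizing $Q_\theta$. This is legitimate because Lemma~\ref{thm:reds_cql_q_value} describes the Q-function after \emph{one} iteration of the alternating scheme in Algorithm~\ref{alg:practical_alg}, where $\rho_\psi$ and the target network are frozen during the Q-update (Line~4). One minor point worth stating explicitly is the convention that $\pi_\beta$ is the dataset-generating distribution so that $\mathbb{E}_{\bs,\ba\sim\mathcal{D}}[\cdot]=\sum_{\bs,\ba}\mu(\bs)\pi_\beta(\ba|\bs)[\cdot]$; this keeps the normalization in the denominator consistent with the stated formula.
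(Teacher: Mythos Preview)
Your proposal is correct and takes essentially the same approach as the paper: the paper's proof also notes that the ReDS objective differs from CQL only by replacing the push-down distribution $\pi$ with the mixture $\tfrac{1}{2}\pi+\tfrac{1}{2}\rho$, invokes the CQL per-entry fixed-point formula (Eq.~\ref{eqn:cql_q_function}) with this substitution, and then replaces $\rho$ by $\pi_\beta\cdot g$ using the construction of Section~\ref{sec:tractable_reds}. Your version is in fact slightly more explicit, since you write out the pointwise derivative rather than citing the CQL result, but the argument is identical in substance.
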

\vspace{-0.15cm}

\begin{figure}[t]
\centering
    \includegraphics[width=0.5\textwidth]{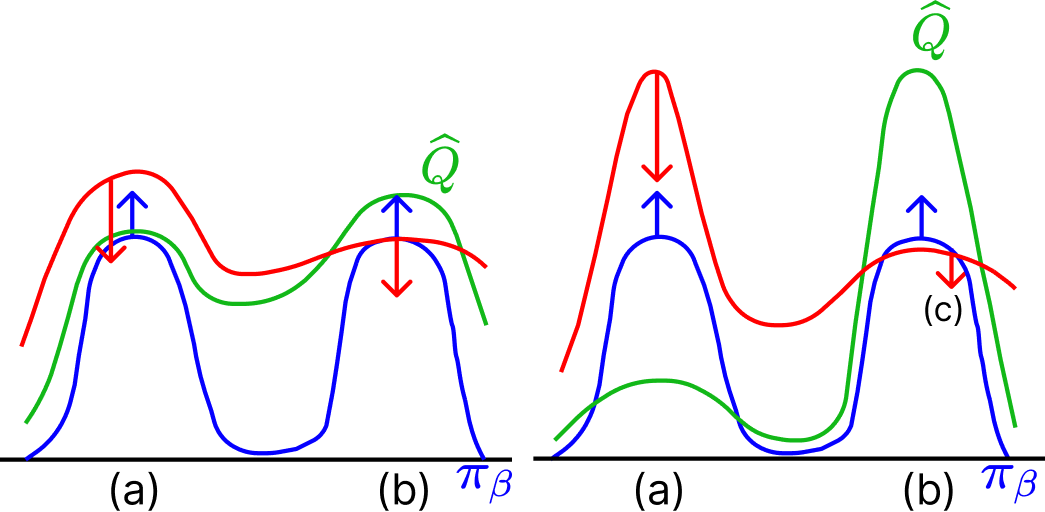}~~~~
    \includegraphics[width=0.12\textwidth]{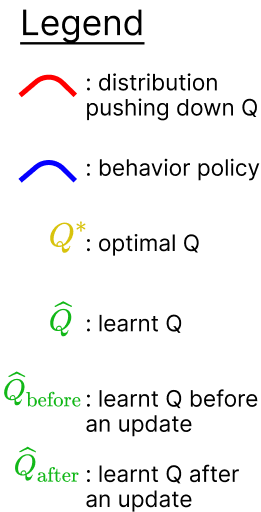}
    \caption{\label{fig:diff_q_vals} \footnotesize{\textbf{An illustration of how ReDS modulates the strength of the distributional constraint per state.} \textbf{(i) Left:} let (a) and (b) denote the two modes of the behavior policy $\behavior(\cdot|\bs)$ at a state $\bs$. Consider in this setting that the learned Q-values at these modes are also roughly similar. In this case, the reweighted distribution $\pi^{re}$ (in red) puts a roughly identical weight on pushing down Q-values on actions appearing in each of these modes. The resulting outcome is that the ReDS constraint behaves similarly to a distributional constraint. \textbf{(ii) Right:} In this case, let's consider a setting where actions appearing in one mode of the behavior policy attain very low Q-values (mode (a)). The re-weighted distribution $\pi^{re}$ pushes down the actions appearing in the mode with lower Q-values with a larger weight than the mode with higher Q-values. This counteracts the effect of the push-up term in CQL that attempts to push up the Q-values for all actions under the behavior policy more strongly in region (a). In effect, this ensures that the Q-value for any action in the high-density regions of the behavior policy is not pushed up blindly, but the Q-values for only good, in-support actions (b) is pushed up. The ReDS constraint, therefore allows the learned policy to selectively pick out the better mode in this case.}}
\vspace{-0.3cm}
\end{figure}
{
Eq.~\ref{eqn:cql_q_function_reds_main} illustrates why the modified regularizer in Eq.~\ref{eqn:reds_plus_cql_reg} leads to a ``soft" support constraint whose strength is modulated per-state. Since $g$ is a monotonically decreasing function of $\pi$, for state-action pairs where $\pi(\ba|\bs)$ has high values, $g(\ba|\bs)$ is low and therefore the Q-value $Q(\bs, \ba)$ for such state-action pairs are underestimated less. Vice versa, for state-action pairs where $\pi(\ba|\bs)$ attains low values, $g(\ba|\bs)$ is high to counter-acts the low $\pi(\ba|\bs)$ values. Also, since $\behavior(\ba|\bs)$ appears in the denominator, for out-of-support actions, where $\behavior(\ba|\bs) = 0$, $\pi(\ba|\bs)$ must also assign $0$ probability to the actions for the Q values to be well defined. An illustration of this idea is shown in Figure~\ref{fig:diff_q_vals}. We can use this insight to further derive the objective optimized by ReDS.
}
{
\begin{lemma}[CQL (ReDS) objective.]
\label{thm:reds_cql}
Assume that for all policies $\pi \in \Pi, \forall (\bs, \ba), \pi(\ba|\bs) > 0$. Then, CQL (ReDS) solves the following optimization problem:
\vspace{-0.1cm}
\begin{align}
    \label{eqn:reds_cql_opt}
    \!\!\!\!\!\!\!\!\!\max_{\pi \in \Pi}~ \widehat{J}(\pi) - \frac{\alpha}{2 (1 - \gamma)} \mathbb{E}_{\bs \sim \widehat{d}^\pi}\left[ D(\pi, \behavior)(\bs) + \cdot \underset{\ba \sim \pi(\cdot|\bs)}{\mathbb{E}} \left[g\left({\tau \cdot \pi(\ba|\bs)}\right) \mathbb{I}\left\{ \rebuttal{\behavior(\ba|\bs) > 0} \right\} \right] \right].
\hspace{-0.3cm}
\end{align}
\end{lemma}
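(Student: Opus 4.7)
The plan is to invoke Lemma~\ref{thm:reds_cql_q_value} and unroll the implied Bellman equation to write the policy's expected return at the fixed point of Algorithm~\ref{alg:practical_alg} as $\widehat{J}(\pi)$ minus a per-state regularizer, and then massage that regularizer into the form of Eq.~\ref{eqn:reds_cql_opt}.

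First I would write down what the policy-improvement step of Algorithm~\ref{alg:practical_alg} optimizes: at the joint fixed point, $\pi$ maximizes $\mathbb{E}_{\bs \sim \widehat{d}^\pi, \ba \sim \pi}[Q_\theta(\bs,\ba)]$ (absorbing the SAC entropy term into the divergence $D$). Substituting the $Q_\theta$ of Lemma~\ref{thm:reds_cql_q_value} exhibits this as the value of $\pi$ in an effective MDP whose one-step reward is $r(\bs,\ba) - \alpha\,\Delta_\pi(\bs,\ba)$ with
$$\Delta_\pi(\bs,\ba) \;=\; \frac{\pi(\ba|\bs) + \behavior(\ba|\bs)g(\ba|\bs) - 2\behavior(\ba|\bs)}{2\behavior(\ba|\bs)} \;=\; \tfrac{1}{2}\!\left(\tfrac{\pi(\ba|\bs)}{\behavior(\ba|\bs)} - 1\right) + \tfrac{1}{2}\!\left(g(\ba|\bs) - 1\right).$$
The standard telescoping/performance-difference argument then gives $J(\pi) = \widehat{J}(\pi) - \tfrac{\alpha}{1-\gamma}\,\mathbb{E}_{\bs \sim \widehat{d}^\pi,\ba \sim \pi}[\Delta_\pi(\bs,\ba)]$.

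Next I would recognize the two pieces of the inner expectation. The first half averaged under $\pi$ is $\tfrac{1}{2}\sum_\ba \pi(\ba|\bs)\bigl(\pi(\ba|\bs)/\behavior(\ba|\bs)-1\bigr)$, which is exactly $\tfrac{1}{2}D(\pi,\behavior)(\bs)$ for the Pearson $\chi^2$-style divergence that naturally appears in CQL; with the convention $1/0 = +\infty$, this term forces every feasible $\pi$ to vanish off $\mathrm{supp}(\behavior)$. The second half, $\tfrac{1}{2}\mathbb{E}_{\ba \sim \pi}[g(\tau\pi(\ba|\bs)) - 1]$, may be restricted to $\{\behavior(\ba|\bs)>0\}$ (so the $\mathbb{I}\{\behavior>0\}$ appears), because the complementary contribution is already subsumed by the infinite penalty in $D$; the additive $-1$ is $\pi$-independent and drops under $\arg\max$. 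Combining the two and absorbing $\tfrac{1}{2}$ into $\tfrac{\alpha}{1-\gamma}$ yields the prefactor $\tfrac{\alpha}{2(1-\gamma)}$ of Eq.~\ref{eqn:reds_cql_opt}.

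The main obstacle I expect is the self-consistency issue: the ``effective reward'' $-\alpha\,\Delta_\pi$ depends on $\pi$ itself, so one cannot naively treat it as an exogenous reward when arguing that maximizing $\mathbb{E}_\pi[Q_\theta]$ is equivalent to solving Eq.~\ref{eqn:reds_cql_opt}. I would handle this along the lines of the original CQL analysis (Appendix~B of \citet{kumar2020conservative}): verify that the first-order stationarity conditions of the joint fixed point of Eq.~\ref{eqn:reds_plus_cql_q_obj} and the policy-improvement step coincide with those of the reduced problem in Eq.~\ref{eqn:reds_cql_opt}, using the assumption $\pi(\ba|\bs)>0$ to keep all KKT multipliers well-defined. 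A secondary but fiddly point is to commit up front to a precise convention for $D(\pi,\behavior)(\bs)$ and for $g(\tau\pi(\ba|\bs))$ on $\{\behavior=0\}$, so that the passage from the $\Delta_\pi$ form to Eq.~\ref{eqn:reds_cql_opt} is a literal equality rather than merely a correspondence modulo infinite penalties.
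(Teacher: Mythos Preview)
Your proposal is correct and follows essentially the same route as the paper: derive the effective Bellman backup with penalty $-\alpha\bigl(\pi^{re}(\ba|\bs)/\behavior(\ba|\bs)-1\bigr)$, interpret it as a reward bonus, split the resulting $\mathbb{E}_{\ba\sim\pi}$ term into the $D_{\text{CQL}}$ piece and the $g$ piece, and drop the $\pi$-independent constant. The paper's proof is actually less careful than yours about the self-consistency issue and the off-support convention---it simply asserts the equivalence ``following the argument of Theorem~3.1 in \citet{kumar2020conservative}'' and justifies the indicator by noting that the $\behavior$ in the denominator only cancels on the support---so your extra caution there is warranted but not strictly required to match the paper.
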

\vspace{-0.4cm}
}
{
$\widehat{J}(\pi)$ corresponds to the empirical return of the learned policy, i.e., the return of the policy under the learned Q-function. The objective in Lemma~\ref{thm:reds_cql} can be intuitively interpreted as follows. The first term, $D(\pi, \behavior)(\bs)$, is a standard distributional constraint, also present in na\"ive CQL, and it aims to penalize the learned policy $\pi$ if it deviates too far away from $\behavior$. ReDS adds an additional second term that effectively encourages $\pi$ to be ``sharp'' within the support of the behavior policy, enabling it to put its mass on actions that lead to a high $\widehat{J}(\pi)$. 
}

{
This second term allows us control the strength of the distributional constraint per state: at states where the support of the policy is narrow, i.e., the volume of actions such that $\pi_\beta(\ba|\bs) \geq \varepsilon$ is small, the penalty in Equation~\ref{eqn:reds_cql_opt} reverts to a standard distributional constraint by penalizing divergence from the behavioral policy via $D(\pi, \pi_\beta)(\bs)$ as the second term is non-existent. At states where the policy $\pi_\beta$ is broad, the second term counteracts the effect of the distributional constraint within the support of the behavior policy, by $\pi$ to concentrate its density on only good actions within the support of $\pi_\beta$ with the same multiplier $\alpha$. Thus even when we need to set $\alpha$ to be large in order to stay close to $\behavior(\cdot|\bs)$ at certain states (e.g., in the narrow hallways in the didactic example in Sec.~\ref{sec:didactic}), $D(\pi, \pi_\beta)(\bs)$ is not heavily constrained at other states. 
}
\vspace{-0.3cm}
\section{Experimental Evaluation}
\label{sec:experiments}
\vspace{-0.25cm}

The goal of our experiments is to understand how CQL (ReDS) compares to distributional constraint methods when learning from heteroskedastic offline datasets.
In order to perform our experiments, we construct new heteroskedastic datasets that pose challenges representative of what we would expect to see in real-world problems. We first introduce tasks and heteroskedastic datasets that we evaluate on, and then present our results compared to prior state-of-the-art methods. We also evaluate ReDS on some of the standard D4RL~\citep{fu2020d4rl} datasets which are not heteroskedastic in and find that the addition of ReDS, as expected, does not help, or hurt on those tasks. 

\vspace{-0.35cm}
\subsection{Comparison on D4RL Benchmark} 
\vspace{-0.2cm}
Our motivation for studying heteroskedastic datasets is that heteroskedasticity likely exists in real-world domains such as driving, manipulation where datasets are collected by multiple policies that agree and disagree at different states. While standard benchmarks (D4RL~\citep{fu2020d4rl} and RLUnplugged~\citep{gulcehre2020rl}) include offline datasets generated by mixture policies, e.g. the ``medium-expert'' generated by two policies with different performance, these policies are themselves trained via RL methods such as SAC, that constrain the entropy of the action distribution at each state to be uniform.
To measure their heteroskedasticity, we utilize a tractable approximation to $C^\pi_\text{diff}$ used in our theoretical analysis: the standard deviation in the value of $D(\pi, \pi_\beta)(\bs)$ across states in the dataset, using a fixed policy $\pi$ obtained by running CQL. We did not use $C^\pi_\text{diff}$ directly, as it requires estimating the state density of the data distribution, which is challenging in continuous spaces. Observe in Table~\ref{tab:data_diversity}
that the standard deviation is relatively low for the D4RL antmaze datasets. This corroborates our intuition that D4RL datasets are significantly less heteroskedastic.

\begin{figure}%
  \centering
  \footnotesize
  \subfloat[the new antmaze datasets (Ours) are significantly more heteroskedastic compared to the standard D4RL datasets. We measure heteroskedasticity using the std and max of $D(\pi, \behavior)(\bs)$ across states in the offline dataset.]{
\hspace{0.15cm}
\begin{tabular}{l||r r}
    \toprule
    Dataset & std & $\max$ \\
    \midrule
    noisy (Ours)  &  \textbf{18} & \textbf{253}  \\
    biased (Ours) &  \textbf{9} & \textbf{31} \\
    \midrule
    diverse (D4RL) & 2 & 11 \\
    play (D4RL) &  2 & 13 \\
    \bottomrule
\end{tabular}
\hspace{0.1cm}
\label{tab:data_diversity}
}%
\hfill
\footnotesize
  \subfloat[CQL (ReDS) (Ours) outperforms prior offline RL methods including state-of-the-art methods (IQL) , and prior support constraint methods(BEAR, EDAC) on three out of four scenarios when learning from heteroskedastic datasets in the antmaze task. The improvement over prior methods is larger when learning from the \texttt{noisy} datasets, which are more heteroskedactic, as shown in Table~\ref{tab:data_diversity}, compared to the \texttt{biased} datasets.]{
 \hspace{0.1cm}
\begin{tabular}{l||c c c c c}
    \toprule
    Task \& Dataset & EDAC & BEAR & CQL &  IQL &    Ours \\
    \midrule
    medium-noisy  & 0 & 0 & 55 & 44 & \textbf{73 }\\
medium-biased & 0 & 0 & \textbf{73} & 48 &  \textbf{74}\\
\midrule
large-noisy & 0 & 0 & 42 & 39 & \textbf{53}\\
large-biased & 0 & 0 & \textbf{50} & 41 & 45 \\
    \bottomrule
    \end{tabular}
    \hspace{0.1cm}
    \label{tab:antmaze_results}
}
    \vspace{-0.2cm}
  \captionof{table}{\footnotesize{Our method out-performs prior methods in the antmaze navigation task when learning from heteroskedastic datasets, indicating the importance of support constraints relative to distributional constraints.}} 
  \label{tab:table}%
  \vspace{-0.2cm}
\end{figure}

\vspace{-0.2cm}
\subsection{Comparisons on Constructed Heteroskedastic datasets}
\vspace{-0.2cm}

\textbf{Heteroskedastic datasets.} {To stress-test our method and prior distributional constraint approaches, }we collect new datasets for the medium and large mazes used in the antmaze navigation tasks from D4RL: \texttt{noisy}
datasets, where the behavior policy action variance differs in different regions of the maze, representative of user variability in navigation, and \texttt{biased} datasets, where the behavior policy admits a systematic bias towards certain behaviors in different regions of the maze, representative of bias towards certain routes in navigation problems. Table~\ref{tab:data_diversity} illustrates that these datasets are significantly more heteroskedastic compared to the D4RL datasets.

Using these more heteroskedastic datasets, we compare CQL (ReDS) with CQL and IQL ~\citep{kostrikov2021offlineb}, which are recent popular methods, and two prior methods, BEAR~\citep{kumar2019stabilizing} and EDAC~\citep{2021arXiv211001548A}, that also enforce support constraints. For each algorithm, including ours, we utilize hyperparameters directly from the counterpart tasks in D4RL. Due to lack of an effective method for offline policy selection (see \citet{fu2021benchmarks}), we utilize oracle checkpoint selection for every method. We compute the mean and standard deviation of the performance across 3 seeds. Table~\ref{tab:antmaze_results} illustrates that the largest gap between CQL (ReDS) and prior methods appears on the \texttt{noisy} datasets, which are particularly more heteroskedastic as measured in Table~\ref{tab:data_diversity}. 

We also compare CQL (ReDS) with recent offline RL algorithms on D4RL, including DT ~\citep{chen2021decision}, AWAC~\citep{AWAC}, onestep RL~\citep{one_step_RL}, TD3+BC~\citep{td3bc} and COMBO~\citep{yu2021combo}. Table~\ref{tab:d4rl_mujoco_results} shows that CQL (ReDS) obtains similar performance as existing distributional constraint methods and outperform BC-based baselines. This is expected given that the D4RL datasets exhibit significantly smaller degree of heteroscedasticity, as previously explained. Also, a large fraction of the datasets is trajectories with high returns. BC using the top $10\%$ trajectories with the highest episode returns already has strong performance (10\%BC).

\begin{table}[t]\centering
\scriptsize
\setlength{\tabcolsep}{0.7em}
\begin{tabular}{l||rrrrrrrrr|r}
\toprule
Dataset &BC &10\%BC & DT  & AWAC & Onestep RL  & TD3+BC & COMBO & CQL & IQL & (Ours) \\\midrule
halfcheetah-medium-replay & 36.6 &40.6 &36.6 &40.5 &38.1 & 44.6 & 55.1 & 45.5 & 44.2 & 52.3  \\
hopper-medium-replay & 18.1 &75.9 &82.7 &37.2 & 97.5 & 60.9 & 89.5 & 95.0 & 94.7 &  101.5 \\
walker2d-medium-replay & 26.0 &62.5 &66.6 &27.0 &49.5 & 81.8 & 56.0 & 77.2 & 73.9 & 85.0 \\
% \midrule
halfcheetah-medium-expert & 55.2 & 92.9 &86.8 &42.8 & 93.4 & 90.7 & 90.0 & 91.6 & 86.7 & 89.5  \\
hopper-medium-expert & 52.5 & 110.9 & 107.6 &55.8 &103.3 &98.0 & 111.1 & 105.4 & 91.5 & 110.0 \\
walker2d-medium-expert & 107.5 & 109.0 & 108.1 & 74.5 & 113.0 & 110.1 & 103.3 & 108.8 & 109.6 & 112.0  \\ \midrule
locomotion total & 295.9 & 491.8 & 488.4 & 277.8 & 494.8 & 486.1 & 505 & 523.5 & 500.6 & 550.3 \\
\bottomrule
\end{tabular}
\vspace{-0.25cm}
\caption{\footnotesize{Performance comparison with recent offline RL algorithms on the D4RL benchmark.}}\label{tab:d4rl_mujoco_results}
\vspace{-0.3cm}
\end{table}
% \vspace{-0.3cm}
% \subsection{Performance comparison on pixel-based tasks}

The previous results compares CQL (ReDS) to baselines in tasks where the MDP states are low-dimensional vectors. This section compares CQL (ReDS) to baselines in vision-based tasks. 

\begin{table}
    \vspace{0.1cm}
    \centering
    \resizebox{0.8\textwidth}{!}{\begin{tabular}{l||c c| c c}
    \midrule
    Task & CQL & CQL (ReDS) & std $D(\pi, \behavior)(\bs)$ & max $D(\pi, \behavior)(\bs)$ \\
    % $
    \midrule
    Pick \& Place  & 6.5 $\pm$ 0.4 &  \textbf{15.1 $\pm$ 0.4} & {48.7} & {307.4}\\
    Bin Sort (Easy) & \textbf{31.2 $\pm$ 0.3} &  \textbf{31.4 $\pm$ 0.3} & 7.9 & 81.6 \\
    Bin Sort (Hard) & 6.1 $\pm$ 0.2 &  \textbf{23.1 $\pm$ 0.7} & {59.6} & {988.3} \\
    \bottomrule
    \end{tabular}}
    \vspace{-0.2cm}
    \caption{ \label{tab:manipulation} \footnotesize{\textbf{CQL (ReDS) vs CQL} on the manipulation tasks}. CQL (ReDS) outperforms CQL significantly when learning from more heteroskedastic datasets, as measured by our estimates of $C^\pi_\text{diff}$: the standard-deviation and the maximum of $D(\pi, \pi_\beta)(\bs)$ across states.}
    \vspace{-0.5cm}
\end{table}

\textbf{Visual robotic manipulation.} We consider two types of manipulation tasks. In the ``Pick \& Place" task, the algorithm controls a WidowX robot to grasp an object and place it into a tray located at a test location, directly from raw 128$\times$128$\times$3 images and sparse 0/1 reward signal. The dataset consists of behavior from suboptimal grasping and placing policies, and the positions of the tray in the offline dataset very rarely match the target test location. The placing policies exhibit significant variability, implying that these datasets are heteroskedastic under our definition. We also consider ``Bin Sort" task (see Figure~\ref{fig:heteroskedastic_binsort}), where a WidowX robot is controlled to sort two objects into two separate bins. Here heteroskedacity is introduced by when sorting the objects into the desirable bins. Similar to the Pick \& Place Domain, 
% the placing policy exhibits significant variability, placing the object in the incorrect bin with a bias. 
{the placing policy exhibits significant variability, and shows an the object being
placed in the incorrect bin (e.g., recyclable trash thrown into the non-recyclable bin).}
However, the grasping policy is more expert-like grasping the object with low variability. More details can be found in Appendix \ref{app:tasks}. 

\begin{figure}[h]
\vspace{-2.5cm}
\centering
\includegraphics[width=0.96\linewidth]{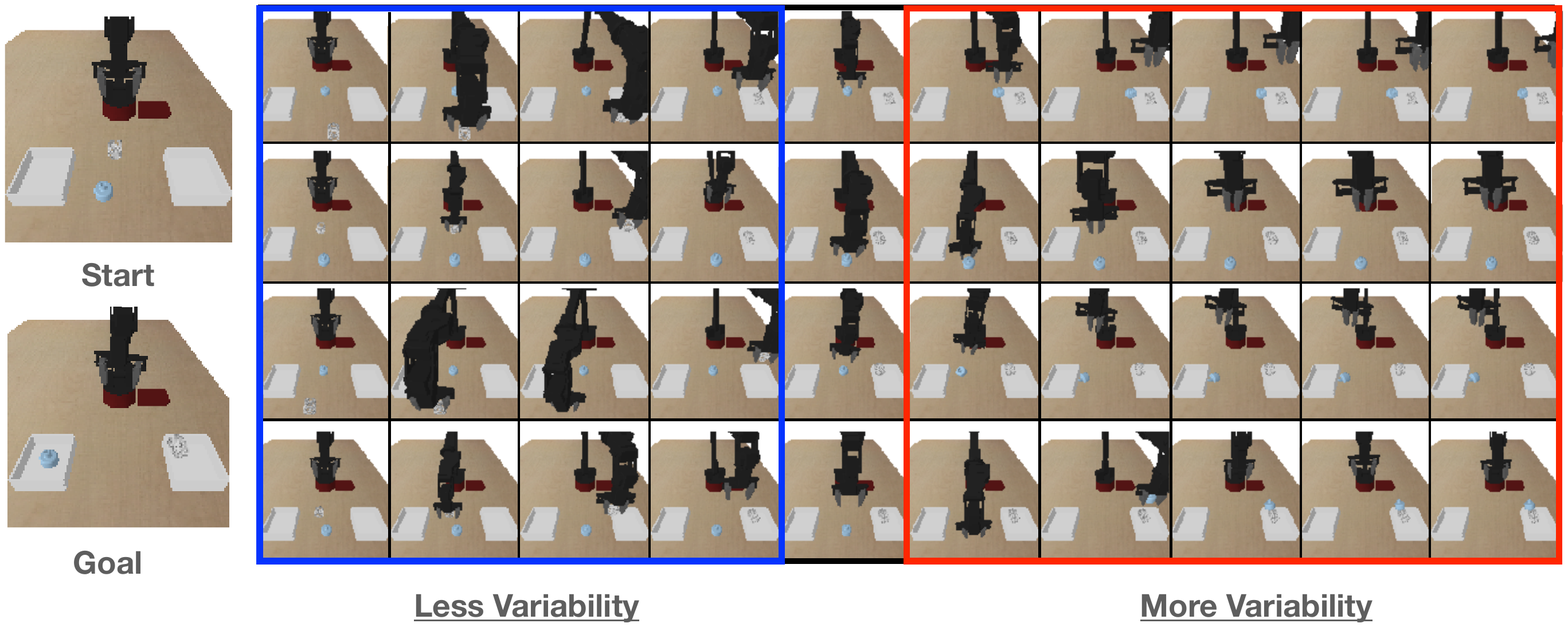}
\vspace{-2.7cm}
\caption{\footnotesize{\textbf{Examples trajectories from the heteroskedastic bin-sort data.} In this visual manipulation task, an offline RL algorithm must sort the objects in front of it into two bins. The offline dataset provided exhibits non-uniform coverage at different states. In the first half of the trajectory, the states exhibit a more narrow action distribution. However, the second half of the trajectory has more uniform action distribution.}}
\vspace{-0.2cm}
\label{fig:heteroskedastic_binsort} 
\end{figure}

Table~\ref{tab:manipulation} presents the results on these tasks. We utilize oracle policy selection analogous to the antmaze experiments from Table~\ref{tab:antmaze_results}. Table~\ref{tab:manipulation} shows that CQL (ReDS) outperforms CQL attaining a success rate of about 15.1\% for the visual pick and place task, whereas CQL only attains 6.5\% success. While these performance numbers might appear low in an absolute sense, note that both CQL and ReDS do improve over the behavior policy, which only attains a success rate of {4\%}. Thus offline RL does work on this task, and utilizing ReDS in conjunction with the standard distributional constraint in CQL does result in a boost in performance with this heteroskedastic dataset. For the ``Bin Sorting", our method outperforms CQL by \textbf{3.5x} when learning from more heteroskedastic datasets. This indicates that the effectiveness of our method in settings with higher heteroskedasticity. 

\begin{wrapfigure}{l}{0.35\textwidth}
    \centering
    \vspace{-0.6cm}
    \includegraphics[width=0.8\linewidth]{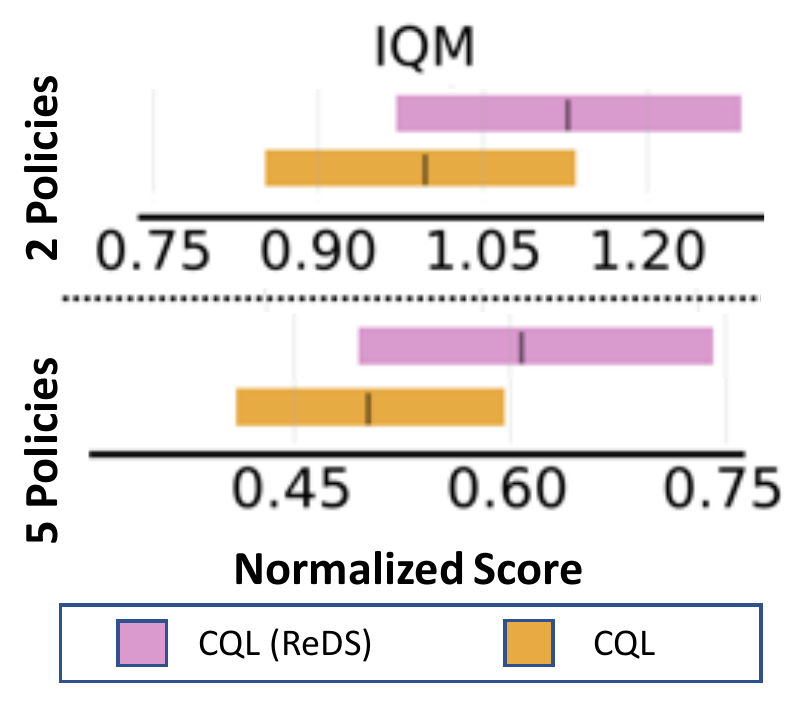}
    \vspace{-0.1cm}
    \caption{\footnotesize \textbf{Performance of CQL (ReDS) vs CQL} on  the IQM normalized score and the mean normalized score over ten Atari games, for the case of \textbf{two} (\textcolor{red}{top}) and \textbf{five} (\textcolor{red}{bottom}) policies. 
    }
    \vspace{-1.0cm}
    \label{fig:atari_results}
\end{wrapfigure}

\textbf{Atari games.} We collect data on 10 Atari games from multiple policies that behave differently at certain states, while having similar actions otherwise. We consider a case of \textbf{two} such policies, and a harder scenario of \textbf{five}. We evaluate the performance of CQL (ReDS) on the Atari games using the evaluation metrics from prior works ~\citep{agarwal2019optimistic,kumar2021implicit}, denoted IQM.

Figure~\ref{fig:atari_results} shows that in both testing scenarios: with the mixture of two policies (top figure) and the mixture of five policies (bottom figure), CQL (ReDS) outperforms CQL in aggregate.
\textbf{To summarize}, our results indicate that incorporating CQL (ReDS) outperform common distributional constraints algorithms when learning from heteroskedastic datasets.
\vspace{-0.1cm}
\section{Related Work}
\label{sec:related}
\vspace{-0.4cm}
Offline Q-learning methods utilize mechanisms to prevent backing up unseen actions~\citep{levine2020offline}, by applying an explicit behavior constraint that forces the learned policy to be ``close'' to the behavior policy~\citep{jaques2019way,wu2019behavior,peng2019awr,siegel2020keep,wu2019behavior,kumar2019stabilizing, kostrikov2021offline,kostrikov2021offlineb,wang2020critic,fujimoto2021minimalist}, or by learning a conservative value function~\citep{kumar2020conservative,xie2021bellman,nachum2019algaedice,yu2021combo,yu2020mopo,rezaeifar2021offline,jin2020pessimism,wu2019behavior}. Most of these offline RL methods utilize a distributional constraint, explicit (e.g., TD3+BC~\citep{fujimoto2021minimalist}) or implicit (e.g., CQL~\citep{kumar2020conservative}), and our empirical analysis of representative algorithms from either family indicates that these methods struggle with heteroskedastic data, especially those methods that use an explicit constraint. Model-based methods~\citep{kidambi2020morel, yu2020mopo,argenson2020model,swazinna2020overcoming,Rafailov2020LOMPO,lee2021representation,yu2021combo} train value functions using dynamics models, which is orthogonal to our method. 
% Since algorithms based on supervised learning~\citep{chen2021decision,emmons2021rvs,brandfonbrener2021offline} may not be effective when learning from sub-optimal dataset~\citep{kostrikov2021offlineb,kumar2022should}, we focus on Q-learning methods.
%%SL.5.10: I wonder if there is a way to somehow soften this sentence, or put it somewhere else? It comes across as somewhat aggressive, both because of how blunt it is and because it's the "last word" in this paragraph. Otherwise I think this paragraph is good!
%%AK: i made it less blunt, and said that we study TD methods, because prior work has shown these things dont stitch.

Some prior works have also made a case for utilizing support constraints instead of distribution constraints, often via didactic examples~\citep{kumar2019stabilizing,kumar_blog,levine2020offline}, and devised algorithms that impose support constraints in theory, by utilizing the maximum mean discrepancy metric~\citep{kumar2019stabilizing} or an asymmetric f-divergences~\citep{wu2019behavior} for the policy constraint~\citep{wu2019behavior}. Empirical results on D4RL~\citep{fu2020d4rl} and the analysis by \citet{wu2019behavior} suggest that support constraints are not needed, as strong distribution constraint algorithms often have strong performance. As we discussed in Sections~\ref{sec:theory} (Theorem~\ref{thm:distributional_constraints} indicates that this distributional constraints may not fail when $C^\pi_\text{diff}$ is small, \emph{provided these algorithms are well-tuned}.) and \ref{sec:method}, these benchmark datasets are not heteroskedastic, as they are collected from policies that are equally wide at all states and centered on good actions (e.g., Antmaze domains in ~\citep{fu2020d4rl}, control suite tasks in \citet{gulcehre2020rl}) and hence, do not need to modulate the distributional constraint strength. To benchmark with heteroskedastic data, we developed some novel tasks which may be of independent interest beyond this work, and find that our method \methodname\ can work well in these cases. 
\vspace{-0.2cm}
\section{Discussion}
\label{sec:discussion}
\vspace{-0.25cm}

In this work, we studied the behavior of distributional constraint offline RL algorithms when learning from heteroskedastic datasets, a property we are likely encounter in the real world. Na\"ive distributional constraint algorithms can be highly ineffective in such settings both in theory and practice, as they fail to modulate the constraint strength per-state. We propose ReDS, a method to convert distributional constraints into support-based constraints via reweighting, and validate it in CQL. A limitation of ReDS is that it requires estimating the distribution $\rho_\psi$ to enforce a support constraint, which brings about its some additional compute overhead. Devising approaches for enforcing support constraints that do not require extra machinery is an interesting direction for future work. Understanding if support constraints are less sensitive to hyperparameters or are more amenable to cross-validation procedures is also interesting. Extending ReDS to other Q-learning based, and even model-based offline RL algorithms is also an interesting direction for future work.

\vspace{-0.2cm}
\section*{Acknowledgements}
\vspace{-0.2cm}

We would like to thank anonymous reviewers and others from the RAIL lab at UC Berkeley and IRIS lab at Stanford for their comments and support. We thank Rishabh Agarwal for informative comments and suggestions. AK is supported by the Apple Scholars in AI/ML PhD Fellowship. This research is supported by the Office of Naval Research, Army Research Labs and the National Science Foundation. We thank Google Cloud for donating TPU resources via the TRC program that enabled us to timely obtain results in this paper.

\newpage
\bibliography{iclr2023_conference}
\bibliographystyle{iclr2023_conference}

\appendix

\newpage
\part*{Appendices}

\section{Details of the Didactic Navigation Example from Section~\ref{sec:didactic}}
\label{app:didactic_example}

\vspace{-0.5cm}
\begin{figure*}[ht]
\centering
\includegraphics[width=0.8\textwidth]{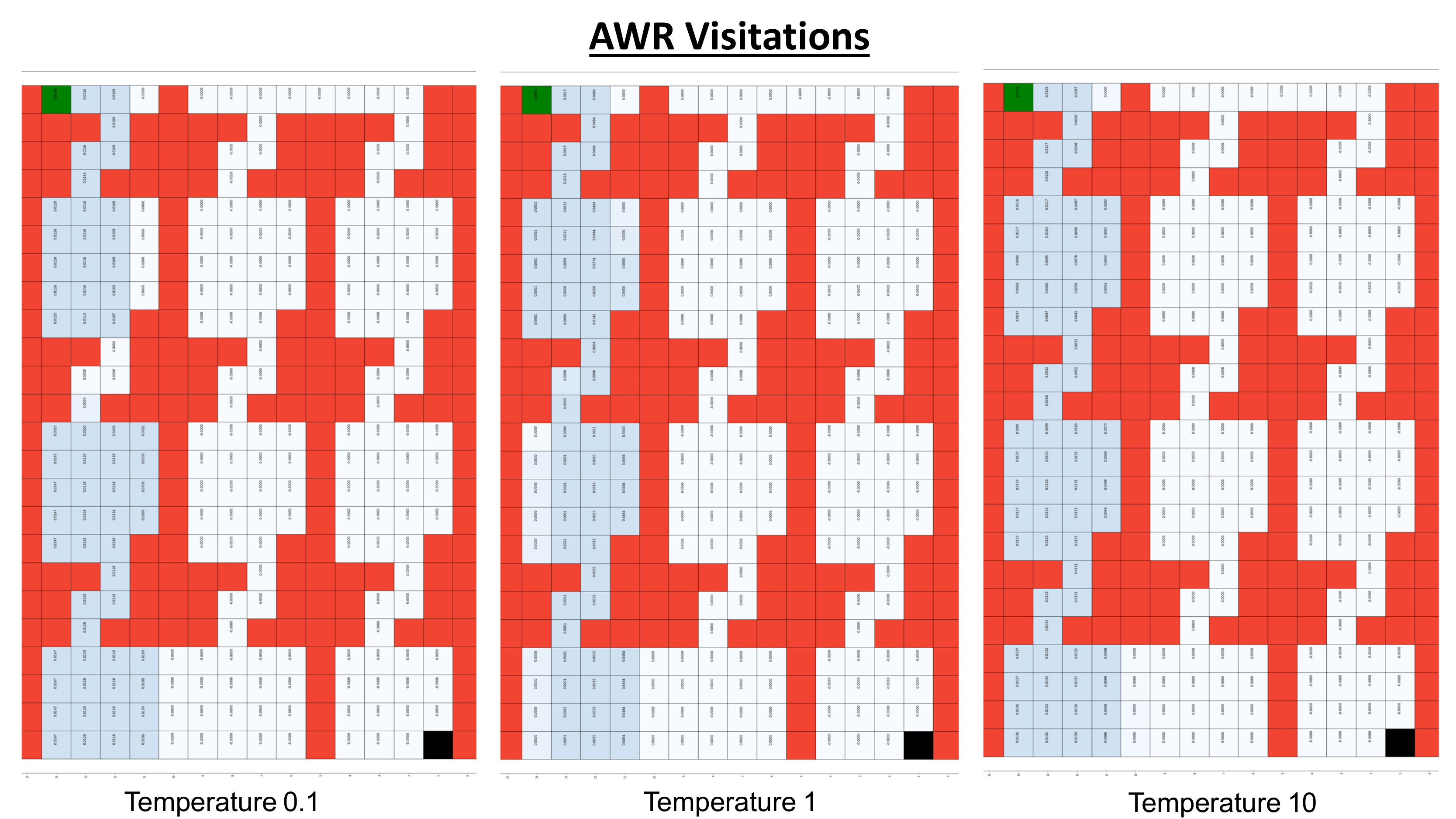}
\caption{\label{fig:awr_figure} \textbf{Results of running AWR on the gridworld maze for a variety of temperature values} annotated with state-occupancy values. Observe that AWR is unable to reach the goal across a wide range of temperatures for the AWR hyperparameter. Even though for some of these hyperparameters, such as $\tau=0.1$, it is able to traverse quickly into the third region, it does spend a larger fraction of its state visitation in the narrow hallways in this case (e.g., the final narrow hallway it is able to reach to) indicating that it gets stuck. Increasing the temperature to $\tau=1.0$ does not solve it either, since now it does not even reach this hallway with as high of an occupancy.} 

\end{figure*}
\vspace{-0.5cm}

\begin{figure*}[ht]
\centering
\includegraphics[width=0.8\textwidth]{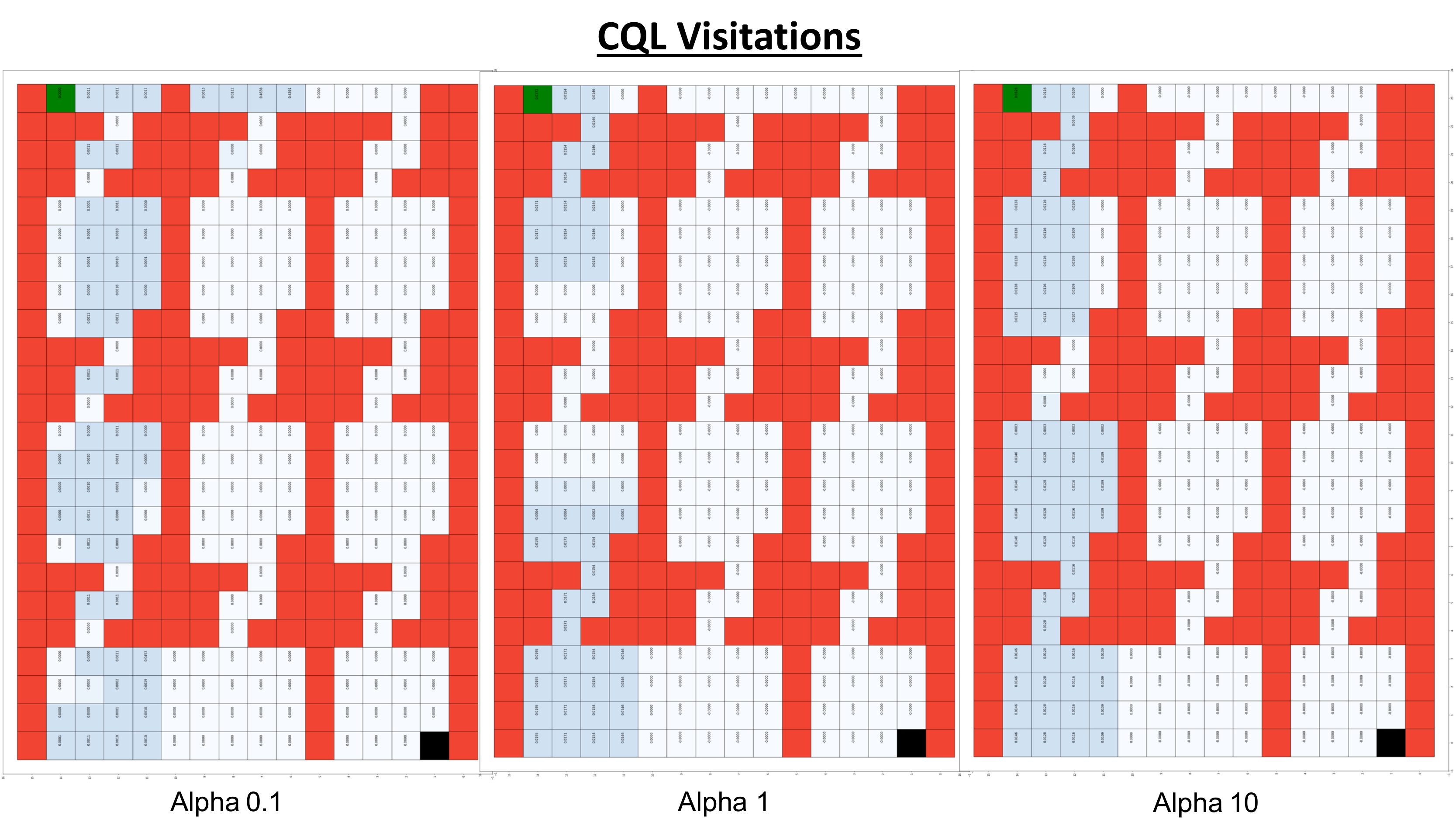}
\caption{\label{fig:cql_figure} \textbf{Results of running CQL on the gridworld maze for a variety of temperature values.} Observe that CQL is unable to reach the goal and gets stuck for all the $\alpha$ values we studied.} 
\end{figure*}

In this section, we present some details regarding the navigation example we considered in Section~\ref{sec:didactic}. To create this example, we modified the gridworld code from \citet{fu2019diagnosing} (code taken from: \url{https://github.com/justinjfu/diagnosing_qlearning}) to create the corresponding gridworld maze. We utilize a $24 \times 16$ gridworld as shown in the figures below, larger than the $8 \times 8$ or $16 \times 16$ gridworlds studied in this repository in the past. We utilize a smooth representation of the observation space. This is constructed by first sampling random Gaussian feature vectors from $\mathbb{R}^{50}$ for each grid cell (each grid cell is a state). Smoothing of these features vectors is then done locally, following the protocol for ``grid-*-smoothobs'' in \citet{fu2019diagnosing}. This observation type presents a challenge for Q-learning algorithms that may often generalize incorrectly. This is because aliasing occurs with the predictions of nearby states that exhibit very different dynamics but not so different observation vectors. The agents gets a sparse binary 0/1 reward: +1 is attained only when the agent reaches the goal (marked in black) from the start location (marked in green).

The behavior policy in each part of the maze is based on a mixture of different policies. In the wider rooms of the maze, one of the policies is a uniform policy, that uniformly chooses every action at a state. The second policy is a biased policy, that drives the agent away from the goal. In the narrow hallways, the behavior policy deterministically drives the agent towards the goal. {For wide rooms, in contrast, a bias exists for actions taken in the direction away from the goal. This bias was set to be 0.8. This means that for rooms where you need to exit the wide passage by going down, the action of going up was selected 80\% of the time by the behavior policy and each of the other action was randomly sampled with 20\% probability. The opposite decision was taken for rooms where the goal was towards the top of the wide passage where 80\% of the time the down action was selected.} 

\textbf{Result visualizations.} We now present some visualizations of the policies learned by various methods: AWR, CQL and CQL (ReDS). Since the reward values are binary and sparse, return curves for AWR and CQL are not as informative, since they attain a $0$ return in any episode, even if they make some progress. Therefore, we present the results in the form of state-visitation density plots under rollouts from the learned policy. Observe in Figures~\ref{fig:awr_figure} and \ref{fig:cql_figure} that neither AWR or CQL are able to actually successfully traverse the maze, and get stuck in it. Note that $\tau$ and $\alpha$ control the strength of the distributional constraint in the methods AWR and CQL respsectively. Varying the temperature hyperparameter $\tau$ for AWR and $\alpha$ for CQL does modify the density in the narrow hallways, but utilizing too small of a parameter leads the agent to more frequent crashes in the narrow hallways. This makes the agent spend a higher fraction of its visitation in one such narrow region, whereas a higher $\tau$ or $\alpha$ does not even reach the third narrow hallway with a high-enough visitation.

\begin{figure*}[ht]
\centering
\includegraphics[width=0.3\textwidth]{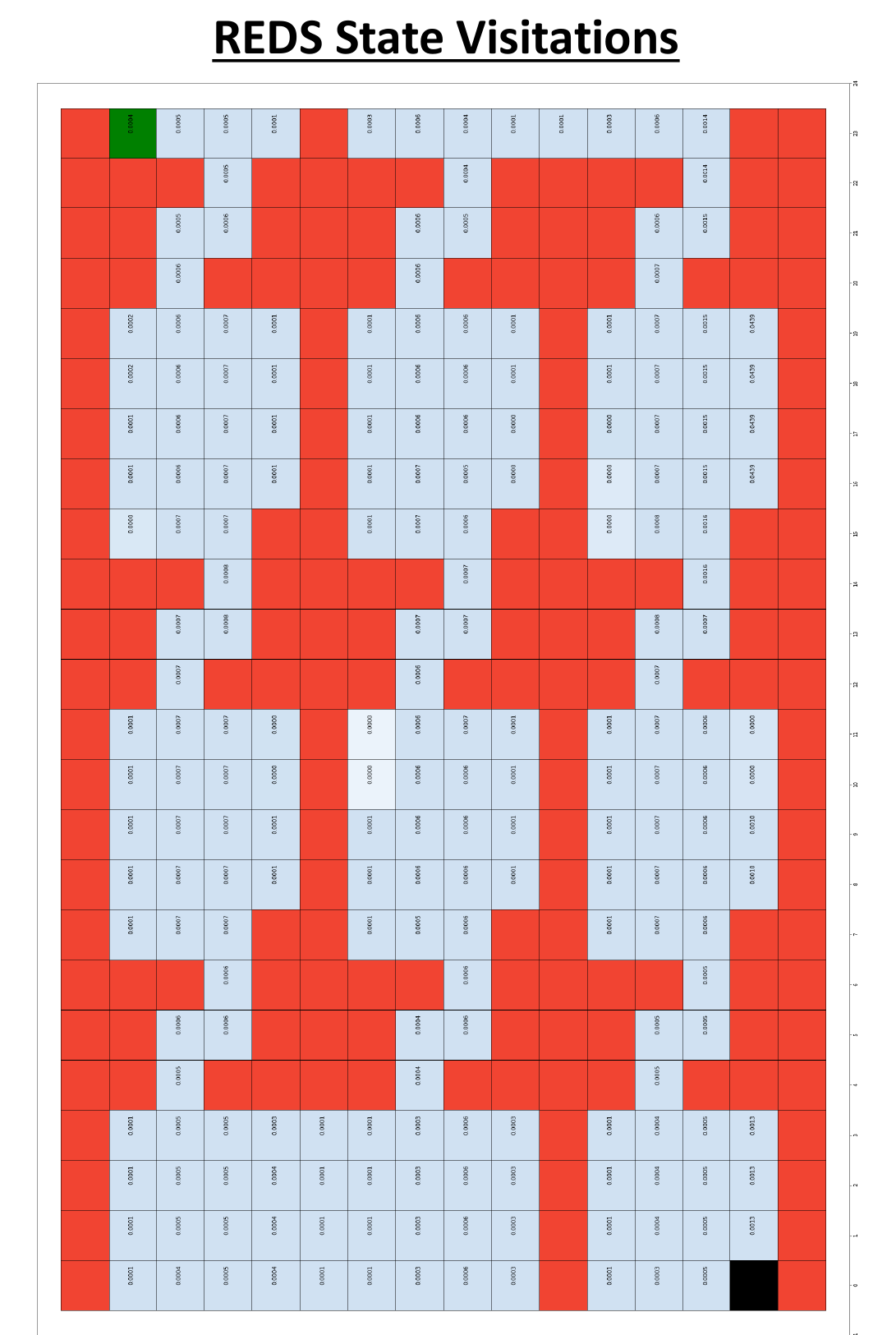}
\caption{\label{fig:reds_figure} \textbf{Results of running CQL (ReDS) on the gridworld.} Note that CQL (ReDS) does actually succeed at solving the task.} 
\end{figure*}

On the other hand, the method we propose in this paper, ReDS, when combined with CQL is able to successfully traverse this maze, as shown in Figure~\ref{fig:reds_figure}.

\textbf{Analysis.} These experiments are consistent with our expectations. When distributional constraints are faced with highly heteroskedastic action distributions at subsequent states, they failed to perform good actions at these consecutive states. A strong distributional constraint leads to the agent being too close to the behavior policy, whereas a weaker constraint fails to identify good actions at states where the behavior policy is narrow.
% \vspace{-0.3cm}

\vspace{-0.2cm}
\subsection{\rebuttal{Why and When Do Distributional Constraints Fail in scenarios with Heteroskedastic Data?}}
\label{app:distributional_stitching}
\vspace{-0.2cm}

\rebuttal{In this section, we shall discuss why distributional constraints are especially worse than support constraints in scenarios with heteroskedastic data. Let's first consider a simple single-state bandit problem. In this simple one-state problem, we must find a single optimal action by using offline data. At first, it might appear that when faced with a wide behavior policy, distributional constraints would clearly fail since they would not deviate far away from the behavior policy. However, note that by carefully choosing the strength of the distributional constraint, we can, in principle, control the strength of the distributional constraint quite effectively. To see this concretely, consider applying CQL (Equation~\ref{eqn:cql_training}) to a single-state bandit problem. The Q-values on actions not observed in the training dataset will be clearly pushed down to $-\infty$. In addition, choosing a non-zero $\alpha$ will allow us to precisely control how close the action taken by the learned policy is to the best in-support action vs how close the learned policy is to the behavior policy. Therefore, for an optimally chosen value of $\alpha$, distributional constraint methods should already work well in the single-state setting.}

\rebuttal{However, the precise challenge with distributional constraints arises in the multi-state setting, where finding a single value of $\alpha$ that can work well at all states might be can be exceedingly challenging in practice. This is the setting with heteroskedastic data that our paper considers, including in our didactic example above. 
In such a setting, a strong distributional constraint is able to take a desirable action when the behavior policy is narrow (for example in the narrow room). However, the action distribution has wider support in the wider room and the strong distributional constraint would not take a desirable action in this setting due to the high entropy of the behavioral distribution in this setting. In contrast, with a weaker distributional constraint, a bad out-of-distribution action may be taken where the behavior policy is narrow which can lead to instability in the policy.
Note that these challenges are not, however, present in the single-state scenario.} Our method CQL (\methodname) can tackle the problem in this setting by modulating the strength of the constraint per state. This allows the agent to stay close to the behavior distribution in the narrow room as well as learning to output the most desirable in-support action in the wider room. 

\subsection{{Heteroskedastic Data vs Exploratory Data}}
{
There is a distinction between exploratory data~\citep{yarats2022don} and heteroskedasticity. In the context of offline RL, exploratory data typically refers to being able to cover \textbf{all} possible state-action pairs, as uniformly as possible. On the other hand, heteroskedasticity is used to refer to \textbf{disproportionate} coverage: when the action distribution in the offline data is more uniform at certain states, but very narrow at others. Thus, a heteroskedastic dataset does not have high coverage in the sense of observing all possible state-action pairs.} 

{Mathematically, while the notion of coverage is typically quantified by the concentrability coefficient which denotes the \textbf{worst case} density ratio between the distribution of the training dataset and the distribution of the learned policy, as shown below:
\begin{align*}
    C^\pi = \max_{\bs, \ba}~~ \frac{d^\pi(\bs, \ba)}{\mu(\bs, \ba)},
\end{align*}
where $\mu(\bs, \ba)$ denotes the state-action distribution induced by the dataset, while $d^\pi(\bs, \ba)$ denotes the state-action visitation distribution of a policy $\pi$. $\max_{\pi} C^\pi$ is generally expected to be smaller when $\mu(\bs, \ba)$ is more uniform.
In contrast, we quantify the notion of heteroskedasticity by the variation in density ratios (in this case, action density ratios, $\frac{\pi(\ba|\bs)}{\pi_\beta(\ba|\bs)}$) across the state space in Equation~\ref{eqn:diff_concentrability} (reproduced below for convenience). 
\begin{align*}
C^\pi_\text{diff} = \underset{\bs_1, \bs_2 \sim d^\pi}{\E} \left[ \left( \sqrt{\frac{
% D(\pi(\cdot|\bs_1) || \behavior(\cdot|\bs_1))
D(\pi, \behavior)(\bs_1)
}{\mu(\bs_1)}
} - \sqrt{\frac{
% D(\pi(\cdot|\bs_2) || \behavior(\cdot|\bs_2))
D(\pi, \behavior)(\bs_2)
}{\mu(\bs_2)}} \right)^2  \right].
\end{align*}
Even if the dataset has high coverage (i.e., it has a low concentrability coefficient, $C^\pi$), it can attain a higher $C^\pi_\text{diff}$, because of higher variation in the density ratios across state spaces.}  

{\textbf{Out of all methods considered in \citet{yarats2022don}}, we note that the methods classified as ``Data'' (APT, ProtoRL) attempt to maximize state coverage, the methods classified as ``Knowledge'' (ICM, Disagreement, RND) attempt to find surprising state-action pairs and visit them, and the methods classified as ``Competence'' (SMM, DIAYN, APS) that maximize the mutual information between the state distribution attained by the learned policy and the learned skills also implicitly optimize for state coverage. In effect, all of these methods aim to visit all state-action pairs (or all states), which is akin to the standard definition of concentrability, whereas heteroskedastic data corresponds distinctly to disproportionate action coverage at different states.}

\section{Proofs}
\label{app:proofs}
\vspace{-0.3cm}
 % reword slightly
In this appendix, we will provide proofs for the various theoretical results in the main paper: Theorem~\ref{thm:distributional_constraints}. We will first discuss some preliminaries and notation, then present the proof for Theorem~\ref{thm:distributional_constraints}, and finally the remaining results.

\subsection{Notation and Preliminaries}
Let $\pi_\beta(\ba|\bs)$ denote the behavior policy. Note that the dataset, $\mathcal{D}_i$ 
is generated from the marginal state-action distribution of $\pi_\beta$, i.e., $\mathcal{D} \sim d^{\pi_\beta}(\bs) \pi_\beta(\ba|\bs)$. Define $\widehat{d}^{\pi}$ as the state marginal distribution introduced by $\pi$ under the empirical MDP defined by the transitions in the dataset. Let $D_\text{CQL}(p, q)$ denote the following distance between two distributions $p(\bx)$ and $q(\bx)$ with equal support $\mathcal{X}$:
\begin{equation*}
    D_\text{CQL}(p, q) := \sum_{\bx \in \mathcal{X}} p(\bx) \left(\frac{p(\bx)}{q(\bx)} - 1 \right).
\end{equation*}
We drop the subsrcipt ``CQL'' from $D_\text{CQL}$ for clarity. \citep{kumar2020conservative} showed that when optimizing the generic distributional constraint algorithm shown in Equation~\ref{eqn:generic_distributional_constraint}, the resulting policy $\pi^*$ attains a high probability safe-policy improvement guarantee, i.e., $J(\pi^*) \geq J(\pi_\beta) - \zeta$, where $\zeta$ is:
\begin{equation}
    \label{eqn:single_task_guarantee}
    \zeta =  \mathcal{O}\left(\frac{1}{(1 - \gamma)^2}\right) \mathbb{E}_{\bs \sim \widehat{d}^{\pi^*}}\left[\sqrt{\frac{D(\pi^*, \pi_\beta)(\bs) + 1}{|\mathcal{D}(\bs)|}} \right] + \frac{\alpha}{1 - \gamma} \mathbb{E}_{\bs \sim \widehat{d}^\pi}[D(\pi^*, \pi_\beta)(\bs)].
\end{equation}
We can further express $|\mathcal{D}(\bs)| = |\mathcal{D}| |\mu(\bs)|$. The first term in Equation~\ref{eqn:single_task_guarantee} corresponds to the decrease in performance due to sampling error and this term is high when the learned policy $\pi^*$ visits low density states under the dataset distribution (i.e., $\mu(\bs)$ is small) and when the divergence from the behavior policy $\pi_\beta$ is higher under these states. We will use this safe policy improvement guarantee in our proofs.

\subsection{{Formal Restatement of Theorem~\ref{thm:distributional_constraints}}}

{
\begin{theorem}[Formal version of Theorem~\ref{thm:distributional_constraints}]
For any prescribed level of safety $\zeta$, i.e., the learned policy satisfies $J(\pi) - J(\pi_\beta) \geq -\zeta$, the maximum possible policy improvement over choices of $\alpha$: 
\begin{align*}
\max_{\alpha} \left[ J(\pi_\alpha) - J(\pi_\beta) \right] \leq \zeta^+,
\end{align*}
where $\zeta^+$ is given by:
% \vspace{-0.4cm}
\begin{align}
\zeta^+ := \max_{\alpha} ~~~ h^*\left(\alpha\right) \cdot \frac{1}{(1 - \gamma)^2}
~~~~~\text{s.t.}~~ \frac{c_1 \sqrt{\log \frac{|\mathcal{S}| | \mathcal{A}|}{\delta}}}{(1 - \gamma)^2} \frac{\sqrt{C^{\pi_\alpha}_\text{diff}}}{|\mathcal{D}|} - \zeta  \leq \underbrace{\alpha \cdot \left(\frac{\mathbb{E}_{\bs \sim \widehat{d}^{\pi_\alpha}}[D(\pi_\alpha, \behavior)(\bs)]}{1 - \gamma} \right)}_{:= g(\alpha)} 
    % \nonumber
    \label{eq:safety_constraint}
\end{align}
where $h^*$ is a monotonically decreasing function of $\alpha$, and $h(0) = \text{const}$.
\end{theorem}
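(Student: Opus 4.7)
The starting point is the single-task safe policy improvement guarantee for distributional constraint offline RL methods, reproduced as Equation~\ref{eqn:single_task_guarantee} in the Preliminaries: for the policy $\pi_\alpha$ obtained by optimizing Equation~\ref{eqn:generic_distributional_constraint}, one has $J(\pi_\alpha) \geq J(\pi_\beta) - \zeta$ with $\zeta$ consisting of a sampling-error term $\mathcal{O}((1-\gamma)^{-2})\,\mathbb{E}_{\bs\sim\widehat{d}^{\pi_\alpha}}\bigl[\sqrt{(D(\pi_\alpha,\pi_\beta)(\bs)+1)/|\mathcal{D}(\bs)|}\bigr]$ and an optimization-gain term proportional to $\tfrac{\alpha}{1-\gamma}\,\mathbb{E}_{\bs\sim\widehat{d}^{\pi_\alpha}}[D(\pi_\alpha,\pi_\beta)(\bs)]$. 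Rearranging this as ``sampling error $-$ optimization gain $\leq \zeta$'' yields exactly the safety constraint appearing in Equation~\ref{eq:safety_constraint}, modulo the replacement of the pointwise sampling-error expression by the $\sqrt{C^{\pi_\alpha}_{\mathrm{diff}}}/|\mathcal{D}|$ quantity.

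The first technical step is to carry out that replacement. Setting $X(\bs):=\sqrt{D(\pi_\alpha,\pi_\beta)(\bs)/\mu(\bs)}$, the definition of differential concentrability gives $C^{\pi_\alpha}_{\mathrm{diff}}=2\,\mathrm{Var}_{d^{\pi_\alpha}}(X)$. Using $|\mathcal{D}(\bs)|=|\mathcal{D}|\cdot\mu(\bs)$ we rewrite the sampling error as $\tfrac{1}{\sqrt{|\mathcal{D}|}}\,\mathbb{E}_{\bs\sim\widehat{d}^{\pi_\alpha}}[X(\bs)]$ (ignoring the additive $+1$ constant, which is absorbed into the constants). To pass from the empirical expectation to a bound involving the variance of $X$, we use a Bernstein-type concentration inequality on $\widehat{d}^{\pi_\alpha}$ together with a union bound over state-action pairs, picking up the $\sqrt{\log(|\mathcal{S}||\mathcal{A}|/\delta)}$ factor stated in the theorem; this yields the $(c_1\sqrt{\log(|\mathcal{S}||\mathcal{A}|/\delta)}/(1-\gamma)^2)\cdot \sqrt{C^{\pi_\alpha}_{\mathrm{diff}}}/|\mathcal{D}|$ expression. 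The rearrangement then produces Equation~\ref{eq:safety_constraint} exactly.

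The second step is to derive the upper bound $h^*(\alpha)/(1-\gamma)^2$ on the realized improvement $J(\pi_\alpha)-J(\pi_\beta)$. Since $\pi_\alpha$ is the maximizer of $\widehat{J}(\pi)-\alpha\,\mathbb{E}_{\bs\sim\widehat{d}^\pi}[D(\pi,\pi_\beta)(\bs)]$, a performance-difference-lemma argument bounds $J(\pi_\alpha)-J(\pi_\beta)\leq \tfrac{R_{\max}}{(1-\gamma)^2}\,\mathbb{E}_{\bs\sim d^{\pi_\alpha}}\bigl[\|\pi_\alpha(\cdot|\bs)-\pi_\beta(\cdot|\bs)\|_1\bigr]$, and by the KKT conditions of the penalized optimization the admissible $D$-divergence from $\pi_\beta$ shrinks monotonically in $\alpha$. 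Controlling $\|\pi_\alpha-\pi_\beta\|_1$ by a (Pinsker-like) function of $D(\pi_\alpha,\pi_\beta)$ then produces a monotonically decreasing function $h^*(\alpha)$ with $h^*(0)=\mathcal{O}(1)$ (the trivial gap bound) and $h^*(\alpha)\to 0$ as $\alpha\to\infty$ (since $\pi_\alpha\to\pi_\beta$). Combining this upper bound with the feasibility region cut out by the safety constraint gives the constrained maximization displayed in the theorem.

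\paragraph{Main obstacle.} The principal technical difficulty is the first step: relating the pointwise sampling-error expectation $\mathbb{E}_{\bs}[\sqrt{D(\pi_\alpha,\pi_\beta)(\bs)/|\mathcal{D}(\bs)|}]$, which depends on the \emph{mean} of $X$, to $\sqrt{C^{\pi_\alpha}_{\mathrm{diff}}}$, which is a \emph{variance}-like quantity. The bound cannot come from Jensen's inequality alone and must instead be obtained by a high-probability concentration argument (union-bounded over $\mathcal{S}\times\mathcal{A}$), in which the inverse-density reweighting in the definition of $C^{\pi_\alpha}_{\mathrm{diff}}$ correctly captures how heteroskedasticity inflates the worst-case sampling error across states. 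A subsidiary difficulty is giving $h^*(\alpha)$ an explicit monotonicity-in-$\alpha$ form that is tight enough to recover $h^*(0)=\mathcal{O}(1)$ at one end while vanishing as $\alpha\to\infty$; this requires carefully tracking how the KKT optimality conditions of Equation~\ref{eqn:generic_distributional_constraint} translate increases in $\alpha$ into shrinkage of $D(\pi_\alpha,\pi_\beta)$ uniformly over states.
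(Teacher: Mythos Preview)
Your overall architecture is right: derive the safety constraint from the safe-policy-improvement bound, upper bound the realized improvement by a divergence quantity, and show that divergence quantity shrinks monotonically in $\alpha$. But the execution of the first step is off, and the second step takes an unnecessary detour.

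\textbf{First step (the main gap).} You correctly identify that the crux is passing from the sampling-error expectation $\mathbb{E}_{\bs}\bigl[\sqrt{D(\pi_\alpha,\pi_\beta)(\bs)/\mu(\bs)}\bigr]$ to the pairwise-difference quantity $C^{\pi_\alpha}_{\mathrm{diff}}$, but your proposed mechanism is wrong. There is no new concentration step here at all: the $\sqrt{\log(|\mathcal{S}||\mathcal{A}|/\delta)}$ factor is already present in the safe-policy-improvement bound of Equation~\ref{eqn:single_task_guarantee} (it is hidden in the $\mathcal{O}(\cdot)$), so a Bernstein argument plus union bound is neither needed nor applicable. More importantly, a concentration inequality controls the deviation of an empirical average from its mean; it does not, by itself, give an inequality between a \emph{mean} of $\sqrt{X}$ and the \emph{pairwise-difference} functional $\mathbb{E}_{\bs_1,\bs_2}[(\sqrt{X(\bs_1)}-\sqrt{X(\bs_2)})^2]$. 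The paper instead uses a purely deterministic, elementary inequality (Lemma~\ref{lemma:variation}): for non-negative reals $x_1,\dots,x_N$,
\[
\Bigl(\textstyle\sum_i \sqrt{x_i}\Bigr)^2 \;\geq\; \sum_i x_i \;\geq\; \frac{1}{N-1}\sum_{i<j}\bigl(\sqrt{x_i}-\sqrt{x_j}\bigr)^2,
\]
applied with $x_i = D(\pi_\alpha,\pi_\beta)(\bs_i)/\mu(\bs_i)$ and then passed to expectation form under $\widehat{d}^{\pi_\alpha}$. This directly lower-bounds the squared sampling-error expectation by $C^{\pi_\alpha}_{\mathrm{diff}}$ and, after plugging back into Equation~\ref{eqn:single_task_guarantee}, produces the constraint in Equation~\ref{eq:safety_constraint}. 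So the obstacle you flagged is resolved by elementary algebra, not probability.

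\textbf{Second step.} Your route via the performance-difference lemma, the total-variation distance, and a Pinsker-type bound is unnecessarily indirect and has a type mismatch: the divergence $D$ here is $D_{\mathrm{CQL}}(p,q)=\sum_x p(x)\bigl(p(x)/q(x)-1\bigr)$, not the KL divergence, so Pinsker does not apply as stated. The paper instead proves a direct comparison (Lemma~\ref{lemma:improvement}) showing $\mathbb{E}_{x\sim p}[\nu(x)]-\mathbb{E}_{x\sim q}[\nu(x)]\le \nu_0\cdot D_{\mathrm{CQL}}(p,q)$ for any bounded $\nu$, by a case analysis on the sign of $\nu$ and of $p/q-1$. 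This yields $J(\pi_\alpha)-J(\pi_\beta)\lesssim (1-\gamma)^{-2}\,\mathbb{E}_{\bs\sim d^{\pi_\alpha}}[D(\pi_\alpha,\pi_\beta)(\bs)]\cdot R_{\max}$ without passing through TV.

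\textbf{Third step.} Your KKT-style argument is essentially the same idea as the paper's, which isolates it as an abstract monotonicity lemma (Lemma~\ref{lemma:alpha_lemma}): for $\max_x f(x)+\alpha g(x)$ with $g\ge 0$, the optimal value $g(x^*_\alpha)$ is nondecreasing in $\alpha$. Applied with $g(\pi)=-\mathbb{E}_{\bs}[D(\pi,\pi_\beta)(\bs)]$, this gives the monotone decrease of $\mathbb{E}_{\bs}[D(\pi_\alpha,\pi_\beta)(\bs)]$ in $\alpha$, and hence the existence of the decreasing $h^*(\alpha)$. No KKT machinery is needed; the two-line ``swap optimizers'' trick suffices.
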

}

{\textbf{Intuition behind the theorem:} We will show in the proof of this theorem that $g(\alpha)$ is a monotonically increasing function of $\alpha$ and $g(0) = 0$. This means that if the value of $C^\pi_\text{diff}$ is large for all policies $\pi$, the value of $\alpha = g^{-1} \left( \sqrt{C^\pi_\text{diff}} c' - \zeta \right)$ (where $c'$ is the constant and subsumes the term containing $\delta$) is also large. Further, note that $h^*(\alpha)$ is decreasing in $\alpha$. This means that the larger the value of $\alpha$, the smaller the value of $h^*(\alpha)$, and hence, smaller the value of $\zeta^+$. This means that as $C^\pi_\text{diff}$ increases, $\alpha$ needs to take larger values to satisfy the safety constraint, and this reduces the value of maximal improvement, $\zeta^+$.} 

{Conversely, to attain a larger improvement $\zeta^+$, if we choose a smaller $\alpha$ for a problem where $C^\pi_\text{diff}$ is large for all policies, then, we must give up on the safety guarantee, and $\zeta$ would be large. This means that the learned policy $\pi_\alpha$ might substantially degrade beyond the behavior policy.}

\subsection{Proof of Theorem~\ref{thm:distributional_constraints}}
\label{app:proofs_1}
In order to prove this result, we we will utilize a basic algebraic inequality mentioned below in Lemma~\ref{lemma:variation}.

\begin{lemma}[Algebraic variation]
\label{lemma:variation}
Given any $N$ positive real numbers, $x_1, x_2, \cdots, x_N$:
\begin{align}
    \left(\sum_{i=1}^N \sqrt{x_i}\right)^2 \geq \sum_{i=1}^N x_i \geq \frac{1}{(N-1)} \sum_{i < j} (\sqrt{x_i} - \sqrt{x_j})^2.
\end{align}
\end{lemma}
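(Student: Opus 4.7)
The plan is to prove both inequalities in Lemma~\ref{lemma:variation} by direct algebraic expansion, since each reduces to a manifestly non-negative quantity once the squares are expanded. I would not look for any slick Cauchy--Schwarz or convexity argument; the bound is tight enough that straightforward bookkeeping of cross terms will suffice.

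For the left inequality, I would start by expanding
\begin{equation*}
\Bigl(\sum_{i=1}^N \sqrt{x_i}\Bigr)^2 = \sum_{i=1}^N x_i + 2\sum_{i<j} \sqrt{x_i x_j},
\end{equation*}
and observe that since each $x_i > 0$, the cross term $2\sum_{i<j}\sqrt{x_i x_j}$ is non-negative. Subtracting $\sum_i x_i$ from both sides immediately yields the desired inequality. This step is essentially one line of algebra.

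For the right inequality, I would expand the square in the right-hand side as
\begin{equation*}
\sum_{i<j} (\sqrt{x_i} - \sqrt{x_j})^2 = \sum_{i<j} (x_i + x_j) - 2\sum_{i<j} \sqrt{x_i x_j} = (N-1)\sum_{i=1}^N x_i - 2\sum_{i<j}\sqrt{x_i x_j},
\end{equation*}
where the coefficient $(N-1)$ on $\sum_i x_i$ comes from the combinatorial fact that each index $i$ appears in exactly $N-1$ unordered pairs. Dividing by $N-1$ and rearranging reduces the claimed inequality to the statement $\frac{2}{N-1}\sum_{i<j}\sqrt{x_i x_j} \geq 0$, which again holds because all $x_i$ are positive.

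The two halves together form the lemma, and there is no real obstacle: the only thing to be careful about is the combinatorial coefficient $(N-1)$ on the diagonal terms when expanding the pairwise square, which is the one place a miscount could occur. I would therefore state that counting step explicitly when writing out the proof. No tightness or case analysis is needed, and the lemma follows purely from the positivity of the $x_i$.
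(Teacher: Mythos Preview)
Your proposal is correct and follows essentially the same approach as the paper: both proofs expand the pairwise square, use the combinatorial fact that each index appears in exactly $N-1$ unordered pairs to get the coefficient $(N-1)\sum_i x_i$, and conclude from the non-negativity of the cross terms $\sum_{i<j}\sqrt{x_i x_j}$. The only cosmetic difference is that the paper first substitutes $y_i = \sqrt{x_i}$ before carrying out the identical computation.
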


\begin{proof}
For every $x_i$, define $y_i = \sqrt{x_i}$. Then, the difference between the two sides in the equation above is given by:
\begin{align}
    \sum_{i} y_i^2 - \frac{1}{(N-1)} \sum_{i < j} (y_i^2 + y_j^2 - 2 y_i y_j) &= \sum_i y_i^2 - \frac{N-1}{N-1} y_i^2 + \frac{1}{N-1} \sum_{i < j} 2y_i y_j \\
    &= \frac{1}{N-1} \sum_{i < j} 2y_i y_j \geq 0,
\end{align}
where the first step follows by rearranging $y_i$ and $y_j$, and the final step follows by noting that $y_i \geq 0$ for all $i$. The other inequality follows trivially by noting that $\sqrt{x_i}$ are positive, and applying the standard formula for sum of squares. 
\end{proof}

We will also require a Lemma that allows us to upper bound the performance difference $J(\pi) - J(\behavior)$ in terms of the metric $D_\text{CQL}(\pi, \behavior)$ that appears in the safe policy improvement guarantee in Equation~\ref{eqn:single_task_guarantee}. 

\begin{lemma}[Tight upper bound on policy improvement.]
\label{lemma:improvement}
Assume that the reward function $r(\bs, \ba)$ of the MDP is bounded such that $\forall \bs, \ba, r(\bs, \ba) \in [-R_{\max}, R_{\max}]$. For any two policies $\pi$ and $\behavior$, we have the following:
\begin{align}
    J(\pi) - J(\behavior) \lesssim \mathcal{O}\left(\frac{1}{(1 - \gamma)^2} \right) \cdot \mathbb{E}_{\bs \sim d^\pi}\left[D(\pi(\cdot|\bs), \behavior(\cdot|\bs))\right] \cdot R_{\max}. 
\end{align}
\end{lemma}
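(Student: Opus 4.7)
The natural starting point is the Kakade–Langford performance difference lemma, which gives
\[
J(\pi) - J(\behavior) \;=\; \frac{1}{1-\gamma}\,\mathbb{E}_{\bs \sim d^\pi}\!\left[\sum_{\ba} \bigl(\pi(\ba|\bs) - \behavior(\ba|\bs)\bigr)\, Q^{\behavior}(\bs,\ba)\right].
\]
So first I would invoke this identity, and then focus on bounding the inner ``signed expectation'' at each state $\bs$ by something involving $D(\pi,\behavior)(\bs)$ and a factor of $R_{\max}/(1-\gamma)$ coming from $\|Q^{\behavior}\|_\infty$. Note one may freely subtract $V^{\behavior}(\bs)$ inside the sum (it integrates against $\pi - \behavior$ to zero), which one may exploit later to replace $R_{\max}/(1-\gamma)$ by the spread of the advantages if a tighter version is wanted; but for the stated bound the crude $\|Q^{\behavior}\|_\infty \le R_{\max}/(1-\gamma)$ is enough.

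The next step is the key algebraic maneuver: rewrite the per-state inner sum via the likelihood ratio $r(\ba) := \pi(\ba|\bs)/\behavior(\ba|\bs)$ as
\[
\sum_{\ba} \behavior(\ba|\bs)\,(r(\ba)-1)\,Q^{\behavior}(\bs,\ba),
\]
and apply Cauchy–Schwarz with respect to the measure $\behavior(\cdot|\bs)$. This gives
\[
\left|\sum_{\ba} \behavior(\ba|\bs)(r(\ba)-1)\,Q^{\behavior}(\bs,\ba)\right|
\;\le\; \sqrt{\sum_{\ba} \behavior(\ba|\bs)(r(\ba)-1)^2}\;\cdot\;\frac{R_{\max}}{1-\gamma}.
\]
The crucial observation is that the first factor is exactly $\sqrt{D(\pi,\behavior)(\bs)}$: expanding $\sum_{\ba}\behavior(\ba|\bs)(r(\ba)-1)^2 = \sum_{\ba} \pi(\ba|\bs)^2/\behavior(\ba|\bs) - 1$, which coincides with the CQL divergence $D(\pi,\behavior)(\bs) = \sum_{\ba}\pi(\ba|\bs)(\pi(\ba|\bs)/\behavior(\ba|\bs) - 1)$ after using $\sum_{\ba}\pi(\ba|\bs)=1$.

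Putting the two steps together yields
\[
J(\pi) - J(\behavior) \;\le\; \frac{R_{\max}}{(1-\gamma)^2}\;\mathbb{E}_{\bs \sim d^\pi}\!\left[\sqrt{D(\pi,\behavior)(\bs)}\right],
\]
which is the essential estimate. To match the exact form in the lemma statement, I would conclude with the elementary inequality $\sqrt{x} \le \tfrac{1}{2}(1+x)$ (or equivalently Jensen plus absorbing an additive constant into the $\mathcal{O}(\cdot)$) to replace $\mathbb{E}[\sqrt{D}]$ by an affine function of $\mathbb{E}[D]$, so the bound becomes linear in the expected divergence up to the hidden constants in $\lesssim$. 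The only real obstacle is this last conversion: the chi-squared-style $D$ gives a square-root TV bound naturally, and writing the statement linearly in $\mathbb{E}[D]$ implicitly requires either $\mathbb{E}[D] \gtrsim 1$ or treating an additive $R_{\max}/(1-\gamma)^2$ term as part of the $\mathcal{O}$-notation; I would flag this explicitly in the writeup rather than hide it.
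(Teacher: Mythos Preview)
Your approach is correct and clean, but it differs from the paper's route. Both start from the performance-difference identity (so the outer $\frac{1}{1-\gamma}$ factor and the expectation over $d^\pi$ arise the same way), but you then apply Cauchy--Schwarz in the $\behavior(\cdot|\bs)$ inner product to turn the signed sum into $\|Q^{\behavior}\|_\infty\cdot\sqrt{D(\pi,\behavior)(\bs)}$, which is the standard chi-squared/TV relationship. The paper instead tries to obtain a bound \emph{linear} in $D$ directly: it writes $\Delta_\nu(p,q)=\sum_x q(x)\bigl(\tfrac{p(x)}{q(x)}-1\bigr)\nu(x)$, partitions the sum into four regions according to the signs of $\tfrac{p}{q}-1$ and of $\nu$, and on each region multiplies by a factor such as $\tfrac{p}{q}$ or $1+\tfrac{p}{q}$ (which is $\ge 1$ and preserves the inequality on that region) before replacing $\nu$ by $\|\nu\|_\infty$. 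Recombining the pieces they claim $\Delta_\nu(p,q)\le \|\nu\|_\infty\cdot D(p,q)$ without any square root.

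What this buys: the paper's case analysis is designed precisely to sidestep the issue you flag at the end, namely that Cauchy--Schwarz naturally produces $\sqrt{D}$ and converting to a linear-in-$D$ statement costs an additive $\mathcal{O}(1)$ that has to be absorbed into the $\lesssim$. Your argument is shorter and uses only a single classical inequality, and your honest remark about the $\sqrt{D}\to D$ step is exactly the right caveat; the paper's argument is more hands-on but, if it goes through, lands directly on the linear form stated in the lemma. Either way the dependence on $R_{\max}/(1-\gamma)^2$ is identical, coming from $\|Q^{\behavior}\|_\infty\le R_{\max}/(1-\gamma)$ combined with the outer $1/(1-\gamma)$.
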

\begin{proof}
The core of the proof of this lemma relies on the fact that for any given function $\nu(\bx)$ over some space $\bx$, we can upper bound, $\Delta_\nu(p, q) := \mathbb{E}_{\bx \sim p}[\nu(\bx)] - \mathbb{E}_{\bx \sim q}[\nu(\bx)]$ in terms of $D(p, q)$. To show this, we expand this expression:   
\begin{align}
    \!\!\!\!\!\!\Delta_\nu(p, q) &:= \sum_{\bx} (p(\bx) - q(\bx)) \cdot \nu(\bx)\\
    &= \sum_{\bx} q(\bx) \cdot \left(\frac{p(\bx)}{q(\bx)} - 1 \right) \cdot \nu(\bx)\\
    &= \sum_{\bx: \frac{p(\bx)}{q(\bx)} \geq 1, \nu(\bx) \geq 0}  q(\bx)  \left(\frac{p(\bx)}{q(\bx)} - 1 \right) \nu(\bx) + \sum_{\bx: \frac{p(\bx)}{q(\bx)} < 1, \nu(\bx) \geq 0} q(\bx) \left(\frac{p(\bx)}{q(\bx)} - 1 \right) \nu(\bx) \nonumber\\
    &~~~+  \sum_{\bx: \frac{p(\bx)}{q(\bx)} \geq 1, \nu(\bx) \leq 0}  q(\bx)  \left(\frac{p(\bx)}{q(\bx)} - 1 \right) \nu(\bx) + \sum_{\bx: \frac{p(\bx)}{q(\bx)} < 1, \nu(\bx) \leq 0} q(\bx) \left(\frac{p(\bx)}{q(\bx)} - 1 \right) \nu(\bx). \nonumber
\end{align}
Each of the four terms above can be bounded independently as follows: for the first two terms, we multiply by $\frac{p(\bx)}{q(\bx)}$, the third term is clearly negative, and the final term is upper bounded by multiplying by $1 + \frac{p(\bx)}{q(\bx)}$:
\begin{align}
    \sum_{\bx: \frac{p(\bx)}{q(\bx)} \geq 1, \nu(\bx) \geq 0}  q(\bx)  \left(\frac{p(\bx)}{q(\bx)} - 1 \right) \nu(\bx) &\leq  \sum_{\bx: \frac{p(\bx)}{q(\bx)} \geq 1, \nu(\bx) \geq 0}  q(\bx) \textcolor{red}{\frac{p(\bx)}{q(\bx)}}  \left(\frac{p(\bx)}{q(\bx)} - 1 \right) \nu(\bx)\\
    \sum_{\bx: \frac{p(\bx)}{q(\bx)} < 1, \nu(\bx) \geq 0} q(\bx) \left(\frac{p(\bx)}{q(\bx)} - 1 \right) \nu(\bx) &\leq \sum_{\bx: \frac{p(\bx)}{q(\bx)} \leq 1, \nu(\bx) \geq 0}  q(\bx) \textcolor{red}{\frac{p(\bx)}{q(\bx)}}  \left(\frac{p(\bx)}{q(\bx)} - 1 \right) \nu(\bx)\\
    \sum_{\bx: \frac{p(\bx)}{q(\bx)} \geq 1, \nu(\bx) \leq 0}  q(\bx)  \left(\frac{p(\bx)}{q(\bx)} - 1 \right) \nu(\bx) &\leq 0 \label{eqn:bounded_by_0}\\
    \sum_{\bx: \frac{p(\bx)}{q(\bx)} < 1, \nu(\bx) \leq 0} q(\bx) \left(\frac{p(\bx)}{q(\bx)} - 1 \right) \nu(\bx) &\leq \sum_{\bx: \frac{p(\bx)}{q(\bx)} < 1, \nu(\bx) \leq 0} q(\bx) \left(\frac{p(\bx)}{q(\bx)} - 1 \right) \textcolor{red}{\left(1 + \frac{p(\bx)}{q(\bx)} \right)} \nu(\bx).
\end{align}
Finally, for each of these terms, we can now upper bound $\nu(\bx)$ by its maximum absolute value, $|\nu(\bx)| \leq \nu_0$, and combine the terms to get the following bound on $\Delta_\nu(p, q)$:
\begin{align}
    \Delta_\nu(p, q) &\leq \nu_0 \sum_{\bx: \nu(\bx) > 0} p(\bx) \left(\frac{p(\bx)}{q(\bx)} - 1 \right) + 0 + \nu_0 \sum_{\bx: \frac{p(\bx)}{q(\bx)} < 1, \nu(\bx) < 0} \left(\frac{p^2(\bx)}{q(\bx)} - q(\bx) \right) \\
    &\leq \nu_0 \left[ \sum_{\bx} \frac{p^2(\bx)}{q(\bx)} - 1 \right] \label{eqn:final},
\end{align}
where Equation~\ref{eqn:final} follows from using the fact that for the case when $p(\bx)/q(\bx) > 1$ but $\nu(\bx) \leq 0$, $p(\bx) \left(\frac{p(\bx)}{q(\bx)} - 1 \right) > 0$, and hence it upper bounds the RHS of Equation~\ref{eqn:bounded_by_0}. For the last case, where $\bx: \frac{p(\bx)}{q(\bx)} < 1, \nu(\bx) < 0$, we note that $\sum_{\bx: \frac{p(\bx)}{q(\bx)} < 1, \nu(\bx) < 0} q(\bx) \geq \sum_{\bx: \frac{p(\bx)}{q(\bx)} < 1, \nu(\bx) < 0} p(\bx)$, and hence the upper bound on this term in Equation~\ref{eqn:final} follows. To complete the argument note that $D_\text{CQL}$ exactly takes the form obtained in the final equation, and hence:
\begin{align*}
    \Delta_\nu(p, q) \leq \nu_0 \cdot D(p, q).
\end{align*}
We can now use this result to bound the return differences, by using standard results for bounding the performance difference between policies~\citep{achiam2017constrained,schulman2015trust} in terms of $\frac{1}{(1 - \gamma)^2} \times D(\pi, \behavior)$. At the core of these results is a bound on $\mathbb{E}_{\ba \sim \pi(\cdot|\bs)}[f(\bs, \ba)] - \mathbb{E}_{\ba \sim \behavior(\cdot|\bs)}[f(\bs, \ba)]$, and hence the result proven above directly applies. This proves the required result. 
\end{proof}

\subsection{{Additional Technical Lemmas}}
{In this section, we will provide proofs of two technical lemmas, that allow us to conclude the proof of Theorem~\ref{thm:distributional_constraints}. For these lemmas, we will consider a generic optimization problem, 
\begin{align}
    \label{eqn:generic_max}
    \max_{\bx}~ f(\bx) + \alpha~ g(\bx),
\end{align}
where $g(\bx) > 0$ for any $\bx$, and $\alpha$ can only take non-negative values.
}

{
\begin{lemma}[Value of $g(\bx)$ as a function of $\alpha$]
\label{lemma:alpha_lemma}
Let $\bx^*_\alpha$ be the value of $\bx$ that maximizes Equation~\ref{eqn:generic_max} for a given fixed value of $\alpha$. Then the following statements hold:
\begin{enumerate}
    \item For any $\alpha \geq \beta \geq 0$, $g(\bx^*_{\alpha}) \geq g(\bx^*_\beta)$.
    \item For any $\alpha \geq \beta \geq 0$, $\alpha g (\bx^*_\alpha) \geq \beta g(\bx^*_\beta)$.
\end{enumerate}
\end{lemma}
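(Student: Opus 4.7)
The plan is to establish both claims by a standard ``sum-of-optimality'' argument (a special case of monotone comparative statics). The only facts we need are the defining inequalities of $\bx^*_\alpha$ and $\bx^*_\beta$ as maximizers, together with the assumptions $\alpha\geq\beta\geq 0$ and $g(\bx)>0$. No continuity, differentiability, or convexity of $f,g$ is required.

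For part 1, I would first write down optimality of each point in its own problem:
\begin{align*}
f(\bx^*_\alpha) + \alpha\, g(\bx^*_\alpha) &\geq f(\bx^*_\beta) + \alpha\, g(\bx^*_\beta), \\
f(\bx^*_\beta) + \beta\, g(\bx^*_\beta) &\geq f(\bx^*_\alpha) + \beta\, g(\bx^*_\alpha).
\end{align*}
Adding these two inequalities cancels the $f$-terms and rearranges to
$(\alpha-\beta)\bigl(g(\bx^*_\alpha)-g(\bx^*_\beta)\bigr)\geq 0$.
Since $\alpha\geq\beta$, the first factor is non-negative, so $g(\bx^*_\alpha)\geq g(\bx^*_\beta)$, proving claim 1.

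For part 2, I would then chain two inequalities using part 1 and the sign information on $g$ and $\beta$. Specifically,
\begin{align*}
\alpha\, g(\bx^*_\alpha) \;\geq\; \beta\, g(\bx^*_\alpha) \;\geq\; \beta\, g(\bx^*_\beta),
\end{align*}
where the first step uses $\alpha\geq\beta\geq 0$ together with $g(\bx^*_\alpha)>0$, and the second step uses $\beta\geq 0$ together with the ordering $g(\bx^*_\alpha)\geq g(\bx^*_\beta)$ from part 1. This immediately yields claim 2.

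There is no real obstacle here: the argument is purely algebraic and three lines long once the two optimality inequalities are written down. The only care is to ensure the two hypotheses $\alpha\geq\beta\geq 0$ and positivity of $g$ are used in the right places (they are needed only in part 2), and to note that no uniqueness of $\bx^*_\alpha$ is assumed, so the statements should be interpreted as holding for any selection of maximizers.
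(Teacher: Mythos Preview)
Your proof is correct and follows essentially the same approach as the paper: part 1 is identical (sum the two optimality inequalities and cancel the $f$-terms), and part 2 uses the same ingredients (part 1, $\alpha\geq\beta\geq 0$, and nonnegativity of $g$) via an equivalent decomposition—the paper writes $\alpha g(\bx^*_\alpha)-\beta g(\bx^*_\beta)=\alpha[g(\bx^*_\alpha)-g(\bx^*_\beta)]+(\alpha-\beta)g(\bx^*_\beta)$ whereas you chain $\alpha g(\bx^*_\alpha)\geq \beta g(\bx^*_\alpha)\geq \beta g(\bx^*_\beta)$. Your remark that the statements hold for any selection of maximizers is a nice clarification the paper omits.
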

\begin{proof}
    For any given $\alpha$, $\bx^*_\alpha$ satisfies the following inequality:
    \begin{align}
        \forall \bx', ~~ f(\bx^*_\alpha) + \alpha g(\bx^*_\alpha) \geq f(\bx') + \alpha g(\bx').
    \end{align}
    Using the above relation, we can write down two inequalities relating $\bx^*_\alpha$ and $\bx^*_\beta$:
    \begin{align}
        f(\bx^*_\alpha) + \alpha g(\bx^*_\alpha) ~&\geq f(\bx^*_\beta) + \alpha g(\bx^*_\beta) \\
        f(\bx^*_\beta) + \beta g(\bx^*_\beta) ~&\geq f(\bx^*_\alpha) + \beta g(\bx^*_\alpha)
    \end{align}
    Now, adding the two inequalities above, and cancelling the terms $f(\bx^*_\alpha) + f(\bx^*_\beta)$ from both sides, we obtain:
    \begin{align}
        \left( \alpha - \beta \right) g(\bx^*_\alpha)  \geq \left( \alpha - \beta \right) g(\bx^*_\beta).
    \end{align}
    Since $\alpha - \beta$ is non-negative, it is either equal to 0, in which case $g(\bx^*_\alpha) = g(\bx^*_\beta)$ or it is positive, in which case, $g(\bx^*_\alpha) \geq g(\bx^*_\beta)$. Combining these two cases, we get the desired result in (1).
    For proving the second part (2), note that we can write the difference of the two sides as:
    \begin{align}
        \alpha g(\bx^*_\alpha) - \beta g(\bx^*_\beta) &= \alpha g(\bx^*_\alpha) - \alpha g(\bx^*_\beta) + \alpha g(\bx^*_\beta) - \beta g(\bx^*_\beta)\\
        &= \alpha \left[ g(\bx^*_\alpha) - g(\bx^*_\beta) \right] + (\alpha - \beta) g(\bx^*_\beta)\\
        &\geq 0 + 0 = 0,
    \end{align}
    where the last inequality follows from the fact that $g(\bx) \geq 0$ and $\alpha > \beta$. 
\end{proof}}

We will now provide a proof for Theorem~\ref{thm:distributional_constraints}.

\begin{theorem}[Theorem~\ref{thm:distributional_constraints} restated] 
\label{thm:distributional_constraints_restated}
W.h.p. $\geq 1 - \delta$, for any prescribed level of safety $\zeta$, the maximum possible policy improvement over choices of $\alpha$, $J(\pi_\alpha) - J(\pi_\beta) \leq \zeta^+$, where $\zeta^+$ is given by:
\vspace{-0.3cm}
\begin{align}
\zeta^+ := \max_{\alpha} ~~~ h^*\left(\alpha\right) \cdot \frac{1}{(1 - \gamma)^2}
    ~~~~~\text{s.t.}~~ \frac{c_1}{(1 - \gamma)^2} \sqrt{\frac{{C^{\pi_\alpha}_\text{diff}}}{|\mathcal{D}|}} -  \frac{\alpha}{1 - \gamma} \mathbb{E}_{\bs \sim \widehat{d}^{\pi_\alpha}}[D(\pi_\alpha, \behavior)(\bs)] \leq \zeta, 
\end{align}
where $h^*$ is a monotonically decreasing function of $\alpha$, and $h(0) = \mathcal{O}(1)$.
\end{theorem}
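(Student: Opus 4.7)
My plan is to combine three ingredients already available in the appendix: (i) the high-probability safe-policy-improvement inequality from \citet{kumar2020conservative} restated in Equation~\ref{eqn:single_task_guarantee}, (ii) the algebraic variance identity in Lemma~\ref{lemma:variation} that lets me re-express the per-state sampling-error term using the differential concentrability $C^{\pi_\alpha}_\text{diff}$, and (iii) the performance-difference upper bound in Lemma~\ref{lemma:improvement} that controls $J(\pi_\alpha) - J(\pi_\beta)$ by a functional of $\mathbb{E}_{\bs}[D(\pi_\alpha,\pi_\beta)(\bs)]$ scaled by $1/(1-\gamma)^2$. The $\alpha$-monotonicity statements in the theorem will follow from the simple comparison argument in Lemma~\ref{lemma:alpha_lemma}.

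\textbf{Step 1 (deriving the safety constraint).} First I would instantiate Equation~\ref{eqn:single_task_guarantee} at the policy $\pi_\alpha$ produced by the generic distributional constraint objective (Equation~\ref{eqn:generic_distributional_constraint}) with coefficient $\alpha$, and impose the user-specified requirement $J(\pi_\alpha)-J(\pi_\beta) \geq -\zeta$. Writing $|\mathcal{D}(\bs)|=|\mathcal{D}|\,\mu(\bs)$ pulls a factor $1/\sqrt{|\mathcal{D}|}$ outside the expectation, leaving $\mathbb{E}_{\bs \sim \widehat d^{\pi_\alpha}}[\sqrt{(D(\pi_\alpha,\pi_\beta)(\bs)+1)/\mu(\bs)}]$. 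I would then apply Lemma~\ref{lemma:variation} with $x_{\bs}:=D(\pi_\alpha,\pi_\beta)(\bs)/\mu(\bs)$ to upper bound this expectation by a constant multiple of $\sqrt{C^{\pi_\alpha}_\text{diff}}$ (up to the usual $\sqrt{\log(|\mathcal{S}||\mathcal{A}|/\delta)}$ concentration factor that ships with the CQL bound). Rearranging yields exactly the constraint in Equation~\ref{eq:safety_constraint}, with the $\alpha$-dependent ``gain'' term $-\frac{\alpha}{1-\gamma}\mathbb{E}_{\bs}[D(\pi_\alpha,\pi_\beta)(\bs)]$ appearing on the left-hand side.

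\textbf{Step 2 (upper bound on the attainable improvement and monotonicity in $\alpha$).} By Lemma~\ref{lemma:improvement}, $J(\pi_\alpha)-J(\pi_\beta) \leq \frac{c\,R_{\max}}{(1-\gamma)^2}\cdot h(\alpha)$, where I define $h(\alpha):=\mathbb{E}_{\bs \sim d^{\pi_\alpha}}[D(\pi_\alpha,\pi_\beta)(\bs)]$. Since $\pi_\alpha$ solves $\max_\pi \widehat J(\pi) - \alpha\,h_\pi$ with $h_\pi \geq 0$, the same two-inequality swap used in Lemma~\ref{lemma:alpha_lemma}(1) (adapted to the form $\max f - \alpha g$, $g\geq 0$) gives $h(\alpha) \leq h(\beta)$ for $\alpha \geq \beta \geq 0$, so $h^*:=h$ is monotonically decreasing. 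At $\alpha=0$, $\pi_\alpha$ is the empirical-return maximizer, and boundedness of $D(\pi,\pi_\beta)(\bs)$ on a fixed finite support yields $h(0) = \mathcal{O}(1)$. Plugging the upper bound on improvement together with the safety constraint from Step~1 into a single maximization over $\alpha$ produces the theorem statement.

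\textbf{Main obstacle.} The most delicate step is the reduction of the per-state sampling error $\mathbb{E}_{\bs}[\sqrt{D/\mu}]$ to the pairwise-variance object $\sqrt{C^{\pi_\alpha}_\text{diff}}$, because Jensen's inequality only gives $\mathbb{E}[\sqrt{D/\mu}] \leq \sqrt{\mathbb{E}[D/\mu]}$ and the latter differs from $C^{\pi_\alpha}_\text{diff} = 2\,\mathrm{Var}(\sqrt{D/\mu})$ by the squared mean. Lemma~\ref{lemma:variation} is the bridge: applied to the discrete sum over states weighted by $d^{\pi_\alpha}$, it produces a bound in which the mean term and the dispersion term can be collected, with the leftover constants absorbed into $c_1$ --- this is the ``informal'' slack that the theorem statement flags. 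Everything after that bound is a clean consequence of Lemma~\ref{lemma:improvement} and the optimization comparison in Lemma~\ref{lemma:alpha_lemma}.
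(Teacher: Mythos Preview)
Your overall architecture---combine the CQL safe-policy-improvement bound (Equation~\ref{eqn:single_task_guarantee}), Lemma~\ref{lemma:variation} to bring in $C^{\pi_\alpha}_\text{diff}$, Lemma~\ref{lemma:improvement} for the upper bound on improvement, and Lemma~\ref{lemma:alpha_lemma} for the $\alpha$-monotonicity of $h^*$---is exactly the paper's, and your Step~2 matches it essentially verbatim.

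The genuine gap is in Step~1, where you propose to use Lemma~\ref{lemma:variation} to \emph{upper} bound $\mathbb{E}_{\bs\sim\widehat d^{\pi_\alpha}}\bigl[\sqrt{D(\pi_\alpha,\pi_\beta)(\bs)/\mu(\bs)}\bigr]$ by a constant multiple of $\sqrt{C^{\pi_\alpha}_\text{diff}}$. No such upper bound can exist: take $D(\cdot)/\mu(\cdot)$ to be a positive constant across states, so that $C^{\pi_\alpha}_\text{diff}=0$ while the expectation is strictly positive. Lemma~\ref{lemma:variation} runs the other way: from $\bigl(\sum_i\sqrt{x_i}\bigr)^2\geq\sum_i x_i\geq\tfrac{1}{N-1}\sum_{i<j}(\sqrt{x_i}-\sqrt{x_j})^2$ one gets, in expectation form with $x_{\bs}=D(\pi_\alpha,\pi_\beta)(\bs)/\mu(\bs)$, the \emph{lower} bound $\bigl(\mathbb{E}_{\bs}[\sqrt{D/\mu}]\bigr)^2\geq C^{\pi_\alpha}_\text{diff}$, and this is exactly what the paper invokes. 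Plugging that lower bound into Equation~\ref{eqn:single_task_guarantee} yields a lower bound on the $\zeta$ that the CQL guarantee can deliver, which is what produces the constraint in the theorem and carries the intended message (large $C^{\pi_\alpha}_\text{diff}$ forces large $\alpha$, hence small $h^*(\alpha)$). Your ``Main obstacle'' paragraph shows you sensed the direction is the crux, but the fix you sketch---collecting a mean term and a dispersion term into $c_1$---cannot work, because there is no dispersion-only upper bound available to collect. Reverse the inequality in Step~1 and the rest of your argument lines up with the paper.
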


\begin{proof}[\textbf{Proof of Theorem~\ref{thm:distributional_constraints_restated}.}]
To prove this theorem, we will apply Lemma~\ref{lemma:variation} on $x_i = \frac{D(\pi_\alpha(\cdot|\bs_i) || \behavior(\cdot|\bs_i))}{\mu(\bs_i)}$ and combine it with a the safe policy improvement guarantee for behavior regularization methods that admit updates of the form shown in Equation~\ref{eqn:generic_distributional_constraint}.

First, we note by applying Lemma~\ref{lemma:variation} in its expectation form that:
\begin{align*}
    \resizebox{\textwidth}{!}{
    $\left(\E_{\bs \sim \widehat{d}^{\pi_\alpha}} \left[ \sqrt{\frac{D(\pi_\alpha(\cdot|\bs) || \behavior(\cdot|\bs))}{\mu(\bs)}} \right] \right)^2 \geq \E_{\bs_1 \sim \widehat{d}^{\pi_\alpha}, \bs_2 \sim \widehat{d}^{\pi_\alpha}} \left[ \sqrt{\frac{D(\pi_\alpha(\cdot|\bs_1) || \behavior(\cdot|\bs_1))}{\mu(\bs_1)}} - \sqrt{\frac{D(\pi_\alpha(\cdot|\bs_2) || \behavior(\cdot|\bs_2))}{\mu(\bs_2)}}  \right]^2,$
    }
\end{align*}
where the term on the RHS of the above equation corresponds to $C^\pi_\text{diff}$.

Now we can plug this into the safe-policy improvement guarantee to obtain the resulting result as follows. Note that the first term in the bound in Equation~\ref{eqn:single_task_guarantee} can be lower bounded using the differential concentrability as discussed above, and therefore, we get the following lower bound on $\zeta$:
\begin{align}
    \zeta \geq \mathcal{O} \left(\frac{1}{(1 - \gamma)^2} \right) \sqrt{\frac{C^{\pi_\alpha}_\text{diff}}{|\mathcal{D}|}} + \alpha \mathbb{E}_{\bs \sim \widehat{d}^{\pi_\alpha}}\left[D(\pi_\alpha, \behavior)(\bs)\right],
\end{align}
which is exactly the same as the expression for the constraint in Theorem~\ref{thm:distributional_constraints}. 

Next we provide an upper bound on the maximal improvement that can be possible, in terms of $D(\pi_\alpha, \behavior)$. For this, we will utilize Lemma~\ref{lemma:improvement}, and we can directly upper bound $J(\pi_\alpha) - J(\behavior)$ as follows:
\begin{align}
\label{eqn:upper_bound}
    J(\pi_\alpha) - J(\behavior) \lesssim \frac{1}{(1 - \gamma)^2} \mathbb{E}_{\bs \sim \widehat{d}^{\pi_\alpha}}\left[D(\pi_\alpha, \behavior)(\bs)\right] \cdot R_{\max}.
\end{align}
Finally, we express this upper bound in terms of $\alpha$. 

{Now, note that the RHS in Equation~\ref{eqn:upper_bound} depends on $\mathbb{E}_{\bs \sim \widehat{d}^{\pi_\alpha}}\left[D(\pi_\alpha, \behavior)(\bs)\right]$, which is directly the term that a generic distributional constraint algorithm minimizes (Equation~\ref{eqn:generic_distributional_constraint}). We wish to understand how this term evolves a function of $\alpha$.}

{Now we will invoke Lemma~\ref{lemma:alpha_lemma} to understand the behavior of the term above when solving the optimization problem in Equation~\ref{eqn:generic_distributional_constraint}. To do so, consider any $\alpha$, $\alpha'$ and note that $f(\bx) = \widehat{J}(\pi)$ and $g(\bx) = - \mathbb{E}_{\bs \sim \widehat{d}^\pi}[D(\pi, \pi_\beta)(\bs)]$. Now applying Lemma~\ref{lemma:alpha_lemma}, Part (1), we note that:
\begin{align}
    \mathbb{E}_{\bs \sim \widehat{d}^{\pi_\alpha}(\bs)}[D(\pi_\alpha, \pi_\beta)(\bs)] \leq \mathbb{E}_{\bs \sim \widehat{d}^{\pi_{\alpha'}}(\bs)}[D(\pi_{\alpha'}, \pi_\beta)(\bs)],
\end{align}
for $\alpha' \leq \alpha$. This means that we can upper bound this quantity by a function $h^*(\alpha)$ that is monotonically decreasing in $\alpha$.
}
% Since the generic distributional constraint algorithm (Equation~\ref{eqn:generic_distributional_constraint}) optimizes $\mathbb{E}_{\bs \sim \widehat{d}^\pi}[D(\pi, \behavior)(\bs)]$ weighted by $\alpha$, we get that:
% \begin{align}
%     \mathbb{E}_{\bs \sim \widehat{d}^{\pi_\alpha}}[D(\pi_\alpha, \behavior)(\bs)] \leq h^*(\alpha),
% \end{align}
% where $h^*(\alpha)$ is a decreasing function of $\alpha$.

Therefore, the maximal improvement is upper bounded by: $h^*(\alpha) \mathcal{O}\left(\frac{1}{(1 - \gamma)^2} \right)$, which completes the proof of this theorem.
\end{proof}

% \iffalse

\subsection{Proof of Lemma~\ref{thm:reds_cql_q_value}}

\vspace{-0.16cm}
\begin{lemma}[(Lemma~\ref{thm:reds_cql_q_value} restated) Per-state modification.]
Let $g$ represents $ g \left( \tau \cdot \pi(\cdot|\bs) \right) $. The Q-function obtained after one TD-learning iteration using the objective in Eq.~\ref{eqn:reds_plus_cql_q_obj} is:
\begin{align}
\label{eqn:cql_q_function_reds}
    Q_{\theta}(\bs, \ba) := \bellman^\policy \bar{Q}(\bs, \ba) - \alpha \frac{ \pi(\ba|\bs) + \behavior(\ba|\bs)  g - 2 \behavior(\ba|\bs) }{2 \behavior(\ba|\bs)}
\end{align}
where $\bellman^\policy \bar{Q} (\bs, \ba)$ is the Bellman backup operator applied to a delayed target Q-network.
\end{lemma}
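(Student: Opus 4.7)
The plan is to compute the tabular-setting first-order optimality condition for the CQL (ReDS) objective (Eq.~\ref{eqn:reds_plus_cql_q_obj}) with respect to a single scalar $Q_\theta(\bs,\ba)$, and then substitute the chosen form of $\rho$ into the resulting closed-form expression. Because we only care about one step of TD-learning, the target $\bellman^\pi \bar{Q}(\bs,\ba)$ is treated as a fixed constant, which reduces the objective to a quadratic in $Q_\theta(\bs,\ba)$ plus a linear term coming from $\mathcal{R}(\theta;\rho)$. This converts the lemma into a straightforward derivative-setting exercise.

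First, I would expand $J_Q(\theta)$ by writing all expectations as sums weighted by their sampling densities. The TD term contributes the quadratic $\tfrac{1}{2}\mu(\bs)\pi_\beta(\ba|\bs)(Q_\theta(\bs,\ba)-\bellman^\pi\bar{Q}(\bs,\ba))^2$, where $\mu(\bs)$ is the state-marginal of the dataset. The three pieces of $\mathcal{R}(\theta;\rho)$ contribute linear-in-$Q_\theta(\bs,\ba)$ terms with coefficients $\tfrac{\alpha}{2}\mu(\bs)\pi(\ba|\bs)$, $\tfrac{\alpha}{2}\mu(\bs)\rho(\ba|\bs)$, and $-\alpha\mu(\bs)\pi_\beta(\ba|\bs)$ respectively (the last because sampling $(\bs,\ba)\sim\mathcal{D}$ uses joint density $\mu(\bs)\pi_\beta(\ba|\bs)$). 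Differentiating with respect to the scalar entry $Q_\theta(\bs,\ba)$ and setting the derivative to zero gives
\begin{equation*}
\mu(\bs)\pi_\beta(\ba|\bs)\bigl(Q_\theta(\bs,\ba)-\bellman^\pi\bar{Q}(\bs,\ba)\bigr)+\alpha\,\mu(\bs)\left[\tfrac{1}{2}\pi(\ba|\bs)+\tfrac{1}{2}\rho(\ba|\bs)-\pi_\beta(\ba|\bs)\right]=0.
\end{equation*}
Solving for $Q_\theta(\bs,\ba)$ and dividing by $\mu(\bs)\pi_\beta(\ba|\bs)$ yields the intermediate formula
\begin{equation*}
Q_\theta(\bs,\ba)=\bellman^\pi\bar{Q}(\bs,\ba)-\alpha\,\frac{\pi(\ba|\bs)+\rho(\ba|\bs)-2\pi_\beta(\ba|\bs)}{2\pi_\beta(\ba|\bs)}.
\end{equation*}

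The final step is to substitute the form of $\rho$ derived in Sec.~\ref{sec:tractable_reds}, namely $\rho(\ba|\bs)=\pi_\beta(\ba|\bs)\cdot g\bigl(\tau\cdot\pi(\ba|\bs)\bigr)$, using the shorthand $g(\ba|\bs)=g(\tau\cdot\pi(\ba|\bs))$. Plugging in immediately produces the claimed expression in Eq.~\ref{eqn:cql_q_function_reds_main}.

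The only mild subtleties are bookkeeping rather than mathematical depth. One is the factor of $\mu(\bs)\pi_\beta(\ba|\bs)$ coming from the sampling in the TD loss, which must be tracked carefully so that the division through at the end produces a $\pi_\beta(\ba|\bs)$ in the denominator (this is also what makes the result interpretable as a support constraint in the next lemma). The other is that the lemma implicitly assumes $\rho$ has already been set to the form $\pi_\beta\cdot g$ rather than separately renormalized; as long as one adopts the unnormalized mixture $\pi^{re}$ from Eq.~\ref{eqn:support_optimality_with_rho} exactly as stated, no normalization constants appear. I do not expect a serious obstacle here; the main care needed is in correctly identifying the sampling measure of each expectation so that the linear coefficients in $\mathcal{R}$ combine into the numerator $\pi(\ba|\bs)+\pi_\beta(\ba|\bs)g(\ba|\bs)-2\pi_\beta(\ba|\bs)$.
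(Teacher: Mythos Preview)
Your proposal is correct and follows essentially the same approach as the paper: set the derivative of the tabular objective to zero, solve for $Q_\theta(\bs,\ba)$, and substitute $\rho=\pi_\beta\cdot g$. The only cosmetic difference is that the paper invokes the already-derived CQL fixed-point formula and replaces $\pi$ by the mixture $\tfrac{1}{2}\pi+\tfrac{1}{2}\rho$, whereas you redo the derivative computation from scratch; the underlying calculation is identical.
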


Recall from Section~\ref{sec:prelim} that the objective of CQL consists of two terms

\begin{align}
\label{eq:cql_q_obj_appendix}
\!\!\!\!\!\!\!\!\!\!\!\!\small{\min_{\theta}~ \textcolor{red}{\alpha \underbrace{\left(\mathbb{E}_{\bs \sim \mathcal{D}, \ba \sim \pi} \left[Q_\theta(\bs,\ba)\right] - \mathbb{E}_{\bs, \ba \sim \mathcal{D}}\left[Q_\theta(\bs,\ba)\right]\right)}_{\mathcal{R}(\theta)}}+\frac{1}{2} \textcolor{blue}{\mathbb{E}_{\bs, \ba, \bs' \sim \mathcal{D}}\left[\left(Q_\theta(\bs, \ba) - \bellman^\policy\bar{Q}(\bs, \ba)\right)^2 \right]}},
\end{align}

where $\bellman^\policy \bar{Q} (\bs, \ba)$ is the Bellman backup operator applied to a delayed target Q-network. In tabular setting, the Q-function obtained after one iteration of TD-learning using the objective function in Eq.~\ref{eq:cql_q_obj_appendix} is given by:

\begin{align}
\label{eqn:cql_q_function_proof_appendix}
    Q_{\theta}(\bs, \ba) := \bellman^\policy \bar{Q}(\bs, \ba) - \alpha \left[ \frac{\pi(\ba|\bs)}{\behavior(\ba|\bs)} -1 \right].
\end{align}

In CQL, the result in Eq.~\ref{eqn:cql_q_function_proof_appendix} is obtained by setting the derivative of the objective in Eq.~\ref{eq:cql_q_obj_appendix} with respect to the Q-values to 0, and solve for $Q_\theta(\bs,\ba)$~\citep{kumar2020conservative}. The objective for ReDS is given by:

% We now restate the new regularizer introduced by ReDS, and the new objective function for the Q-function.

% \begin{align}
%     \mathcal{R}(\theta; \textcolor{red}{\rho} ) = \left( \frac{1}{2}\underset{\bs \sim \mathcal{D}, \ba \sim \pi}{\mathbb{E}} \left[Q_\theta(\bs,\ba)\right] +  \frac{1}{2}\underset{{\bs \sim \mathcal{D}, \ba \sim \resizebox{0.2cm}{!}{{\textcolor{red}{\rho}}}}}{\mathbb{E}} \left[Q_\theta(\bs,\ba)\right] - \underset{\bs, \ba \sim \mathcal{D}}{\mathbb{E}} \left[Q_\theta(\bs,\ba)\right]\right) \label{eqn:reds_plus_cql_reg_appendix}
% \end{align}

\begin{align}
\!\!\!\!\!\!\!\!\!\!\!\!\small{\min_{\theta}~\quad J_Q(\theta) = \textcolor{red}{  \mathcal{R}(\theta; \rho) }+\frac{1}{2} \mathbb{E}_{\bs, \ba, \bs' \sim \mathcal{D}}\left[\left(Q_\theta(\bs, \ba) - \bellman^\policy\bar{Q}(\bs, \ba)\right)^2 \right]}  \label{eqn:reds_plus_cql_q_obj_appendix}
\end{align}

Notice that the main difference between the original CQL objective in Eq.~\ref{eq:cql_q_obj_appendix} and the new objective in Eq.~\ref{eqn:reds_plus_cql_q_obj_appendix} is the distribution with which we push down Q values. The objective in Eq.~\ref{eq:cql_q_obj_appendix} pushes Q values down under the learned policy $\pi$, whereas the objective in Eq.~\ref{eqn:reds_plus_cql_q_obj_appendix} pushes Q values down under a mixture of $\pi$ and $\rho$, i.e. $ \dfrac{1}{2} \pi + \dfrac{1}{2} \rho $. Since $\rho$ is parameterized by a neural network whose input does not contain the Q values, its gradient with respect to the Q values is $0$. Additionally, since the mixture $\dfrac{1}{2} \pi + \dfrac{1}{2} \rho$ plays the same role that $\pi$ plays in the objective function in Eq.~\ref{eq:cql_q_obj_appendix}, we therefore can obtain the solution for the Q-values after updating the Q-function using the objective function in Eq.~\ref{eqn:reds_plus_cql_q_obj_appendix} simply by replacing $\pi$ in Eq.~\ref{eqn:cql_q_function_proof_appendix} with the mixture. That is, in tabular setting, after updating the Q-function using the objective in Eq.\ref{eqn:reds_plus_cql_q_obj_appendix}, the Q-values are:

\begin{align*}
    Q_{\theta}(\bs, \ba) & = \bellman^\policy \bar{Q}(\bs, \ba) - \alpha \left[ \frac{ \dfrac{1}{2} \pi(\ba|\bs) + \dfrac{1}{2} \rho(\ba|\bs) }{\behavior(\ba|\bs)} -1 \right]. \\ 
    & = \bellman^\policy \bar{Q}(\bs, \ba) - \alpha \left[ \frac{ \pi(\ba|\bs) + \rho(\ba|\bs) }{2\behavior(\ba|\bs)} - 1 \right]. \\
    & = \bellman^\policy \bar{Q}(\bs, \ba) - \alpha \left[ \frac{ \pi(\ba|\bs) + \rho(\ba|\bs) - 2\behavior(\ba|\bs)}{2\behavior(\ba|\bs)} \right]. \\ 
    & = \bellman^\policy \bar{Q}(\bs, \ba) - \alpha \left[ \frac{ \pi(\ba|\bs) + \behavior(\ba|\bs) g(.) - 2\behavior(\ba|\bs)}{2\behavior(\ba|\bs)} \right].
\end{align*}
since $\rho$ subsumes $\pi_\beta \cdot g$, giving us the desired result.

\subsection{Proof of Lemma~\ref{thm:reds_cql}}

\textbf{Intuition: Comparison of CQL and CQL (ReDS) objectives} We will first intuitively compare the objectives in CQL and CQL (ReDS) to understand where the difference arises from. We write down the CQL (ReDS) objective below:
\begin{align}
    \!\!\!\!\!\!\!\!\!\max_{\pi \in \Pi}~ \widehat{J}(\pi) - \frac{\alpha}{2 (1 - \gamma)} \mathbb{E}_{\bs \sim \widehat{d}^\pi}\left[ D(\pi, \behavior)(\bs) + \cdot \underset{\ba \sim \pi(\cdot|\bs)}{\mathbb{E}} \left[g\left({\tau \cdot \pi(\ba|\bs)}\right) \mathbb{I}\left\{ \rebuttal{\behavior(\ba|\bs) > 0} \right\} \right] \right].
\end{align}
and now the CQL objective:
\begin{align}
    \!\!\!\!\!\!\!\!\!\max_{\pi \in \Pi}~ \widehat{J}(\pi) - \frac{\alpha}{(1 - \gamma)} \mathbb{E}_{\bs \sim \widehat{d}^\pi}\left[ D(\pi, \behavior)(\bs) \right].
\end{align}
Observe that the only difference between CQL and ReDS stems from the fact that while the regularizer in CQL only optimizes the policy to stay close to the behavior policy, the regularizer in ReDS minimizes an additional term $\sum_{\ba} \pi(\ba|\bs) g(\tau \cdot \pi(\ba|\bs))$. This term attempts to make the behavior policy more ``sharp'', as $g$ is a monotonically decreasing function of its argument, by preventing $\pi$ to be less sharp in states where $\pi_\beta$ is broad. This enables $\pi$ to find the action in the dataset support that maximizes the learned return $\widehat{J}(\pi)$, even when $\behavior$ is broad.

To prove Lemma~\ref{thm:reds_cql}, we will consider the following abstract update form for the policy evaluation version of CQL (ReDS), that obtains the next Q-function iterate $Q_{k+1}$:
\begin{align}
\label{eqn:reds_cql_eqn_appdx}
\!\!\!\!\!\!\!\!\!\!\!\!\small{\min_{Q}~ {\alpha\left( \underset{\bs \sim \mathcal{D}, \ba \sim \pi^{re}}{\mathbb{E}} \left[Q(\bs,\ba)\right] - \hspace{-0.2cm} \underset{\bs \sim \mathcal{D}, \ba \sim {\behavior}}{\mathbb{E}}\left[Q(\bs,\ba)\right]\right)}+\frac{1}{2} \textcolor{blue}{ \underset{\bs, \ba, \bs' \sim \mathcal{D}}{\mathbb{E}}\left[\left(Q(\bs, \ba) - \bellman^\policy{Q}_k(\bs, \ba)\right)^2 \right]}}, \hspace{-0.35cm}
% \vspace{-0.1cm}
\end{align}

\begin{lemma}[CQL (ReDS) restated more completely.]
\label{thm:reds_cql_restated}
CQL (ReDS) solves the following optimization problem, when $\alpha$ is large enough:
\vspace{-0.1cm}
\begin{align*}
    \label{eqn:reds_cql_opt_restated}
    \max_{\pi}~~ \widehat{J}(\pi) - \frac{\alpha}{2 (1 - \gamma)} \mathbb{E}_{\bs \sim \widehat{d}^\pi}\left[ D(\pi, \behavior)(\bs) + \mathbb{E}_{\ba \sim \pi(\cdot|\bs)}\left[g\left({\tau \cdot \pi(\ba|\bs)}\right) \mathbb{I}\left\{ \behavior(\ba|\bs) \geq 0 \right\} \right] \right].
\end{align*}
\end{lemma}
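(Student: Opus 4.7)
The plan is to derive the implicit policy optimization objective by first computing the Q-function fixed point of the policy evaluation step in Eq.~\ref{eqn:reds_cql_eqn_appdx}, and then substituting into $\widehat{J}(\pi) = \mathbb{E}_{\bs \sim \widehat{d}^\pi, \ba \sim \pi}[Q(\bs,\ba)]/(1-\gamma)$ to read off the penalty. The key ingredient is Lemma~\ref{thm:reds_cql_q_value}, which already tells us what one application of the CQL (ReDS) backup does to the Q-values; I only need to iterate this to the fixed point and integrate over the policy-induced state-action distribution.

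First, I would promote the per-iteration formula in Lemma~\ref{thm:reds_cql_q_value} to a fixed-point statement. Since the regularizer is a state-action-wise additive term that does not depend on $Q$, the fixed point of the regularized Bellman backup is $Q^\pi_{\text{ReDS}}(\bs,\ba) = \bellman^\pi Q^\pi_{\text{ReDS}}(\bs,\ba) - \alpha\, P(\bs,\ba)$, where $P(\bs,\ba) := \frac{\pi(\ba|\bs) + \behavior(\ba|\bs)g(\ba|\bs) - 2\behavior(\ba|\bs)}{2\behavior(\ba|\bs)}$. Unrolling this recursion yields $Q^\pi_{\text{ReDS}}(\bs,\ba) = Q^\pi(\bs,\ba) - \alpha \sum_{t\ge 0}\gamma^t \mathbb{E}[P(\bs_t,\ba_t)\mid \bs_0=\bs, \ba_0=\ba]$, so that $\widehat J(\pi) = J(\pi) - \frac{\alpha}{1-\gamma}\mathbb{E}_{\bs\sim \widehat d^\pi, \ba\sim\pi}[P(\bs,\ba)]$.

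Next, I would simplify $\mathbb{E}_{\ba\sim\pi(\cdot|\bs)}[P(\bs,\ba)]$ term by term. Using $\mathbb{E}_{\ba\sim\pi}[\pi(\ba|\bs)/\behavior(\ba|\bs)] = \sum_\ba \pi(\ba|\bs)^2/\behavior(\ba|\bs) = D(\pi,\behavior)(\bs) + 1$ (by the definition of $D_{\mathrm{CQL}}$ recalled at the start of Appendix~B), the first summand of $P$ averaged under $\pi$ contributes $\tfrac{1}{2}(D(\pi,\behavior)(\bs)+1)$. The second summand $\behavior(\ba|\bs)g(\ba|\bs)/(2\behavior(\ba|\bs)) = g(\ba|\bs)/2$ contributes $\tfrac{1}{2}\mathbb{E}_{\ba\sim\pi}[g(\tau\cdot\pi(\ba|\bs))]$, and the constant $-1$ simply shifts the objective by a policy-independent term and can be dropped from the $\arg\max$. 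Combining gives the stated penalty $\tfrac{1}{2}[D(\pi,\behavior)(\bs) + \mathbb{E}_{\ba\sim\pi}[g(\tau\cdot\pi(\ba|\bs))]]$ inside the outer $\mathbb{E}_{\bs\sim\widehat d^\pi}$, matching the form in Lemma~\ref{thm:reds_cql}.

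The main obstacle, and the reason for the ``$\alpha$ large enough'' qualifier, is justifying the indicator $\mathbb{I}\{\behavior(\ba|\bs) > 0\}$ in the final expression, i.e.\ showing that the penalty only needs to be evaluated on the support of $\behavior$. For $(\bs,\ba)$ with $\behavior(\ba|\bs)=0$, the penalty $P(\bs,\ba)$ contains $\pi(\ba|\bs)/(2\behavior(\ba|\bs))$, which blows up unless $\pi(\ba|\bs)=0$. I would formalize this by taking a limiting argument: restrict to any policy class $\Pi_{\varepsilon}$ with $\behavior(\ba|\bs)\ge \varepsilon$ on its support and conclude that for any fixed policy placing mass outside $\mathrm{supp}(\behavior)$, the regularized objective diverges to $-\infty$ as $\alpha$ grows, so the optimum must lie in $\{\pi : \mathrm{supp}(\pi(\cdot|\bs)) \subseteq \mathrm{supp}(\behavior(\cdot|\bs))\}$. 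Restricted to this set, the penalty reduces to the expression with the indicator, completing the proof. A minor care-point in the writeup will be the ambiguity between $\ge 0$ and $>0$ in the indicator (the former is trivially true), but under the standard convention $0/0 := 0$ and the support-restriction argument above, both conventions yield the same $\arg\max$.
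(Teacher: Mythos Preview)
Your proposal is correct and follows essentially the same route as the paper: compute the effective per-step penalty $P(\bs,\ba)=\pi^{re}(\ba|\bs)/\behavior(\ba|\bs)-1$ from the regularized backup, interpret it as a reward bonus so that maximizing the resulting Q-function is equivalent to $\max_\pi \widehat{J}(\pi)-\frac{\alpha}{1-\gamma}\mathbb{E}_{\bs\sim\widehat d^\pi,\ba\sim\pi}[P(\bs,\ba)]$, and then expand $P$ using $\mathbb{E}_{\ba\sim\pi}[\pi/\behavior]=D(\pi,\behavior)+1$ to obtain the stated form. The paper's proof is terser---it jumps directly from the one-step backup to the ``reward bonus'' interpretation without spelling out the fixed-point unrolling---and its justification for the indicator $\mathbb{I}\{\behavior(\ba|\bs)>0\}$ is just the observation that the $\behavior$ in the denominator only cancels on the support, whereas your limiting/support-restriction argument is more careful; but the underlying logic is identical.
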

\begin{proof}
For proving Lemma~\ref{thm:reds_cql}, we follow an argument similar to the proof of Theorem 3.1 from \citet{kumar2020conservative}. By differentiating the above objective w.r.t. Q, we note that the ReDS + CQL objective above exhibits the following effective Bellman backup
\begin{align}
    \forall \bs, \ba \in \mathcal{D}, Q_{k+1}(\bs, \ba) := \left(\mathcal{B}^\pi Q_k\right)(\bs, \ba) - \alpha \left(\frac{\pi^{re}(\ba|\bs)}{{\behavior}(\ba|\bs)} - 1 \right).
\end{align}
This backup is equivalent to running pessimistic RL with a reward bonus equal to $-\alpha \left(\frac{\pi^{re}(\ba|\bs)}{\behavior(\ba|\bs)} - 1 \right)$, and therefore, the policy obtained by maximizing the resulting Q-function can be expressed as:
\begin{align}
\label{eqn:init_Eqn}
    \max_{\pi} &~~ \widehat{J}(\pi) - \alpha \frac{1}{1 - \gamma} \mathbb{E}_{\bs \sim \widehat{d}^\pi} \left[ \mathbb{E}_{\ba \sim \pi(\cdot|\bs)}\left[ \left(\frac{\pi^{re}(\ba|\bs)}{{\behavior}(\ba|\bs)} - 1 \right) \right] \right] \\
    \equiv \max_{\pi} &~~\widehat{J}(\pi) - \alpha \frac{1}{2(1 - \gamma)} \mathbb{E}_{\bs \sim \widehat{d}^\pi}[D(\pi, {\behavior})(\bs)] - \frac{\alpha}{2 (1 - \gamma)} \mathbb{E}_{\bs, \ba \sim \widehat{d}^\pi}\left[ \mathbb{I}\{ \behavior(\ba|\bs) > 0 \} g\left({\tau \cdot \pi(\ba|\bs)} \right) \right]. 
\end{align}
We retain the notion of support $\mathbb{I}\{\behavior(\ba|\bs) > 0\}$ because $\pi_\beta(\ba|\bs)$ in the denominator only cancels out for actions within the support of the behavior policy, as for all other actions, this term the term would be ill-defined as $\pi_\beta(a|s) = 0$ appears in the denominator. Hence this term in the above equation cannot sum up over actions $\ba$ not in the behavior policy in the second term (these actions will be pushed down to have $-\infty$ Q-values). If we just cancelled the $\pi_\beta$ in the denominator, note that $g(\tau \cdot \pi(a|s))$ is not guaranteed to be $0$ on actions $\mathbf{a}$, where $\pi_\beta(a|s) = 0$. Hence, we must retain the indicator function to explain this result properly. Instead, for the case of standard CQL, the divergence $D(\pi, \pi_\beta)(s)$ term prevents $\pi$ from putting a non-zero density on actions where $\pi_\beta(a|s)$ in CQL, or else $D(\pi, \pi_\beta)(\bs) = \infty$, meaning that the Q-value for such a $\pi$ would be $-\infty$.
\end{proof}

\section{Implementation details of CQL (ReDS)}
\label{app:reds_practical}

In this section, we will provide implementation details about our algorithm, CQL (ReDS). The pseudo-code in Algorithm~\ref{alg:practical_alg} illustrates the different update steps of our algorithms. In addition, we provided a detailed python-like algorithm description for ease of implementation. This can be found below in Section \ref{sec:reds_python_code}.

Most of the components of Algorithm~\ref{alg:practical_alg} are straightforward and follow the same convention, training update and, as we will discuss, hyperparameters as the CQL algorithm. This includes training the policy $\pi_\phi$, and for the most part training the critic $Q_\theta$. The main difference in the update for CQL (ReDS) is utilizing the mixture of $\pi$ and $\rho$ in the CQL regularizer. For obtaining $\rho_\psi$, we utilize a standard advantage-weighted training update, following the papers~\citep{kostrikov2021offlineb,nair2020accelerating,peng2019awr}. Following these prior works, we also clip the argument to the exponent between a minimum range and a maximum range to be numerically stable:
\begin{align}
    \mathbb{E}_{\bs, \ba \sim \mathcal{D}}[\log \rho_\psi(\ba|\bs) \cdot \exp \left[\text{clip}\left(-A_\theta^\pi(\bs, \ba) / \tau, \sigma_{\min}, \sigma_{\max} \right) \right].
\end{align}
In our experiments, we chose $\sigma_{\min} = -10$ and $\sigma_{\max}=5$ across all the tasks and domains we study. These details are standard in training advantage-weighted algorithms. 

\subsection{Detailed algorithm description for CQL (ReDS)}
\label{sec:reds_python_code}

Algorithm~\ref{alg:practical_alg} provides the pseudo-code for CQL (ReDS). We provide the detailed description of how each update step in Algorithm~\ref{alg:practical_alg} is implemented using Python syntax based on the PyTorch Framework in this section. We include 3 code listings below, illustrating the update steps for the parametric Q-functions, the policy and the learnt distribution $\rho$.

\clearpage
\begin{python}[caption={Training Q networks given a batch of data, corresponding to step 3 in Algorithm~\ref{alg:practical_alg}}]
q_data = critic(batch['observations'], batch['actions'])

next_dist = actor(batch['next_observations'])
next_pi_actions, next_log_pis = next_dist.sample()

target_qval = target_critic(batch['observations'],
                            next_pi_actions)
target_qval = batch['rewards'] + \ 
    self.gamma * (1 - batch['dones']) * target_qval

td_loss = mse_loss(q_data, target_qval)

# importance sampling term
num_samples = 4

# assume env is normalized between [-1, 1] 
random_actions = uniform_sample((num_samples,
    batch['actions'].shape[-1]), min=-1, max=1)
random_pi = 0.5 ** batch['actions'].shape[-1]

dist = actor(batch['observations'])
pi_actions, log_pis = dist.sample(num_samples)

rho_dist = rho(batch['observations'])
rho_actions, log_probs_rho = rho_dist.sample(num_samples)

q_rand_is = critic(batch['observations'],
                   random_actions) - random_pi
q_pi_is = critic(batch['observations'],
                 pi_actions) - log_pis
q_rho_is = critic(batch['observations'],
                  rho_actions) - log_probs_rho

cat_q = concatenate(q_rand_is, q_pi_is, new_axis=True)
cat_q = logsumexp(cat_q, axis=-1)

cat_q_rho = logsumexp(q_rho_is, axis=-1)

# average between rho and pi
push_down_term_reds = 0.5 * (cat_q + cat_q_rho) 

reds_loss = td_loss + \\
    ((push_down_term_reds - q_data).mean() * cql_alpha)

critic_optimizer.zero_grad()
reds_loss.backward()
critic_optimizer.step()
\end{python}

\clearpage

\begin{python}[caption={Training the policy (or the actor) given a batch of data (step 4 in Algorithm~\ref{alg:practical_alg})}]
# Identical to CQL
# return distribution of actions
dist = actor(batch['observations'])

# sample actions with associated log probabilities
pi_actions, log_pis = dist.sample()

# calculate q value of actor actions
qpi = critic(batch['observations'], actions)
qpi = qpi.min(axis=0) 

# same objective as CQL (kumar et al.)
actor_loss = (log_pis * self.alpha - qpi).mean()

# optimize loss
actor_optimizer.zero_grad()
actor_loss.backward()
actor_optimizer.step()
\end{python}

\begin{python}[caption={Training the $\rho_\psi$ distribution given a batch of data (step 5 in Algorithm~\ref{alg:practical_alg})}]
# AWR style update to find rho

# sample policy actions for advantage calculation
dist = actor(batch['observations'])
pi_actions, log_pis = dist.sample()

# calculate advantage
qdata = critic(batch['observations'], batch['actions'])
value = critic(batch['observations'], actions)
advantage = (qdata - value.min(0)).mean()

# awr style clipping
clipped_advantage = clip(advantage/self.temperature, \ 
    min=-10, max=5)

# find log rho(a|s)
rho_dist = _rho(batch['observations'])
log_prob_rho = rho_dist.log_prob(batch['actions'])

# Advantage Weighted Log Probabilities is the loss for rho
rho_loss = -(exp(-clipped_advantage) * log_prob_rho)
rho_loss = rho_loss.mean()

rho_optimizer.zero_grad()
rho_loss.backward()
rho_optimizer.step()
\end{python}

\clearpage

\section{Task and Dataset Descriptions}
\label{app:tasks}

In this section, we will describe the various tasks we introduce in this paper. We also provide qualitative descriptions of these tasks here.

\textbf{Heteroskedastic antmaze navigation.} We introduce four new antmaze datasets which exhibit two different dataset distributions each for the medium and large mazes from D4RL~\citep{fu2020d4rl}. We reuse the layouts of the mazes directly from D4RL, but modify the data collection protocol. For the \texttt{noisy} datasets, given an observation from the environment, we first compute the action that would have been taken by the D4RL behavior policy, and then add Gaussian noise to the action. While this alone is not much harder, crucially, note that the variance of this added Gaussian noise differs depending on the location of the Ant in the 2D Maze. In addition there is a small bias added to the D4RL behavior policy, but this bias is dominated by noise. We present the noise standard deviations (indicated ``Noise'') and the bias added (indicated ``Bias'') for this dataset as a function of different location intervals in the maze in the left part of the Figure~\ref{fig:noise_bias} below.

\begin{figure}[ht]
    \centering
    \includegraphics[width=0.45\textwidth]{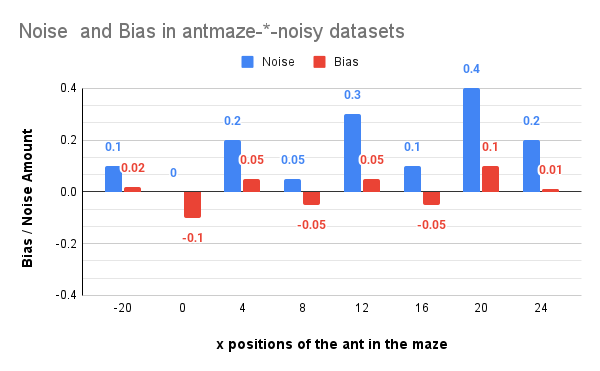}~\vline~
    \includegraphics[width=0.45\textwidth]{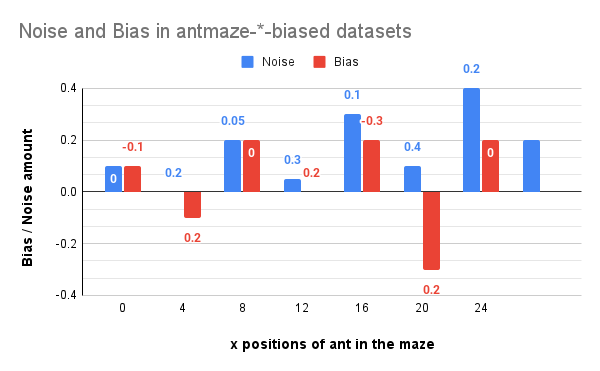}
    \caption{\label{fig:noise_bias} The distribution of noise and bias in the heteroskedastic antmaze datasets as a function of the x-position of the ant in the maze. While the noisy datasets primarily add noise, the biased datasets also add significant bias beyond the noise. The value of a given bar is the variance of the noise / bias added in the region between the x-position for that bar, and the next one.}
    \label{fig:my_label}
\end{figure}

For the \texttt{biased} datasets, in addition to adding location-dependent Gaussian noise to the action computed by D4RL behavior policies, we add a strong bias to the action (see Figure~\ref{fig:noise_bias} (right)). Crucially note that the direction of this bias (i.e., the sign) changes based on the location of the Ant in the 2D maze, which mimics the scenario studied in our didactic navigation example in Section~\ref{sec:didactic}. In summary, because in some 2D regions of the maze, the values of the noise and bias added to the actions are larger, while in other 2D regions, they are smaller, the new offline datasets contain more heteroskedastic data distribution, where an optimal learned policy must deviate away from the data distribution much more in certain regions, whereas much lesser in other regions, which would correspond to an increase in the differential concentrability. This is demonstrated quantitatively in Table~\ref{tab:data_diversity}. Thus, we expect that learning well on these tasks modulating the strength of the distributional constraints per state.

\textbf{Visual robotic pick and place.}
We introduce a pick and place dataset, which exhibits a unique dataset distribution for a robotic pick and place manipulation task, building on the framework from \citet{singh2020cog}. As shown in Figure~\ref{fig:traj_plot}, the robotic setup is a 6-DOF WidowX robot in front of a green bowl with 2 objects: a target object (the ball in this case) and a distractor object (the can). The objective is to place the target object into the bin. The reward function is a sparse, binary indicator of success, where a +1 reward is given when the object is placed in the bin. This task must be done from $128 \times 128 \times 3$ raw visual observations, without access to either the robot state, or the state of the objects, which can change as the objects can roll on the surface.

\begin{figure*}[ht]
\centering
\includegraphics[width=0.8\textwidth]{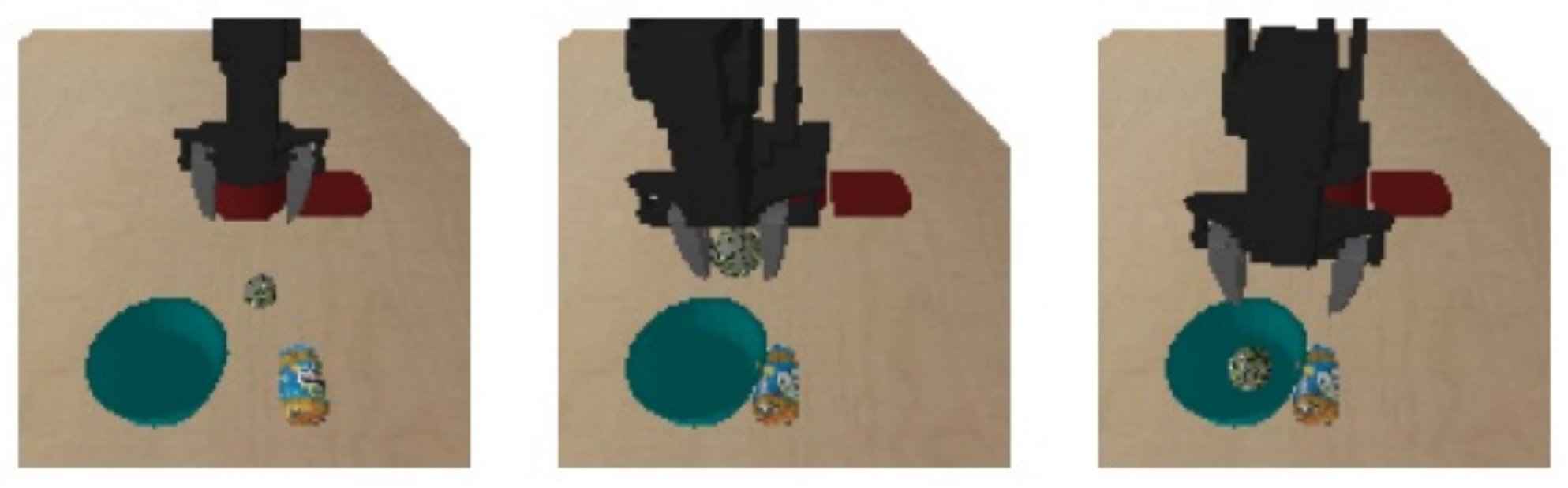}
\caption{\label{fig:traj_plot} \textbf{Visualizing a sample trajectory for the visual pick-place robotic manipulation task.} Here is an example successful trajectory in the dataset collected using a scripted policy. Th robot reaches for the target object (the green ball), lifts it, and places it inside of the green container.}
\end{figure*}

\textbf{Visual robotic bin sorting.}
We introduce a bin sorting tasks, which are also built on the framework from \citet{singh2020cog}. As shown in Figure~\ref{fig:traj_plot}, the robotic setup is a 6-DOF WidowX robot in front of two identical white bins with 2 objects to sort. The objective is to sort each object into its respective bins the target object in to the bin. The reward function is a sparse, binary indicator of success, where a "+1" reward is given when both objects are placed in their correct bins. This task must be done from $128 \times 128 \times 3$ raw visual observations, without access to either the robot state, or the state of the objects, which can change as the objects can roll on the surface.

\begin{figure*}[ht]
\centering
\includegraphics[width=0.8\textwidth]{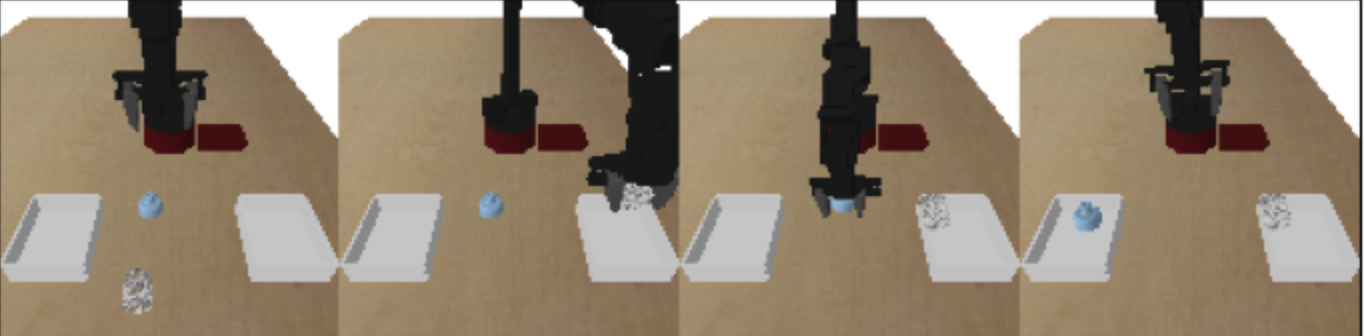}
\includegraphics[width=0.8\textwidth]{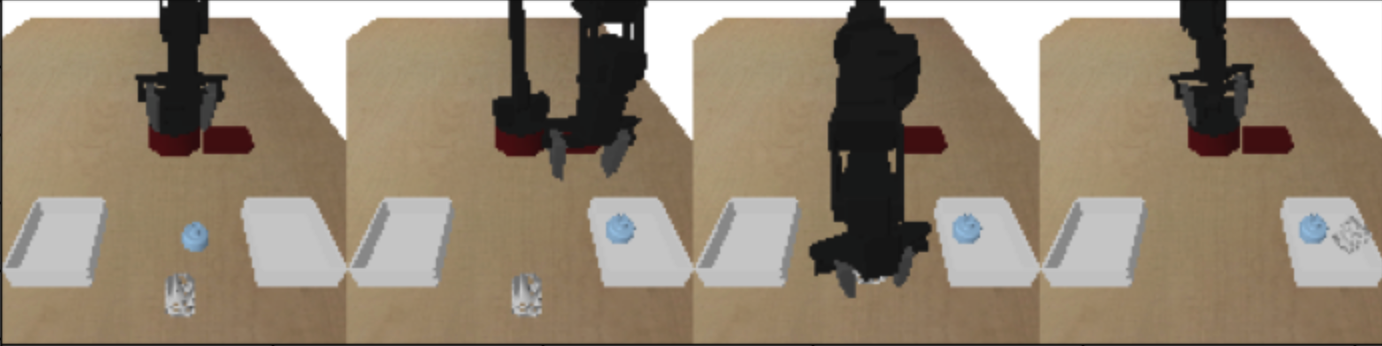}
\caption{\label{fig:traj_plot} \textbf{Visualizing sample trajectories for the visual bin sorting robotic manipulation task.} Here are two sample trajectories for the binsorting domain. \textbf{Top:} A successful trajectory in the dataset. Here the robot places the white cylinder in the right bin and the blue ball in the left bin, succesfully sorting the objects into their respective bins. \textbf{Bottom:} A unsuccessful trajectory in the dataset. Here the robot places both objects in the same bin. This is unsuccessful as an object was placed in the incorrect bin thereby not sorting them in a correct manner.}
\end{figure*}

To collect a heteroskedastic dataset, we run data collection using hardcoded scripted control policies, whose success rate and variance can be controlled.  Each trajectory in the dataset was collected in the following manner. For the first phase, the robot reaches towards the object without any bias and grasps it with a reasonable success rate. Here, though, the noise and stochasticity in the scripted data collection and the inaccuracies in the scripted policy make it not succeed for every trial. During the second phase, where the robot places the object in the bin, there was a bias towards placing the object in a position in the workspace that does not correspond to the target location of the bin for which the robot can attain a reward, and which the robot will observe during evaluation. For our experiments, this bias was 85\%. This forces the data distribution to be heteroskedastic:
for the picking segment of this task, the behavior policy is centered around the desired optimal behavior i.e., grasping the object successfully, whereas for the placing segment, the behavior is biased towards carrying the object to the incorrect regions, requiring significant deviations from the behavior policy to succeed. An algorithm is now required to have a non-uniform amount of closeness to the behavior policy. 

% For the bin sorting domain, the easy domain had no bias and instead the scripted policy had additive gaussian white noise with a fixed variance which leads to low heteroskedacity. 

\textbf{Atari game playing.} For the Atari tasks we consider in the paper, we devised a heteroskedastic data composition based on the DQN Replay dataset~\citep{agarwal2019optimistic} which comprises of transitions found in the replay buffer of a run of an online DQN. Since this dataset consist of all the policies that the DQN agent produced over the course of training, and since Atari typically uses $\epsilon$-greedy exploration, where the value of $\epsilon$ decays over time, different trajectories of this dataset are generated from different behavior policies, that all have different levels stochasticity. Naturally, since the value of $\epsilon$ decays over training and the performance of online DQN increases, the trajectories with higher return generally correlate with having lower stochasticity. 

Given this information, we attempted to subsample a dataset that is heteroskedastic. For this purpose, we first divide the full replay buffer into $N$ equal chunks, where chunk $0$ consists to experience observed earliest in training, while the chunk $N-1$ consists of experience seen near the final parts of training. Then, we subsample 20\% of the trajectories from each of these chunks independently to obtain an intermediate dataset that comes from multiplies policies, observed at different times while training online DQN. Then, for any given trajectory $\tau$ of length $L$ in the replay chunk $i$, we only retain the transitions occurring between time steps $\lfloor\frac{(L - i) \times N}{L}\rfloor:\lfloor\frac{(L - i + 1) \times N}{L}\rfloor$ in our final dataset and discard all the remaining transitions. This essentially means that the data closer to the initial states of the game comes from a good, less stochastic policy, whereas the data close to the final states of the game from a worse, highly stochastic policy. We develop two such datasets corresponding to $N=2$ and $N=5$ chunks. These chunks concatenated together construct the replay buffer of transitions that correspond to the 2 and 5 policy experiments seen in Section \ref{sec:experiments}.

To see why this data is heteroskedastic, note that at different states of the game, we observe actions with different amounts of stochasticity and bias. This is because, as the game progresses, the effective behavior policy induced by the offline dataset exhibits a bias towards suboptimal actions (from the chunks that are earlier in DQN training) while also exhibiting substantial noise. The states that are closer to the initial states of the game, on the other hand, have an effective behavior policy that is primarily centered around a good action, with very little noise. In order to succeed, an offline RL algorithm must have different amount of conservatism at different states.

In our experiments, we considered 10 games including several standard games, and this is a subset of games studied in prior work~\citep{kumar2021implicit}. The games we considered are: \textsc{Asterix, Breakout, Q$^*$bert, Seaquest, SpaceInvaders, BeamRider, MsPacman, WizardOfWor, Jamesbond, Pong}.

\section{Experimental Details}
\label{app:exp_details}

For our experiments on the AntMaze domains, we built on the following open-source implementation of CQL: \url{https://github.com/young-geng/JaxCQL}, for our visual robotic experiments, we utilized our own port of the following implementation from \citet{singh2020cog} in Jax: \url{https://github.com/avisingh599/cog}, and for our Atari experiments, we use the official implementation of CQL built on Dopamine~\citep{castro18dopamine}: \url{https://github.com/aviralkumar2907/CQL/tree/master/atari}. For certain baselines (e.g EDAC, BEAR), we utilize the source implementation to stay consistent with the author's tested and tuned implementation. We additionally verified the results for D4RL benchmark for these tasks. We will summarize the hyperparameters in the next sections.

\subsection{Hyperparameters for cql (ReDS)}

\textbf{Antmaze domains.} For the AntMaze domains, we utilized a temperature parameter $\tau = 0.3$ in our experiments (found by sweeping over $\tau \in [0.1, 0.3, 1.0, 5.0]$), for all the four dataset types in Table~\ref{tab:antmaze_results}. Every other hyperparameter was kept identical to CQL, which for the case of antmaze corresponds to applying the CQL regularizer $\mathcal{R}(\theta)$ with the dual version of CQL, where the threshold on the CQL regularizer is specified to be 1.0. Following CQL, we used 3-hidden layer critic and actor networks with layers of size 256, a critic learning rate of 3e-4 and an actor learning rate of 1e-4. We utilized the Bellman backup that computes the target value by performing a maximization over target values computed for $k=10$ actions sampled from the policy at the next state.

\textbf{Atari domains.} For our Atari experiments, we tuned the value of $\alpha$ in CQL (Equation~\ref{eqn:cql_training}) between two values $[0.1, 0.2]$, and present the sensitivity results in Figure~\ref{fig:atari_results}, and found that $\alpha=0.1$ work better for CQL. We swept the value of $\tau \in [2.0, 5.0, 7.0]$ and report the sensitivity sweep in Figure~\ref{fig:atari_sensitivity}.

\textbf{Visual Robotic Domains.} For the visual pick and place domains, we follow exactly the same hyperparameters as the CQL implementation from COG~\citep{singh2020cog}: a critic learning rate of 3e-4, an actor learning rate of 1e-4, using $k=4$ actions from the policy for computing the target values for computing the TD error, and using $k=4$ actions to compute the log-sum-exp in CQL. {For the value of $\tau$, we swept over $\tau \in [0.1, 1.0, 10.0, 100.0]$, and used a $\tau = 1.0$ for our experiments.}

\section{Additional Ablation Studies}
\label{app:additional_results}

In this section, we present some results of an ablation study of the performance of CQL (\methodname) with respect to the temperature hyperparameter $\tau$ that appears in Equation~\ref{eqn:training_mu}. Before discussing the results, let us intuitively aim to understand the significance of this hyperparameter. When $\tau$ is extremely small we would expect $\rho_\psi$ to be a distribution centered at the worst possible action, within the support of the behavior policy. When $\tau$ is large, we would expect the learned $\rho_\psi$ to be close to the behavior policy, since the exponentiated advantage term would essentially behave as a constraint against a uniform distribution. Neither of these extremes are desirable, while the former does not behave much differently than a distributional constraint (except that the Q-value at the action with the smallest Q-value in the dataset support is not pushed up anymore), the latter also behaves like a distributional constraint, but with just half the effective multiplier $\alpha$ on the CQL regularizer. We would therefore expect an intermediate $\tau$ to perform the best.

To verify these insights, we study the sensitivity of the performance of CQL (\methodname) with respect to $\alpha$ on the Atari datasets. Our results shown in Figure~\ref{fig:atari_sensitivity_alpha} confirm that indeed an intermediate value of $\tau = 5.0$ out of the tested values, $\tau \in [2.0, 5.0, 7.0]$ works the best.

\begin{figure*}[ht]
\centering
\includegraphics[width=0.8\textwidth]{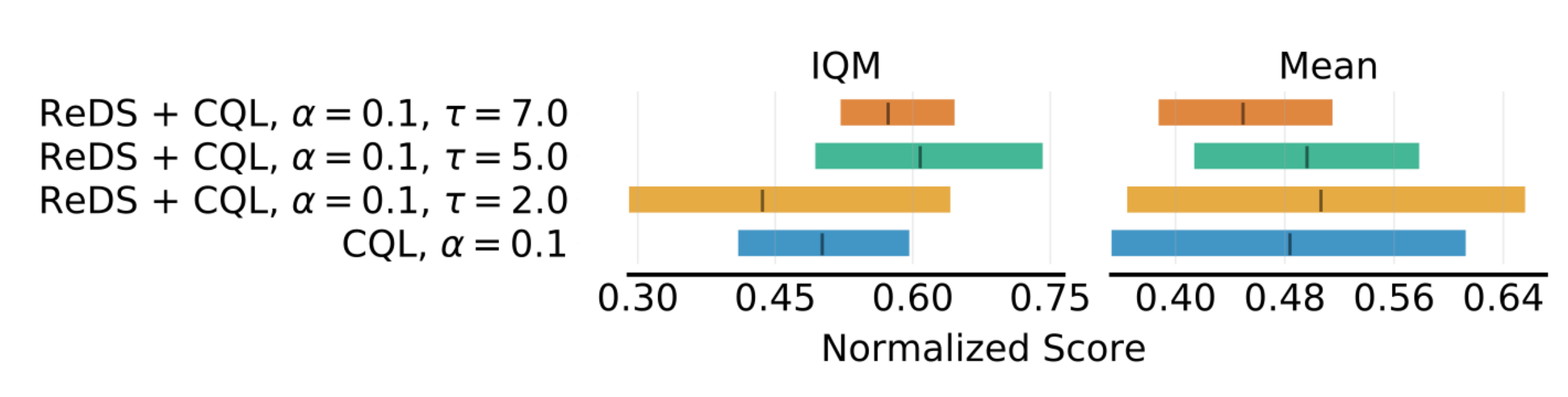}
\caption{\label{fig:atari_sensitivity} \textbf{Sensitivity of CQL (\methodname) to the temperature hyperparameter $\tau$ in Equation~\ref{eqn:training_mu} evaluated on the Atari game experiments with 5 policies.} Observe that an intermediate value of temperature $\tau=5.0$ works best} 
\end{figure*}

In addition, we study the sensitivity of the performance of CQL (\methodname) with respect to $\alpha$ on the Atari datasets. We report the performance for two different values of $\alpha \in \{0.1, 0.2\}$ from CQL (Equation~\ref{eqn:cql_training}) in Figure~\ref{fig:atari_sensitivity_alpha}. Observe that CQL (\methodname) with a given $\alpha$ outperforms base CQL for the corresponding $\alpha$. Additionally note that the degradation in performance of CQL (\methodname) as $\alpha$ increases is lesser than base CQL.

\begin{figure*}[ht]
\centering
\includegraphics[width=0.8\textwidth]{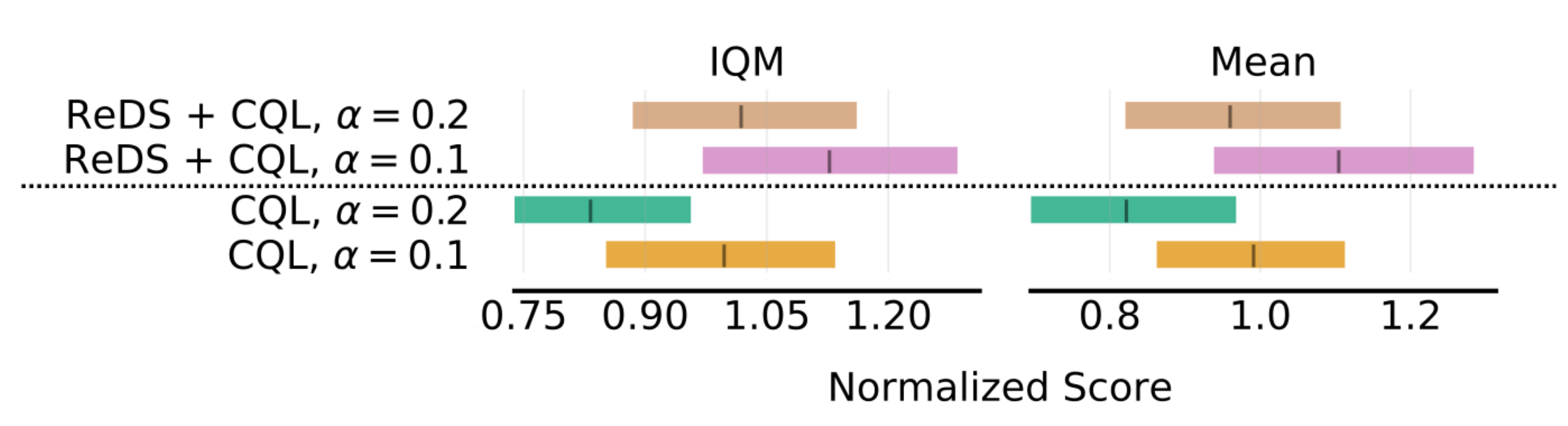}
\includegraphics[width=0.8\textwidth]{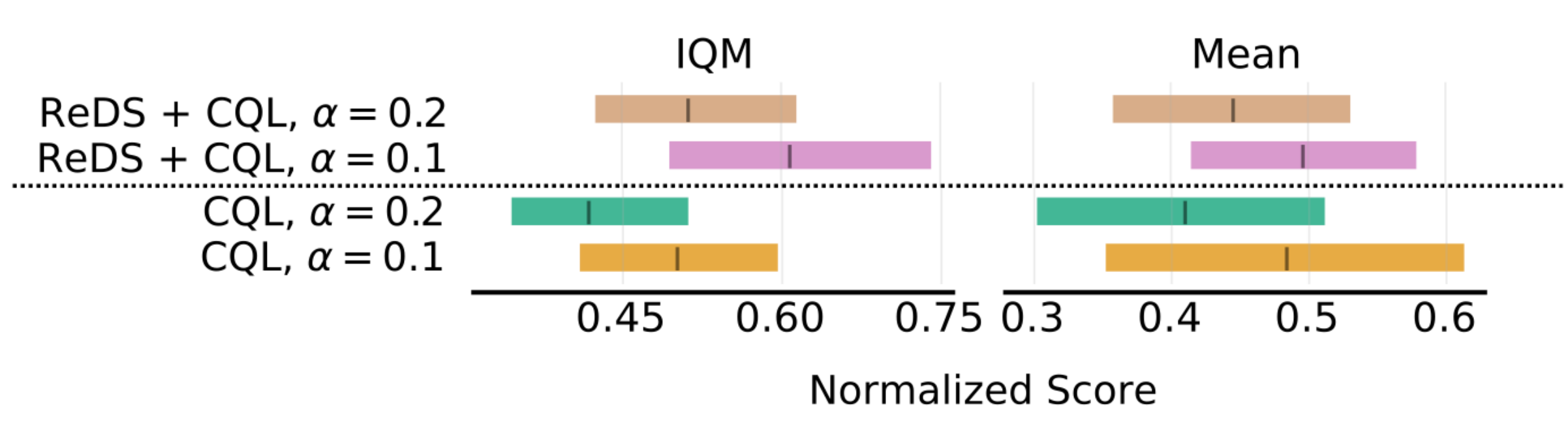}
\caption{\label{fig:atari_sensitivity_alpha} \textbf{Sensitivity of ReDS + CQL to the temperature hyperparameter $\alpha$ in Equation~\ref{eqn:cql_training}} We report the performance of CQL (ReDS) vs CQL on  the IQM normalized score and the mean normalized score over ten Atari games, for the case of \textbf{two} (\textcolor{red}{top}) and \textbf{five} (\textcolor{red}{bottom}) policies. We consider this performance for two different values of $\alpha \in \{0.1, 0.2\}$ in CQL (Equation~\ref{eqn:cql_training}). Observe that CQL (ReDS) with a given $\alpha$ outperforms base CQL for the corresponding $\alpha$. Additionally note that the degradation in performance of ReDS (CQL) as $\alpha$ increases is lesser than base CQL.} 
\end{figure*}

\section{\rebuttal{Additional Baseline Comparison for Heteroskedastic Antmaze Navigation}}

\rebuttal{In this section we will provide additional baseline comparison for REDS with two additional Offline RL methods: EDAC \citep{2021arXiv211001548A} and BEAR \citep{2019arXiv190600949K}. }

\subsection{Hyperparameters for EDAC}

\rebuttal{As done in \citet{2021arXiv211001548A}, we tune the method over two hyperparameters. The first hyperparameter is the ensemble size $N$ which specifies the number of Q functions. The second parameter we consider is $\eta$, the weight of the ensemble gradient diversity term. Below in table \ref{table:hyper_edac}, we show the values considered for each hyperparameter. There is significant overlap to these parameters with the ones used in the Mujoco Gym and Adroit Domains that the authors used. We utilized the publicly available code (\url{https://github.com/snu-mllab/EDAC}) released by the authors of EDAC and were able to replicate the results they reported for the D4RL MuJoCo Gym environments in \citet{2021arXiv211001548A}.}

\begin{table}[htbp]
\centering
\caption{EDAC Hyperparameters}
\label{table:hyper_edac} 
\catcode`,=\active
\def,{\char`,\allowbreak}
\renewcommand\arraystretch{1.2}
\begin{tabular}{p{3.5cm} p{3.5cm}}
  \toprule
    \textbf{Hyperparameters} & \textbf{Values} \\         
  \midrule
    $N$ & 10, 20, 50, 100 \\
    $\eta$ & 0, 1, 5, 10, 50, 100, 1000 \\
  \bottomrule
\end{tabular} 
\end{table}

\subsection{Hyperparameters for BEAR}

\rebuttal{As done in \citet{2019arXiv190600949K}, we tuned this method over two hyperparameters. The first is the Kernel Type of the MMD between the behavior policy $\pi_\beta$ and the actor $\pi$, and found that Laplacian performed better. The second parameter considered is $\sigma$, which is needed for the Laplacian kernel as defined. Below in table \ref{table:hyper_bear}, we show the values considered for each hyperparameter. There is significant overlap to these parameters with the ones used in the Mujoco Gym and Adroit Domains that the authors used. We utilized the publicly available code (\url{https://github.com/rail-berkeley/d4rl_evaluations}) released by the authors of BEAR and were able to replicate the results they reported for the D4RL MuJoCo Gym environments in \citet{2019arXiv190600949K}.}

\begin{table}[htbp]
\centering
\caption{BEAR Hyperparameters}
\label{table:hyper_bear} 
\catcode`,=\active
\def,{\char`,\allowbreak}
\renewcommand\arraystretch{1.2}
\begin{tabular}{p{3.5cm} p{3.5cm}}
  \toprule
    \textbf{Hyperparameters} & \textbf{Values} \\         
  \midrule
    Kernel Type & Laplacian, Gaussian \\
    $\sigma$ & 1, 10, 20, 50  \\
  \bottomrule
\end{tabular} 
\end{table}

\end{document}